\title{Generalization for Multiclass Classification with Overparameterized Linear Models}
\author{%
  Vignesh Subramanian \\
  Department of Electrical Engineering and Computer Sciences \\
  University of California Berkeley \\
  Berkeley, CA-94720, USA \\
  \texttt{vignesh.subramanian@eecs.berkeley.edu} \\
  \And
  Rahul Arya \\
  Department of Electrical Engineering and Computer Sciences \\
  University of California Berkeley \\
  Berkeley, CA-94720, USA \\
  \texttt{rahularya@berkeley.edu} \\
  \And
  Anant Sahai \\
  Department of Electrical Engineering and Computer Sciences \\
  University of California Berkeley \\
  Berkeley, CA-94720, USA \\
  \texttt{sahai@eecs.berkeley.edu} \\
}
\begin{document}

\maketitle

\newcommand{\survivalbinary}{\ensuremath{\mathsf{SU_b}}}
\newcommand{\Amk}{\ensuremath{\mathbf{A}_{-\truek}}}
\newcommand{\Amkinv}{\ensuremath{\Amk^{-1}}}
\newcommand{\tr}{\ensuremath{\mathsf{tr}}}
\newcommand{\truek}{\ensuremath{\tau}}
\newcommand{\tran}{\ensuremath{^\top}}
\newcommand{\cdel}{\ensuremath{c_\Delta}}
\newcommand{\tdel}{\ensuremath{\Delta}}
\newcommand{\pdel}{\ensuremath{p_\Delta}}
\newcommand{\EE}{\ensuremath{\mathbb{E}}}
\newcommand{\PR}{\ensuremath{\mathbb{P}}}
\newcommand{\ztk}{\ensuremath{\mathbf{z}_\truek}}
\newcommand{\ytk}{\ensuremath{\mathbf{y}_\truek}}
\newcommand{\lambdatd}{\ensuremath{\tilde{\lambda}}}
\newcommand{\ztki}{\ensuremath{z_{\truek,i}}}
\newcommand{\yi}{\ensuremath{y_i}}
\newcommand{\Mbold}{\ensuremath{\mathbf{M}}}
\newcommand{\z}{\ensuremath{\mathbf{z}}}
\newcommand{\sutk}{\ensuremath{\mathsf{SU}(\truek)}}
\newcommand{\lambdatk}{\ensuremath{\lambda_{\truek}}}
\newcommand{\cn}{\ensuremath{\mathsf{CN}}}
\newcommand{\Cbold}{\ensuremath{\mathbf{C}}}
\newcommand{\Sigmabold}{\ensuremath{\boldsymbol{\Sigma}}}
\newcommand{\lambdabold}{\ensuremath{\boldsymbol{\lambda}}}
\newcommand{\Abold}{\ensuremath{\mathbf{A}}}
\newcommand{\zj}{\ensuremath{\z_j}}
\renewcommand{\a}{\ensuremath{\alpha}}
\renewcommand{\b}{\ensuremath{\beta}}
\newcommand{\za}{\ensuremath{\mathbf{z}_\a}}
\newcommand{\zai}{\ensuremath{z_{\a, i}}}
\newcommand{\zb}{\ensuremath{\mathbf{z}_\b}}
\newcommand{\ya}{\ensuremath{\mathbf{y}_\a}}
\newcommand{\yaohi}{\ensuremath{y^{oh}_{\a,i}}}
\newcommand{\yaoh}{\ensuremath{\mathbf{y}^{oh}_{\a}}}
\newcommand{\yb}{\ensuremath{\mathbf{y}_\b}}
\newcommand{\ybohi}{\ensuremath{y^{oh}_{\b,i}}}
\newcommand{\yboh}{\ensuremath{\mathbf{y}^{oh}_{\b}}}
\newcommand{\dely}{\ensuremath{\boldsymbol{\Delta}y}}
\newcommand{\mubar}{\ensuremath{\bar{\mu}}}
\newcommand{\delmu}{\ensuremath{\Delta_{\mu}}}
\newcommand{\indicator}{\ensuremath{\mathbb{I}}}
\newcommand{\nc}{\ensuremath{k}}
\newcommand{\Ainv}{\ensuremath{\mathbf{A}^{-1}}}
\newcommand{\Ainvmu}{\ensuremath{\bar{\mathbf{A}}_{inv}}}
\newcommand{\Ainvdel}{\ensuremath{\boldsymbol{\Delta}{A}_{inv}}}
\newcommand{\fhat}{\widehat{f}}
\newcommand{\ghat}{\widehat{g}}
\newcommand{\hhat}{\widehat{h}}
\newcommand{\mhat}{\widehat{m}}
\newcommand{\maxevent}{\mathcal{E}}
\newcommand{\nonmaxevent}{\mathcal{E}^c}
\newcommand{\todo}[1]{\textcolor{red}{#1}}
\newcommand{\rahul}[1]{{\textcolor[rgb]{1,0,1}{\textsc{[Rahul: {#1}]}}}}
\newcommand{\prof}[1]{{\textcolor[rgb]{0,0,1}{\textsc{[Prof. Sahai: {#1}]}}}}
\newcommand{\vignesh}[1]{{\textcolor[rgb]{0,0.5,1}{\textsc{[Vignesh: {#1}]}}}}
\newcommand{\xtest}{\mathbf{x}_{test}}
\newcommand{\xwtest}{\mathbf{x}^w_{test}}
\newcommand{\Eerr}{\mathcal{E}_{err}}
\newcommand{\Mtypical}{\mathcal{M}}
\newcommand{\Mii}{M_{ii}}
\newcommand{\Mtii}{\Mtypical_{ii}}
\newcommand{\delyi}{\Delta y[i]}
\newcommand{\zji}{z_j[i]}
\newcommand{\Ainvbold}{\mathbf{A}^{-1}}
\newcommand{\Xw}{\mathbf{X}^w}
\newcommand{\In}{\mathbf{I}_n}

\newcommand{\app}[1]{Appendix \ref{app:#1}}
\newcommand{\apps}[1]{#1}

\begin{abstract}
Via an overparameterized linear model with Gaussian features, we provide conditions for good generalization for multiclass classification of minimum-norm interpolating solutions in an asymptotic setting where both the number of underlying features and the number of classes scale with the number of training points. The survival/contamination analysis framework for understanding the behavior of overparameterized learning problems is adapted to this setting, revealing that multiclass classification qualitatively behaves like binary classification in that, as long as there are not too many classes (made precise in the paper), it is possible to generalize well even in some settings where the corresponding regression tasks would not generalize. Besides various technical challenges, it turns out that the key difference from the binary classification setting is that there are relatively fewer positive training examples of each class in the multiclass setting as the number of classes increases, making the multiclass problem ``harder'' than the binary one.  


\end{abstract}


\section{Introduction}
Multiclass classification on standardized datasets is where the current deep-learning revolution really made the community take notice with previously unattainable levels of performance. 
Contemporary systems have demonstrated tremendous success at these tasks, typically using gigantic models with parameters that vastly exceed the (also large) number of data points used to train these models. In defiance of traditional statistical wisdom regarding overfitting, these big models can be trained to achieve zero training error even with noisy labels, but still generalize well in practice \citep{reg:zhang2016understanding, reg:geiger2019jamming}. 

To better understand this empirical phenomenon, one line of work uses appropriate high-dimensional linear models for regression problems to show how benign fitting of noise in training data is possible \citep{reg:hastie2019surprises, reg:mei2019generalization, reg:bartlett2020benign, reg:belkin2020two, reg:muthukumar2020harmless}. Essentially, the model must have enough "non-preferred" degrees of freedom to be able to absorb the training noise without contaminating predictions by too much. Simultaneously, there has to be enough of a preference for degrees of freedom that can capture the true pattern to enable it to survive the learning procedure and be well represented in the final learned model. 

A subsequent line of work studies binary classification \citep{binary:Muth20, binary_classification:chatterji2021finite, binary_classification:wang2021benign} and shows that binary classification can generalize well beyond what can be proved by classical margin-based bounds \citep{margin:bartlett2002rademacher} and there exist regimes where binary classification can even succeed in generalizing where regression fails --- less preference is required for the degrees of freedom that capture the true pattern \citep{binary:Muth20}. 
Very recently, the generalization of multiclass classification in similar models was studied in \citet{multi_class_theory:Wang21} but the analysis was limited to a fixed finite number of classes. 
In practice, we see that larger datasets often come with more classes and are tackled with even bigger models and so it is important to see what happens to generalization when everything scales together. 
To have a crisply understandable approach that allows everything to scale, this paper also adopts the bi-level covariance model with Gaussian features that is used in \citet{reg:muthukumar2020harmless, binary:Muth20, reg:wang2021tight, multi_class_theory:Wang21}.

To understand classification, we must understand the role of training loss functions in determining what is learned. 
Empirical evidence shows that least-squares can yield classification performance competitive to cross-entropy minimization \citep{algo_comparison:Rifkin04, nets_loss_function:Hui20, nets_loss_function:Bos20}. 
\citet{binary:Muth20, binary_classification:hsu2021proliferation} show that indeed with sufficient overparameterization, the support vector machine (SVM) solution, which also arises from minimizing the logistic loss using gradient descent \citep{implicit_bias:srebro, implicit_bias:ji2019}, is identical to that obtained by the minimum-norm interpolation (MNI) of binary labels --- what would be obtained by gradient descent while minimizing the squared loss. 
A similar equivalence\footnote{For an interesting alternative perspective on this equivalence as an indication of a potential bug instead of as a promising feature, see \citet{binary_classification:shamir2022implicit}.} holds for different variations of multiclass SVMs and the MNI of one-hot-encoded labels \citep{multi_class_theory:Wang21}. 
Consequently, this paper focuses on the MNI approach to overparameterized learning for multiclass classification.

\section{Our contributions}

Our study provides an asymptotic analysis of the error of the minimum-norm interpolating classifier for the multiclass classification problem with weighted Gaussian features. We consider an overparameterized setting using a bi-level feature weighting model where the number of features, classes, favored features, and the feature weights themselves all scale with the number of training points. Under this model, Theorem~\ref{theorem:regimes} provides sufficient conditions for good generalization in the form of a region in which as the number of training points increase, the number of classes grows slowly enough, the total number of features (i.e.~level of overparameterization) grows fast enough, the number of favored features grows slowly enough, and the amount of favoring of those favored features is sufficient to allow for asymptotic generalization. We assume that our labels are generated noiselessly based on which of the first $k$ features is the largest.\footnote{This assumption is without loss of generality for the bi-level model as long as the classes are defined by orthogonal directions as in \citet{multi_class_theory:Wang21}.} 

To prove our main result, Theorem~\ref{theorem:regimes}, we present a novel typicality-style argument featuring the feature 
margin (gap between the largest and second-largest feature) for computing sufficient conditions for correct classification utilizing the signal-processing inspired concepts of survival and contamination from \citet{reg:muthukumar2020harmless, binary:Muth20} and leveraging the random-matrix analysis tools sharpened in \citet{reg:bartlett2020benign}.
The key is analyzing what happens with multiclass training data where there are relatively fewer positive examples of each class, and where the training data for a particular class is not independent of the features corresponding to other classes. 
The analysis shows that as a result of having fewer positive exemplars for a class relative to the total size of the training data, the survival drops by a factor of $k$ (the number of classes), while the contamination only drops by a factor of $\sqrt{k}$. 
As in binary classification, the ratio of the relevant survival to contamination terms plays the role of the effective signal-to-noise ratio and shows up as a key quantity in our error analysis (Equation~\eqref{eqn:correctclassificationscenario} from Section~\ref{sec:proof_sketch}). 
When this ratio grows asymptotically to $\infty$, multiclass classification generalizes well. To the best of our knowledge, this is the first work that quantifies this effect of fewer informative samples per class and in what sense that makes multiclass classification harder than binary classification. The closest related work (\citep{multi_class_theory:Wang21}) only considers multiclass classification in the fixed finite class setting and consequently, doesn't compute exact dependencies on the number of classes $k$. 
We provide a more detailed comparison of our work with \citet{multi_class_theory:Wang21} and \citet{binary:Muth20} in \app{comparisonToWang}.


\section{Related Work}

The present work is situated within a larger stream of theoretical research trying to understand why overparameterized learning works and its limits. The limited page budget here forces brevity, but we recommend the recent surveys \citet{survey:bartlett2021deep, survey:belkin2021fit, survey:dar2021farewell} for further context.

Classically, by either operating in the underparameterized regime or by performing explicit regularization, we can force the training procedure to average out the harmful effects of training noise and thereby hope to obtain good generalization. 
The present cycle of seeking a deeper understanding began after it was observed that modern deep networks were overparameterized, capable of memorizing noise, and yet still generalized well, even when they were trained without explicit regularization \citep{reg:neyshabur2014search, reg:zhang2016understanding}. 
Experiments in \citet{reg:geiger2019jamming, reg:belkin2019reconciling} observed a double-descent behavior of the generalization error where in addition to the traditional U-shaped curve in the underparameterized regime, the error decreases in the overparameterized regime as we increase the number of model parameters. 
This double descent phenomenon is not unique to deep learning models and was replicated for kernel learning \citep{reg:belkin2018understand}. Further, the good generalization performance in the overparameterized regime cannot be explained by traditional worst-case generalization bounds based on Rademacher complexity or VC-dimension since the models have the capacity to fit purely random labels. 
Overparameterized models must therefore have some fortuitous combination of the model architecture with the training algorithm that leads us to a particular solution that generalizes well.  

To understand the phenomenon better, several works study the simpler setting of overparameterized linear regression.
The minimum-$\ell_2$ norm\footnote{The  minimum-$\ell_1$ norm interpolator has also been studied in \citet{reg:muthukumar2020harmless, reg:mitra2019understanding, reg:li2021minimum, reg:wang2021tight} and while sparsity-seeking behavior helps preserve the true signal (if the true pattern indeed depends only on a few features), it poses a challenge for the harmless absorption of noise since the desired averaging behaviour is not achieved fully \citep{reg:muthukumar2020harmless}.} interpolator is of particular interest since gradient descent on the squared loss has an implicit\footnote{In fact, there is an important complementary literature that brings out the implicit regularization performed by training methods, especially variants of gradient descent and stochastic gradient descent, and how the underlying architecture of the model shapes this implicit regularization \citep{implicit_bias:gunasekar2018characterizing, implicit_bias:srebro,implicit_bias:ji2019,  implicit_bias:woodworth2019kernel, implicit_bias:nacson2019convergence, implicit_bias:azizan2020study, implicit_bias:wu2020direction}.} bias towards this solution in the overparameterized regime \citep{implicit_bias:engl1996regularization} and has been studied extensively. (An incomplete list is \citet{reg:hastie2019surprises, reg:mei2019generalization,reg:bartlett2020benign, reg:belkin2020two, reg:muthukumar2020harmless, reg:bibas, reg:kobak2020optimal, reg:wu2020optimal, reg:richards2021asymptotics}.)
To generalize well, the underlying feature family must satisfy a balance between having a few important directions that sufficiently
favor the true pattern, and a large number of unimportant directions that can
absorb the noise in a harmless manner.

\subsection{High dimensional binary classification}

Both concurrently with and subsequent to the wave of analyses on overparameterized regression, researchers turned their attention to binary classification. 
A line of work poses the overparameterized binary classification problem as an optimization problem and analyzes it directly to obtain precise asymptotic behaviours of the generalization error  \citep{classification_cgmt:deng21, classification_cgmt:salehi2019impact, classification_cgmt:kammoun2021precise, classification_cgmt:taheri2020sharp, classification_cgmt:montanari2019generalization, classification_cgmt:kini2020analytic, classification_cgmt:taheri2021fundamental}.
 The key technical tool employed in these works is the Convex Gaussian Min-max Theorem and the resultant error formulas involve solutions to a system of non-linear equations that typically do not admit closed-form expressions. The generalization error of the max-margin SVM has also been analyzed directly by studying the iterates of gradient descent in  \citep{binary_classification:chatterji2021finite} and leveraging the implicit regularization perspective
of optimization algorithms.

However, although the above works did significantly enhance our understanding of binary classification in the overparameterized regime, a fundamental question was not answered:  ``Is classification easier than regression?" 
While the classification task is easier than the regression task at test time (regression requires us to correctly predict a real value while binary classification requires us to only predict its sign correctly), the training data for classification is less informative than that for regression since the labels are also binary. 
As described earlier, this question was answered in  \citet{binary:Muth20}, by exhibiting an asymptotic regime where binary classification error goes to zero, but the regression error does not. This was shown using Gaussian features with a bi-level covariance model. It turns out that the level of anisotropy (favoring of true features) required to perform regression correctly is significantly higher than that required for binary classification.

The key to the result in \citet{binary:Muth20} was the signal-processing inspired survival/contamination framework introduced in \citet{reg:muthukumar2020harmless} as a reconceptualization of the ``effective ranks'' perspective of \citet{reg:bartlett2020benign}. 
The survival concept relates to the shrinkage induced by the regularizing effect of having lots of features in the context of min-norm interpolation --- survival captures what is left of the true pattern after shrinkage. Contamination reflects the consequence of overparameterization when training via optimization: in addition to the true pattern, there is an infinite family of other\footnote{This is related to what is called the challenge of ``underspecification'' in ML \citep{d2020underspecification}, and this in turn is one aspect of the challenge of covariate shifts \citep{covariate_shift}.} false patterns (aliases) that also happen to explain the limited training data, and the optimizer ends up hedging its bet across the true pattern and these other competing false explanations. 
The learned false patterns contaminate the predictions on test points, and this can be quantified by the relevant standard deviation. 
For binary classification to succeed, what matters is that the survival exceed the contamination so that the sign of the prediction remains correct. 
Meanwhile, regression is harder since for regression to succeed, the survival must also tend to $1$. 




\subsection{Multiclass classification and the role of training loss function}

There is a large classical body of work on multiclass classification algorithms  \citep{algo_multiclass:Weston98, algo_multiclass:Bred99, algo_multiclass:Dietterich95, algo_multiclass:Crammer01, algo_multiclass:Lee04}, with further works giving computationally efficient algorithms for extreme multiclass problems with a huge
number of classes \citep{extreme:choromanska2013extreme, extreme:yen2016pd, extreme:rawat2019sampled}.  
Numerous theoretical works investigate the consistency of classifiers \citep{consistency:Zhang2004StatisticalAO, consistency:Pires16, consistency:Pires2013CostsensitiveMC, consistency:Tewari05, consistency:Chen06}. 
Finite-sample analysis of the generalization error in multiclass classification problems in the underparameterized regime has been studied in \citet{multi_class_theory:Kolt02, multi_class_theory:Gue02, algo_comparison:Allwein00, multi_class_theory:Li18, multi_class_theory:Cortes16, multi_class_theory:Lei15, multi_class_theory:Mau16, multi_class_theory:Lei19, multi_class_theory:Kuz14, multi_class_theory:Kuznetsov2015RademacherCM} and includes both data dependent bounds using Rademacher complexity, Gaussian complexity and covering numbers as well as data-independent bounds using the VC dimension. Recent work \citep{multi_class_theory:Thram20} leverages the Convex Gaussian Min-max Theorem to precisely characterize the asymptotic behaviour of the least-squares classifier in underparameterized multiclass classification.  

So, how different is multiclass classification from binary classification? 
The test time task is more difficult and for the same total number of training points, we have fewer positive training examples from each class. 
Several empirical studies comparing the performances of multiclass classification via learning multiple binary classifiers have been undertaken \citep{algo_comparison:Rifkin04, algo_comparison:Johannes02, algo_comparison:Allwein00}. The effects of the loss function while using deep nets to perform classification has also been investigated \citep{nets_loss_function:Hou16, nets_loss_function:Kry17, nets_loss_function:Kum18, nets_loss_function:Bos20, nets_loss_function:Dem20, nets_loss_function:Kline05, nets_loss_function:Hui20, label_imbalance}. Empirical evidence of least-squares minimization yielding competitive test classification performance to cross-entropy minimization has been presented in \citet{algo_comparison:Rifkin04, nets_loss_function:Hui20, nets_loss_function:Bos20}.

More recently, \citet{multi_class_theory:Wang21} makes progress towards bridging the gap between empirical observations and theoretical understanding by proving that in certain overparameterized regimes the solution to a multiclass SVM problem is identical to the one obtained by minimum-norm interpolation of one-hot encoded labels (equivalently, that gradient descent on squared loss leads to the same solution as gradient descent on cross-entropy loss as a result of implicit bias of these algorithms \citep{implicit_bias:engl1996regularization, implicit_bias:ji2019, implicit_bias:srebro}). In addition, \citet{multi_class_theory:Wang21} extends the analysis presented in \citet{binary:Muth20} for the binary classification problem to the multiclass problem with finitely many classes via an interesting reduction to analyzing a finite set of pairwise competitions, all of which must be won for multiclass classification to succeed. (We give further comments on the relationship of the present paper with \citet{multi_class_theory:Wang21} in \app{comparisonToWang}.)





\section{Problem setup}
\label{sec:setup}
We consider the multiclass classification problem with $k$ classes. The training data consists of $n$ pairs $\{\vec{x}_i,\ell_i\}_{i=1}^n$ where $x_i \in \mathbb{R}^d$ are i.i.d Gaussian vectors drawn from distribution,
\begin{align}
    \vec{x}_i \sim \mathcal{N}(0, I_d). \label{eq:unweightedxs}
\end{align}
We make the following assumption on how the labels $\ell_i \in [k]$ are generated. 

\begin{assumption} \textbf{1-sparse noiseless model}\footnote{A more generic model is $\ell_i = \argmax_{m \in [k]} \boldsymbol{\mu}_m \tran \vec{x}_i$ but we consider the simplified case where $\boldsymbol{\mu}_m$ are unit vectors, and are orthogonal to each other for different classes $m$. For the bi-level ensemble model (Definition~\ref{def:bilevel}) that we consider in this paper, this is exactly equivalent to the 1-sparse model defined here if we make the further assumption that the $\boldsymbol{\mu}_m$ have no support outside of favored features.}\\
\label{assumption:1sparse}
 The class labels $\ell_i$ are generated based on which of the first $k$ dimensions of a point $\vec{x}_i$ has the largest value,
\begin{align}
    \ell_i = \argmax_{m \in [k]} \vec{x}_i[m].
    \label{eq:truelabels}
\end{align}
\end{assumption}
We use the notation $x_i[m]$ to refer to the $m^{th}$ element of vector $\vec{x}_i$. 
For clarity of exposition, we make explicit a feature weighting that transforms the training points as follows:
\begin{align}
    x^w_i[j] =  \sqrt{\lambda_j} x_i[j] \quad \forall j \in [d]. \label{eq:lambdas}
\end{align}
Here $\lambdabold \in \mathbb{R}^d$ contains the squared feature weights. 
The feature weighting serves the role of favoring the true pattern, something that is essential for good generalization.\footnote{Our weighted feature model is equivalent to the one used in other works (e.g. \citep{binary:Muth20}) that assume that the covariates come from an anisotropic Gaussian with a covariance matrix that favors the truly important directions.} 

The weighted feature matrix $\mathbf{X}^w \in \mathbb{R}^{n \times d}$ is given by,
\begin{align}
    \mathbf{X}^w &= \begin{bmatrix} \vec{x}^w_1 & \dots &\vec{x}^w_j & \dots &  \vec{x}^w_n \end{bmatrix}\tran = \mat{ \sqrt{\lambda_1} \vec{z}_1 & \dots & \sqrt{\lambda_j} \vec{z}_j &  \dots & \sqrt{\lambda_d} \vec{z}_d } \label{eq:weightedxmatrix},
\end{align}
where $\vec{z}_j \in \mathbb{R}^n$ contains the $j^{th}$ features from the $n$ training points. Note that $\vec{z}_j \sim \mathcal{N}(0, I_n)$ are i.i.d Gaussians. 
We use a one-hot encoding for representing the labels as the matrix $\vec{Y}^{oh} \in \mathbb{R}^{n \times k}$,
\begin{align}
    \vec{Y}^{oh} &= \begin{bmatrix} \vec{y}^{oh}_1 & \dots &\vec{y}^{oh}_m & \dots & \vec{y}^{oh}_k \end{bmatrix} \label{eq:yohmatrix},
\end{align}
where,\begin{align}
    y^{oh}_m[i] &= \begin{cases} 1, &  \text{if} \ \ell_i = m \\ 0, & \text{otherwise} \end{cases} \label{eq:yoh}. 
\end{align}
A zero-mean variant of the encoding where we subtract the mean $\frac{1}{k}$ from each entry is denoted:
\begin{align}
    \vec{y}_m = \vec{y}_m^{oh} - \frac{1}{k} \mathbf{1} \label{eq:zeromeany}.
\end{align}

Our classifier consists of $k$ coefficient vectors $\hat{\vec{f}}_m$ for $m \in [k]$ that are learned by minimum-norm interpolation of the zero-mean one-hot variants using the weighted features.\footnote{The classifier learned via this method is equivalent to those obtained by other natural training methods under sufficient overparameterization \citep{multi_class_theory:Wang21}.}
\begin{align}
\hat{\vec{f}}_m &= \arg \min_\vec{f} \| \vec{f} \|_2\\
\text{s.t.}\ & \ \mathbf{X}^w \vec{f} = \vec{y}^{oh}_m - \frac{1}{k}\mathbf{1}. \label{eq:interpolateadjustedonehot}
\end{align}
We can express these coefficients in closed form as,
\begin{align} \label{eqn:featurecoefficients}
        \hat{\vec{f}}_m =  (\mathbf{X}^w)\tran \left(\mathbf{X}^w (\mathbf{X}^w)\tran \right)^{-1} \vec{y}_m.
\end{align}

On a test point $\xtest \sim \mathcal{N}(0,I_d)$ we predict a label as follows: First, we transform the test point into the weighted feature space to obtain $\xwtest$ where $x^w_{test}[j] = \sqrt{\lambda_j} x_{test}[j]$ for $j \in [d]$. Then we compute $k$ scalar ``scores'' and assign the class based on the largest score as follows:
\begin{align}
    \hat{\ell} = \argmax_{1 \leq m \leq k } \hat{\vec{f}}_m\tran \xwtest.
\end{align}
The true label of the test point is $\ell_{test} = \argmax_{1 \leq m \leq k }  x_{test}[m].$ A misclassification event $\mathcal{E}_{err}$ occurs iff
\eqn{
    \argmax_{1 \le m \le k} x_{test}[m] \ne \argmax_{1 \le m \le k} \hat{\vec{f}}_m\tran \xwtest.
}
In our work we determine sufficient conditions under which the probability of misclassification (computed over the randomness in both the training data and test point) goes to zero in an asymptotic regime where the number of training points, number of features, number of classes and feature weights scale according to the bi-level ensemble model. 
\begin{definition} (\textbf{Bi-level ensemble}):
\label{def:bilevel}
The bi-level ensemble is parameterized by $p,q,r$ and $t$ where $p > 1$, $0 \leq r < 1$, $0 < q < (p - r)$ and $0 \leq  t < r$. Here, parameter $p$ controls the extent of overparameterization, $r$ determines the number of favored features, $q$ controls the weights on favored features 
and $t$ controls the number of classes. The number of features ($d$), number of favored features ($s$), number of classes ($k$) and feature weights ($\sqrt{\lambda_j}$) all scale with the number of training points ($n$) as follows:
\begin{align}
    d = \lfloor n^p \rfloor, s = \lfloor n^r \rfloor, a = n^{-q}, k = c_k \lfloor n^{t} \rfloor, \label{eq:bilevelparamscaling}
\end{align}
where $c_k$ is a positive integer.
The feature weights are given by,
\begin{align}
\sqrt{\lambda_j} = \begin{cases} \sqrt{\frac{ad}{s}}, & 1 \leq j \leq s \\
                        \sqrt{\frac{(1-a)d}{d-s}}, & \mathrm{otherwise}\end{cases} . \label{eq:bilevellambdascaling}
\end{align}
\end{definition}
We provide a visualization of the bi-level model in Figure~\ref{fig:bilevelmodel}.

\begin{figure*}[ht!]
  \centering
  \includegraphics[width=0.6\columnwidth]{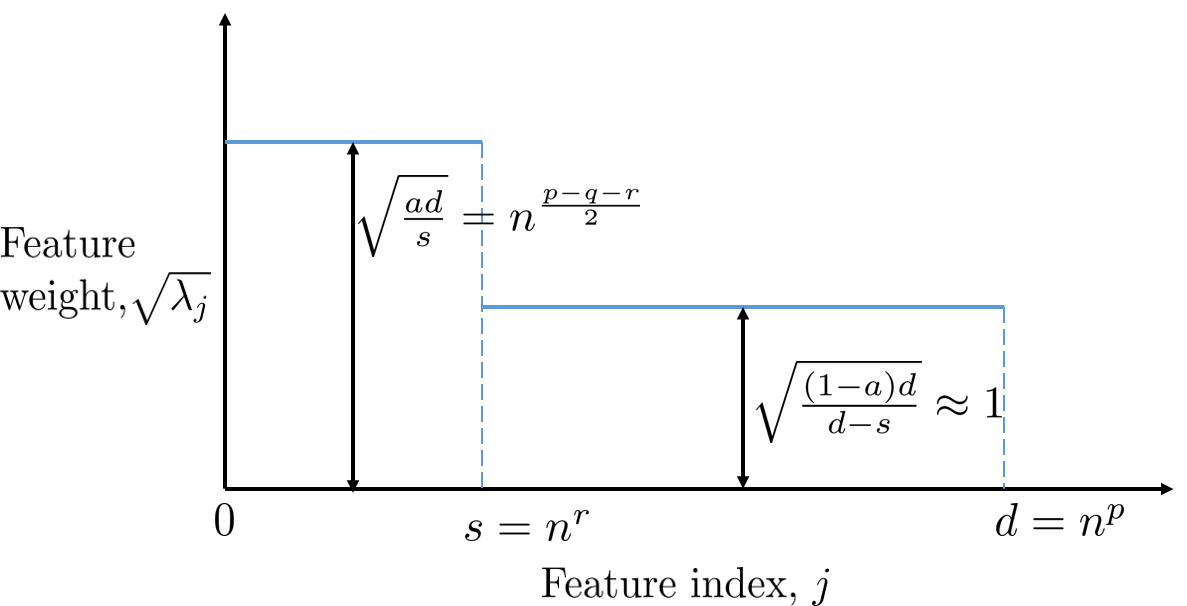}
  \caption{Bi-level feature weighting model. The first $s$ features have a higher weight and are favored during minimum-norm interpolation. These can be thought of as the square-roots of the eigenvalues of the feature covariance matrix in a Gaussian model for the covariates as in \citet{reg:bartlett2020benign}. 
  }
  \label{fig:bilevelmodel}
\end{figure*}

\section{Main result}
\begin{restatable}{theorem}{regimes} (\textbf{Asymptotic classification region in the bi-level model}):
\label{theorem:regimes}
Under the bi-level ensemble model~\ref{def:bilevel}, when the true data generating process is 1-sparse (Assumption~\ref{assumption:1sparse}), the probability of misclassification  $P(\mathcal{E}_{err}) \rightarrow 0$ as $n\rightarrow \infty$ if the following conditions hold:
\begin{align}
    t &< \min\left( r, 1-r, p+1-2(q+r), p-2, 2q+r-2 \right) \\
    q + r &> 1.
\end{align}

\end{restatable}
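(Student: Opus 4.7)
The plan is to condition on the identity of the true class $\truek$ and on the ``feature margin'' $\tdel := x_{test}[\truek] - \max_{m \neq \truek} x_{test}[m]$ of the test point. By the symmetry of the feature model across classes, together with the closed form $\hat{\vec{f}}_m = (\Xw)\tran \Abold^{-1} \vec{y}_m$ where $\Abold = \Xw (\Xw)\tran$, I would decompose each score as
\begin{align*}
\hat{\vec{f}}_m\tran \xwtest \;=\; \mathsf{SU}\cdot x_{test}[m] \;+\; \mathsf{CN}_m,
\end{align*}
where $\mathsf{SU}$ is a common survival factor capturing how much of the signal in the $m$-th direction is preserved and $\mathsf{CN}_m$ absorbs fluctuations from the noise features, training-data aliases, and cross-talk with the other $k-1$ classes. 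A union bound over the $k-1$ competitor classes then reduces $\Eerr$ to the failure of any one of the pairwise comparisons $\mathsf{SU}\cdot(x_{test}[\truek]-x_{test}[m]) > \mathsf{CN}_m - \mathsf{CN}_\truek$.

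To carry out the decomposition quantitatively, I would expand $\hat{\vec{f}}_m\tran \xwtest = \vec{y}_m\tran \Abold^{-1} \Xw \xwtest$ and apply the Sherman--Morrison identity to peel off the column $\sqrt{\lambda_m}\vec{z}_m$ from $\Abold$, surfacing the coefficient of $x_{test}[m]$ cleanly. The remaining inverse is controlled via the sharpened random-matrix concentration of Bartlett et al.\ applied to the tail matrix $\sum_{j>s}\lambda_j\vec{z}_j\vec{z}_j\tran$, which in the bi-level model concentrates tightly around a multiple of $\In$. The condition $q+r>1$ enters exactly here, guaranteeing that the favored-feature contribution sits above this noise floor so that $\mathsf{SU}$ is nondegenerate. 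The central accounting that separates multiclass from binary is that $\|\vec{y}_m\|_2^2 = \Theta(n/k)$, since only roughly $n/k$ training examples carry the label $m$; this shrinks survival by a factor of $k$ while the various contamination variances aggregate only to a standard deviation of order $1/\sqrt{k}$, yielding the $\sqrt{k}$-shortfall in effective signal-to-noise ratio highlighted in Section~\ref{sec:proof_sketch}.

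The typicality step uses the elementary extreme-value fact that for $k$ i.i.d.\ $\mathcal{N}(0,1)$ variables the gap $\tdel$ between the maximum and second-maximum concentrates at scale $\Theta(1/\sqrt{\log k})$ with constant probability. I would condition on a good event $\{\tdel \geq c/\sqrt{\log k}\}$ and then verify that on this event all $k-1$ pairwise comparisons are won simultaneously with high probability. Matching bi-level exponents turns the requirement ``$\mathsf{SU}\cdot\tdel$ dominates $|\mathsf{CN}_m-\mathsf{CN}_\truek|$ uniformly in $m$'' into the listed inequalities: $t<2q+r-2$ controls the favored-feature contamination, $t<p+1-2(q+r)$ controls the tail-feature contamination, $t<p-2$ controls the pure aliasing term, while $t<r$ and $t<1-r$ keep the favored block well-conditioned so that the Sherman--Morrison peel-off remains valid.

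The main obstacle is the dependence between the one-hot label matrix $\vec{Y}^{oh}$ and the first-$k$ feature columns $\vec{z}_1,\ldots,\vec{z}_k$, since the labels are literally defined as $\argmax$ over these features; unlike the binary noiseless analysis of \citet{binary:Muth20}, the labels cannot be treated as independent of the favored features. My plan is to condition on the entire block $[\vec{z}_1,\ldots,\vec{z}_k]$ throughout, treat the $d-k$ unfavored columns as genuinely independent Gaussians, and propagate the $\argmax$-induced coupling as a data-dependent but norm-bounded perturbation. A secondary hurdle is that the union bound over $k-1 = \Theta(n^t)$ pairwise comparisons must be absorbed cheaply, which requires sub-Gaussian (not merely second-moment) tail bounds on the quadratic forms $\mathsf{CN}_m-\mathsf{CN}_\truek$; fortunately these cost only polylogarithmically in $n$ and are consistent with every one of the listed exponent conditions.
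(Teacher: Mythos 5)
Your high-level strategy lines up closely with the paper's: reduce misclassification to $k-1$ pairwise score comparisons, isolate a survival coefficient on $x_{test}[m]$, normalize the remaining cross terms to a contamination, use Bartlett-style eigenvalue concentration of $\Abold$, exploit $\|\vec{y}_m\|_2^2 = \Theta(n/k)$ to get the $k$ vs.\ $\sqrt{k}$ degradation, and pull out the feature margin $\tdel = \Theta(1/\sqrt{\ln k})$ as a separately conditioned typical event. That is exactly the skeleton of the paper's argument leading to Equation~\eqref{eqn:correctclassificationscenario}. Two steps in your plan, however, have real gaps.

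First, the decomposition $\hat{\vec{f}}_m\tran \xwtest = \mathsf{SU}\cdot x_{test}[m] + \mathsf{CN}_m$ with a \emph{common} survival factor $\mathsf{SU}$ is not valid. The actual coefficient of $x_{test}[m]$ is $c_{m,m} := \lambda_m \vec{z}_m\tran \Ainv \vec{y}_m$, a random quantity that depends on $m$ (through both $\vec{z}_m$ and $\vec{y}_m$). When you form the pairwise comparison $\hat{\vec{f}}_\a\tran \xwtest > \hat{\vec{f}}_\b\tran \xwtest$, the two survival coefficients do not cancel against the same factor of $\tdel$; what is left is a term proportional to $x_{test}[\b]\cdot\bigl(\hhat_{\a,\b}[\a] - \hhat_{\b,\a}[\b]\bigr)$, i.e.\ the fluctuation of the survival between the two classes multiplied by a competing feature of size $\Theta(\sqrt{\ln k})$. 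This is the ``survival variation'' that the paper isolates explicitly and controls with Lemma~\ref{lemma:differencesurvival} and Corollary~\ref{corollary:asymptoticdifferencesurvival}; it must decay polynomially for the margin term $\Theta(1/\sqrt{\ln k})$ to dominate. Your plan leaves this implicit. If you instead define $\mathsf{SU}$ to be deterministic (say an expectation) and fold $(c_{m,m} - \mathsf{SU})x_{test}[m]$ into $\mathsf{CN}_m$, you recover a correct decomposition, but then the survival-fluctuation piece behaves qualitatively differently from the cross-feature contamination and must be bounded by a separate argument.

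Second, your plan to handle the label--feature dependence by ``conditioning on the entire block $[\vec{z}_1,\ldots,\vec{z}_k]$'' conditions on too much. Given $\vec{z}_1,\ldots,\vec{z}_k$, the labels, $\dely = \vec{y}_\a - \vec{y}_\b$, and the quantities $\vec{z}_j\tran \dely$ for favored class indices $j \le k$ all become \emph{deterministic}, so you have no concentration left to exploit there and would have to fall back on worst-case (Cauchy--Schwarz-type) bounds, which are too loose by a $\sqrt{n}$ factor on exactly the terms that determine the $t < 1-r$ and $t < 2q+r-2$ conditions. The paper's conditioning is much coarser: it conditions on the indicator matrix $\Mbold$ (which training points have label in $\{\a,\b\}$), under which the relevant components of $\vec{z}_j$ remain sub-Gaussian random variables with a computable (non-zero) conditional mean, so Hanson--Wright can be applied after recentering (Proposition~\ref{proposition:subgaussianconditioned} and Lemma~\ref{lemma:zdeltaybound}). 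To salvage your plan you would first need to prove a typicality statement about the conditioned block $[\vec{z}_1,\ldots,\vec{z}_k]$ that delivers the $O(\sqrt{n/k}\,\sqrt{\ln(ndk)})$ bound on $|\vec{z}_j\tran\dely|$ — but establishing that typicality is precisely the conditional Hanson--Wright computation, so the conditioning buys nothing over the paper's route.
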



\newpage

\begin{figure*}[h!]
  \centering
  \includegraphics[width=1\columnwidth]{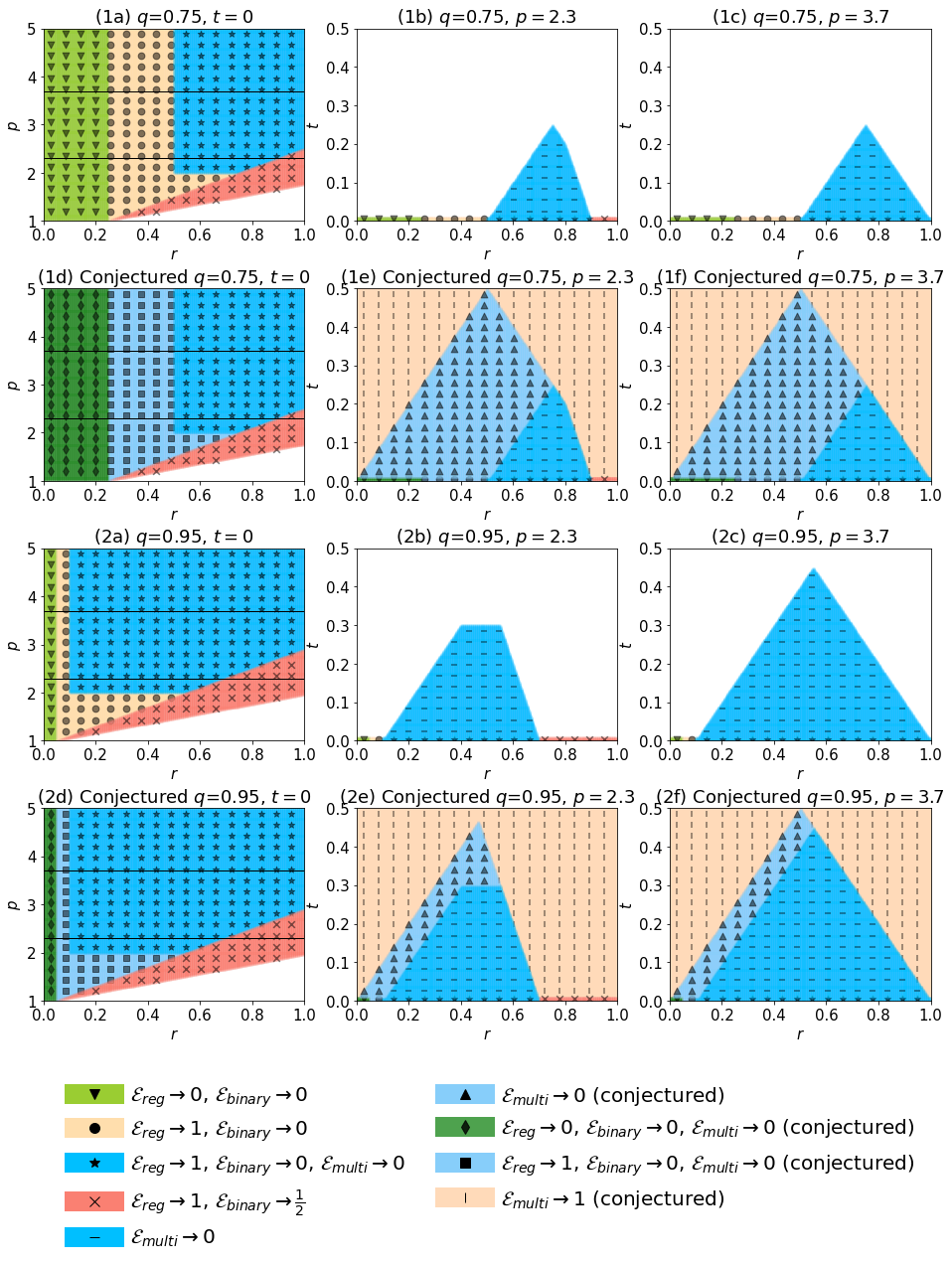}
  \caption{Visualization of the bi-level regimes in four dimensions $p,q,r,t$. (1a) and (2a) contrast multiclass classification with finite classes to binary classification and regression. The horizontal lines $p=2.3$ and $p=3.7$ correspond to the slices visualized in (1b), (1c), (2b) and (2c). The conjectured regimes are visualized in  (1d), (1e), (1f), (2d), (2e) and (2f).}
  \label{fig:bilevel_regimes}
\end{figure*}

\newpage 

Note that from \citet{binary:Muth20}, the condition $q+r>1$ corresponds to the regime where the corresponding regression problem\footnote{The corresponding regression problem is one where the true real number to be predicted is defined by a linear combination of favored features.} does not generalize well and thus our result shows that multiclass classification can generalize in regimes where the regression problem does not. 
Figure~\ref{fig:bilevel_regimes} visualizes the regimes by considering slices of the four dimensional scaling parameter space of $p,q,r$ and $t$. 
(1a) and (2a) fix the value of $q$ to $0.75$ and $0.95$ respectively and contrast the multiclass problem with a fixed finite number of classes ($t=0$) to the binary classification and regression problems. 
From these plots we observe that if we fix $p,q,t$ and increase $r$, i.e.~increasing how many features are favored (and thereby favoring each of them less), we transition from the regime where both regression and binary classification work, into the regime where binary classification works but regression does not, then the regime where this paper can prove multiclass classification works and finally to the regime where neither regression nor binary classification works. 

In Figure~\ref{fig:bilevel_regimes}, subplots (1b),(1c),(2b) and (2c) each visualize a slice along the $r$ and $t$ (class scaling) dimensions with fixed  $p$ and $q$. 
The x axis itself in these plots corresponds to a fixed finite classes setting. 
From (1b) we observe that the right-hand boundary of the region where multiclass classification generalizes well contains two slopes. 
These slopes arise from the two conditions $t < 1-r$ and $t < p+1 -2(q+r)$ in Theorem~\ref{theorem:regimes} and are a result of either contamination from favored (but not true) features dominating or contamination from the unfavored features dominating.
In (1c) we are in the regime where binary classification works for all values of $r<1$. However, as we increase $t$, eventually multiclass classification stops working.\footnote{To be precise, what the region actually illustrates is that our proof approach stops being able to show that multiclass classification works. In the Conclusion section, we conjecture where we believe that multiclass classification actually stops working. The conjectured regions are illustrated in (1e),(1f),(2e) and (2f).}

When we go from the binary problem to a multiclass problem with $k$ classes, the survival drops by a factor of $k$ as a consequence of having only $\frac{1}{k}$ fraction of positive training examples per class. 
This is because the one-hot labels we interpolate while training have fewer large values close to 1 that are able to positively correlate with the true feature vector. 
Having fewer positive exemplars also reduces the total energy in the training vector by a factor of $k$, and because of the square-root relationship of the standard deviation to the energy, the contamination only shrinks by a factor of $\sqrt{k}$. 
The overall survival/contamination ratio decreases by a factor of $\sqrt{k}$ making the multiclass classification task more difficult.\footnote{This is also responsible for contamination due to favored features being able to cause errors. 
For binary classification, because the true feature survival is constant (depending only on the level of label noise), the survival can always asymptotically overcome any contamination from other favored features \citep{binary:Muth20}.} 
An interesting observation here is the amount of favoring required for good generalization is linked to the number of positive training examples per class. Indeed, if we consider a setting where the binary classification problem generalizes well, and we switch to the $k$ class multiclass problem, then by increasing the number of training samples $k$ fold (and thus matching the number of positive training examples per class in the multiclass case to the binary case) and keeping the number of features and feature weights constant we can generalize well for multiclass classification. (\app{scaleexamples} elaborates on this phenomenon, as well as why it is somewhat surprising.) 

Next, we present a brief overview of our proof that utilizes the survival/contamination analysis framework from \citet{binary:Muth20} along with a typicality-inspired argument where the feature margin (difference between largest and second largest feature) on the test point plays a key role. The complete proof is provided in \apps{Appendices \ref{app:proof}, \ref{app:common_results}, \ref{app:utility_bounds}, and \ref{app:misclassification_events}}. 
\subsection{Proof sketch}
\label{sec:proof_sketch}
Assume without loss of generality that for the test point $\xtest \sim \mathcal{N}(0, I_d)$, the true class is $\alpha$ for some $\alpha \in [k]$. Let $\xwtest$ be the weighted version of this test point. 
A necessary and sufficient condition for classification error is that for some $\beta \neq \alpha, \beta \in [k]$,
\begin{align}
    \fhat_\a[\a] x^w_{test}[\a] + \fhat_\a[\b] x^w_{test}[\b] + \sum_{j \notin \{\a, \b \}} \fhat_\a[j] x^w_{test}[j] &< \fhat_\b[\a] x^w_{test}[\a] \nonumber \\
    & \quad + \fhat_\b[\b] x^w_{test}[\b] + \sum_{j \notin \{\a, \b \}} \fhat_\b[j] x^w_{test}[j].
\end{align}
By converting into the unweighted feature space we obtain
\begin{align}
    \lambda_\a \hhat_{\a,\b}[\a] x_{test}[\a] - \lambda_\b \hhat_{\b,\a}[\b] x_{test}[\b] &<  \sum_{j \notin \{\a, \b \}} \lambda_j \hhat_{\b,\a}[j] x_{test}[j],
\end{align}
where
\eqn{
    \hhat_{\a,\b}[j] = \lambda_j^{-1/2}(\hat{f}_\a[j] - \hat{f}_\b[j]).
}
Performing some algebraic manipulations and because $\lambda_\a = \lambda_\b = \lambda$ since both $\a$ and $\b$ are favored features, we can rewrite this as
\begin{align}
    \frac{\lambda \hhat_{\a,\b}[\a]}{ \cn_{\a,\b}}\left((x_{test}[\a] - x_{test}[\b]) + x_{test}[\b]\frac{\hhat_{\a,\b}[\a] - \hhat_{\b,\a}[\b]}{\hhat_{\a,\b}[\a]}\right) \nonumber \\
     < \frac{1}{ \cn_{\a,\b}} &\sum_{j \notin \{\a, \b \}} \lambda_j \hhat_{\b,\a}[j] x_{test}[j], 
\end{align}
where
\eqn{
\cn_{\a,\b} = \sqrt{\left(\sum_{j \notin \{\a, \b\}} \lambda_j^2 (\hhat_{\b,\a}[j])^2 \right)}. \label{eq:cn}
}
We divide by $\cn_{\a,\b}$ to normalize the RHS above to have a standard normal distribution.  
Next, by removing the dependency on $\beta$, we obtain a sufficient condition for correct classification:
\eqn{
    \label{eqn:correctclassificationscenario}
   \underbrace{\frac{\min_\beta \lambda \hhat_{\a,\b}[\a]}{\max_\b \cn_{\a,\b}}}_{\text{SU/CN ratio}} \left( \underbrace{\min_\b \left( x_{test}[\a] - x_{test}[\b]\right)}_{\text{closest feature margin}} - \underbrace{\max_\b |x_{test}[\b]|}_{\text{largest competing feature}} \cdot \underbrace{\max_\b \left|\frac{\hhat_{\a,\b}[\a] - \hhat_{\b,\a}[\b]}{\hhat_{\a,\b}[\a]} \right|}_{\text{survival variation}} \right) \nonumber \\
   > \underbrace{\max_{\beta} \frac{1}{\cn_{\a,\b}}\left(\sum_{j \notin \{\a, \b \}} \lambda_j \hhat_{\b,\a}[j] x_{test}[j]  \right)}_{\text{normalized contamination}}. 
}
Here the min and max are over all competing features: $1 \le \beta \le k, \beta \ne \alpha$ and the sum is over all $d$ feature indices except $\alpha$ and $\beta$, but we simplify the notation for convenience. 
We show via intermediate lemmas introduced in \app{proof} that under the conditions specified in Theorem~\ref{theorem:regimes}, with sufficiently high probability\footnote{This is where we leverage the idea of typicality-style proofs in information theory \citep{Cover2006} to avoid unnecessarily loose union bounds that end up being dominated by the atypical behavior of quantities. In our case, by pulling the feature margin out explicitly, we can just deal with its typical behavior. Similarly, the typical behavior of the largest competing feature and the true feature is all that matters.}, the relevant survival to contamination 
\emph{SU/CN ratio} grows at a polynomial rate $n^{v}$ for some $v>0$, the \emph{closest feature margin} shrinks at a less-than-polynomial rate $1/\sqrt{\ln nk}$, the \emph{survival variation} decays at a polynomial rate $n^{-u}$ for some $u>0$. Further, the magnitudes of the \emph{largest competing feature} and the \emph{normalized contamination} are no more than $\sqrt{\ln(nk)}$. 

This implies that the left-hand side of Equation~\eqref{eqn:correctclassificationscenario} grows at a polynomial rate  $n^v$ (ignoring logarithmic terms)  and dominates the right-hand side which grows at the much slower rate $\sqrt{\ln nk}$. 
A survival/contamination ratio also plays a key role in the analysis of the binary classification problem in \citet{binary:Muth20} but in the multiclass setting, we additionally have the survival variation term and feature margin playing important roles since we are comparing different scores while predicting the class label. For correct classification, the survival/contamination ratio must be sufficiently large, the survival variation must be small enough and the feature margin must be sufficiently large.


\section{Conclusion} \label{sec:conclusion}
In this work we compute sufficient conditions for good generalization of multiclass classification in a bi-level overparameterized linear model with Gaussian features. We observed that multiclass classification can generalize even when the regression problem does not generalize (for $q+r>1$). 
Further, the multiclass problem is ``harder'' than the binary problem because we have fewer positive training examples per class. 
The nature of the training data complicates our analysis in the multiclass setting since the true class labels are generated by comparing $k$ features and thus we no longer have independence of the encoded class label $y$ with any of these features. 
This becomes relevant when we compute bounds on the survival and contamination quantities since the Hanson-Wright inequality \citep{rudelson2013hanson} is no longer applicable directly on the quantities of interest as was the case for the binary classification problem in prior work \citep{binary:Muth20}. 
As a consequence of working around this non-independence we believe that our sufficient conditions for good generalization in the regime $q+r>1$ are loose. 

Even though in our work we focus on the regime where regression does not work, $q+r>1$, we can extend the analysis to the regime where $q+r<1$ by grinding through the expressions for survival and contamination in this regime. Even in this regime, for multiclass training data, survival is of the order $\frac{1}{k}$ while contamination scales similarly to the regime $q+r>1$. Thus, while it is true that for binary classification or a fixed number of classes, the regime where regression works is a regime where classification also works, this need not be true if there are too many classes.

We conjecture that the following is a set of necessary and sufficient conditions for asymptotically good generalization (We elaborate on this in \app{conjectured_loose}):
\begin{conjecture} (\textbf{Conjectured bi-level regions}):
\label{conjecture:regimes}
Under the bi-level ensemble model~\ref{def:bilevel}, when the true data generating process is 1-sparse (Assumption~\ref{assumption:1sparse}), as $n\rightarrow \infty$, the probability of misclassification event $P(\mathcal{E}_{err})$ behaves as follows:
\begin{align}
     P(\mathcal{E}_{err})& \rightarrow \begin{cases} 0,& \ \mathrm{if} \  t < \min\left(r, 1-r, p+1-2(q+r)\right) \\
      1,& \ \mathrm{if} \ t > \min\left(r, 1-r, p+1-2(q+r)\right) \end{cases}.
      \label{eq:conjectured_regimes}
\end{align}
\end{conjecture}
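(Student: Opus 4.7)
The plan is to prove Conjecture~\ref{conjecture:regimes} in two directions, keeping the typicality-style framework of Section~\ref{sec:proof_sketch} throughout. The achievability half ($P(\mathcal{E}_{err}) \to 0$) strengthens Theorem~\ref{theorem:regimes} by removing the two artifacts $t < p-2$ and $t < 2q+r-2$, which force $p > 2$ and $2q+r > 2$ implicitly. The converse half ($P(\mathcal{E}_{err}) \to 1$) is genuinely new, requiring anti-concentration lower bounds on the misclassification event. In both halves I would work with the sufficient condition~\eqref{eqn:correctclassificationscenario} and use the closed form~\eqref{eqn:featurecoefficients} to track the three ingredients: the SU/CN ratio, the survival variation, and the normalized contamination.

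For achievability, I would target the survival variation term $\max_\b |(\hhat_{\a,\b}[\a] - \hhat_{\b,\a}[\b])/\hhat_{\a,\b}[\a]|$, which is the likely source of the spurious conditions. The Theorem's proof presumably bounds the numerator via the triangle inequality using each term separately, but the difference admits substantial cancellation: swapping $\a \leftrightarrow \b$ in both the column of $\Xw$ and in $(\ya - \yb)$ produces a near-antisymmetric expression because the zero-mean one-hot encodings~\eqref{eq:zeromeany} interchange cleanly. I would expand $\hhat_{\a,\b}[\a]$ using $\Xw (\Xw)\tran = \lambda \zj \zj\tran + \Mbold$ with $\Mbold$ the contribution from the remaining features, apply the Sherman--Morrison identity to isolate the $\a$ and $\b$ rank-one updates, and show that the $\Mbold^{-1}$ terms cancel in $\hhat_{\a,\b}[\a] - \hhat_{\b,\a}[\b]$ up to a residual that is smaller by a factor of $\sqrt{k}$ or $1/\sqrt{n^{1-r}}$. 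Combined with the already-established polynomial growth of the SU/CN ratio, this cancellation should yield a variation decaying as $n^{-u}$ for some $u>0$ under only the three conditions $t < r$, $t < 1-r$, $t < p+1-2(q+r)$.

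For the converse, I would argue each of the three boundaries separately by exhibiting a competitor $\b$ whose score dominates. When $t > p+1-2(q+r)$, the normalized contamination in the right-hand side of~\eqref{eqn:correctclassificationscenario} is $\Omega(n^{(2q+r-p-1+t)/2})$ for typical $\xtest$, which exceeds the feature margin $1/\sqrt{\ln nk}$, and by symmetry over $\b \in [k]$ at least one of the $k-1$ competitor scores overshoots with probability bounded away from zero via a Gaussian anti-concentration argument. When $t > 1-r$, I would show that the survival variation stays bounded below (since the per-class training energy is too small to distinguish neighboring favored features), forcing the second term in the parenthesized factor of~\eqref{eqn:correctclassificationscenario} to overwhelm the feature margin. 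When $t > r$, the number of favored but non-true features $s - k \sim n^r$ becomes smaller than or comparable to $k \sim n^t$, and a direct computation of $\hhat_{\a,\b}[\a]$ shows the true-class survival is dominated by the aliasing contributions from the other $k-1$ favored class features.

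The principal obstacle in both halves is the coupling between the labels $\vec{Y}^{oh}$ and the first $k$ columns of the unweighted feature matrix, which breaks the direct applicability of Hanson--Wright as noted in the Conclusion. My plan is to condition on the label vector $(\ell_1,\ldots,\ell_n)$ and work with the tilted Gaussian law of $\vec{x}_i$ restricted to $\{\argmax_{j \le k} x_i[j] = \ell_i\}$; this conditional distribution has explicitly computable mean $O(\sqrt{\ln k})$ on the surviving coordinate and a small downward shift on competitors, while coordinates $j > k$ remain standard Gaussian. I would then apply Hanson--Wright-type concentration to the conditional quadratic forms, paying only a $\mathrm{polylog}(k)$ cost that the typicality slack absorbs. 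Threading this conditioning through the matrix inverse $(\Xw (\Xw)\tran)^{-1}$ in~\eqref{eqn:featurecoefficients} without reintroducing a lossy union bound over the $k^n$ possible label sequences is the delicate step, and is especially demanding in the converse direction where two-sided control is required.
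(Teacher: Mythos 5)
This statement is a \emph{conjecture}; the paper does not prove it. Its only in-paper support is the heuristic in Appendix~\ref{app:conjectured_loose}: the paper identifies a single source of looseness, namely the Cauchy--Schwarz step $|\zj\tran\Ainvdel\dely|\le\delmu\|\zj\|_2\|\dely\|_2 \sim \delmu\,n/\sqrt{k}$ in the proof of Lemma~\ref{lemma:zdeltaybound} (and its analogue in Lemma~\ref{lemma:zdiffybound}), conjectures that this is loose by $\sqrt{n}$, and observes that a tighter $\delmu\sqrt{n/k}$ bound would wipe out the $\max(0,3/2-q-r)$ term in the contamination exponent of Corollary~\ref{corollary:asymptoticcontaminationbound}, which is exactly where $t<p-2$ and $t<2q+r-2$ enter. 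Your plan misdiagnoses where those two conditions come from: you attribute them to the survival-variation term and propose a Sherman--Morrison cancellation argument for $\hhat_{\a,\b}[\a]-\hhat_{\b,\a}[\b]$, but Corollary~\ref{corollary:asymptoticdifferencesurvival} already shows the survival variation only requires $t<1/2$ and $t<2(q+r-1)$, neither of which is one of the offending constraints. The spurious conditions arise from the contamination term $\cn_{\a,\b}$, specifically from the product of the $\delmu$-driven deviation with the favored-feature contribution $n^{p-q-r/2}$ to $\sqrt{\sum_j\lambda_j^2}$. A cancellation in the survival-variation numerator, even if it held, would not remove those two constraints.

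Beyond the misdiagnosis, the genuinely hard step is one you correctly flag but do not resolve: sharpening $|\zj\tran\Ainvdel\dely|$ and $|\za\tran\Ainvdel\dely|$ requires concentration of a quadratic-like form where $\Ainvdel$ is statistically dependent on both $\zj$ (for $j\le k$) and $\dely$, and the straightforward Hanson--Wright route is blocked precisely for that reason — the paper says as much in its Conclusion. Your proposal to condition on the full label vector and work with tilted Gaussian laws is in the same spirit as the paper's Proposition~\ref{proposition:subgaussianconditioned}, but the paper applies that conditioning only to the $\mubar\In$ piece; pushing it through $\Ainvdel=\Ainv-\mubar\In$ — whose eigenvectors depend on the joint realization of all features — is exactly what the paper could not do, and your plan offers no concrete mechanism for it. On the converse, your three-boundary outline is plausible and parallels the paper's own suggestion (drive the SU/CN ratio to zero, Gaussian anti-concentration on the contamination), but it is a sketch with the same unresolved conditioning issue, now with two-sided control required. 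As it stands, this is a research program with a mistaken first step, not a proof.
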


 The conjectured regions are visualized in (1d),(1e),(1f),(2d),(2e) and (2f) in Figure~\ref{fig:bilevel_regimes}. 
 Subfigures (1d) and (2d) illustrate that we believe multiclass classification with finitely many classes works if binary classification works. Further, comparing (1e) to (2e) when we increase $q$, the conjectured parameter region where multiclass classification works shrinks since we decrease the amount of favoring of true features. 
 Interestingly, the nature of the looseness in our approach is such that our proof technique is able to recover a larger fraction of the conjectured region for larger $q$ which intuitively is a result of less favoring leading to stronger concentration of certain random quantities. Tightening the potential looseness in our analysis and proving the converse result by computing sufficient conditions for poor generalization of multiclass classification are interesting avenues of future work.



Recent work from \citet{multi_class_theory:Wang21} provides an analysis of the generalization error of the minimum-norm interpolation of one-hot labels for multiclass classification with Gaussian features. 
Using the bi-level model, the authors present parameter regimes where multiclass classification error goes to zero asymptotically, considering only the fixed finite classes setting ($t = 0$ in our model). 
They also show that the minimum-norm interpolating solution \eqref{eq:interpolateadjustedonehot} is typically identical to the solution obtained via one-vs-all SVM and multi-class SVM (and thus gradient descent on cross-entropy loss due to its implicit bias \citep{implicit_bias:ji2019, implicit_bias:srebro}) under sufficient overparameterization, even in the case when $t > 0$, so the number of classes $k = n^t$ grows with $n$, as long as it does not grow too rapidly. Under our bi-level model (Definition~\ref{def:bilevel}) the relevant condition from \citet{multi_class_theory:Wang21} for when the SVM solution matches minimum-norm interpolation (MNI) can be expressed as:
\begin{align}
    0 < t < \frac{ q + r -1}{2} \label{eqn:mnisvmequalregime}.
\end{align}
More details on the setup of \citet{multi_class_theory:Wang21}, as well as a derivation of \eqref{eqn:mnisvmequalregime} are in \app{comparisonToWang}.

\begin{figure*}[h!]
  \centering
  \includegraphics[width=1\columnwidth]{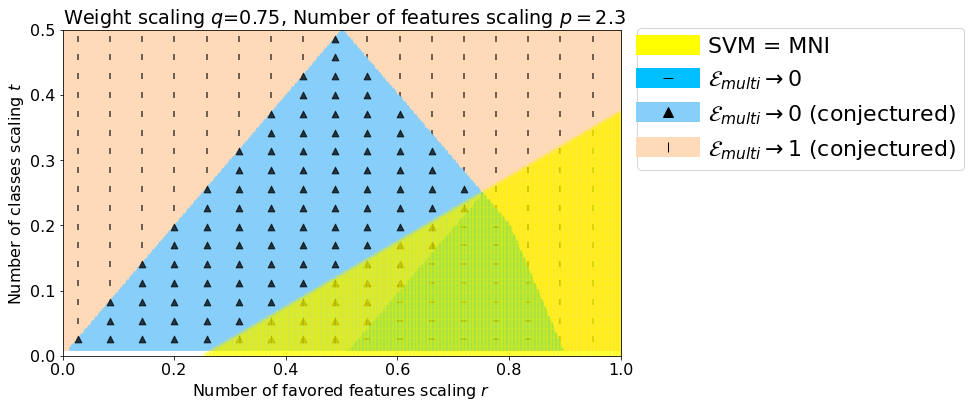}
  \caption{Visualization of regime where SVM solution is identical to MNI solution.}
  \label{fig:svm_equivalence}
\end{figure*}

In Figure~\ref{fig:svm_equivalence} we plot our provable as well as conjectured regimes alongside the regime from \eqref{eqn:mnisvmequalregime}. Notice the overlap. Thus our result is not limited only to the minimum-norm interpolator, but in fact holds for other training methods when the problem is sufficiently overparameterized. 
In this sense, the results in \citet{multi_class_theory:Wang21} and the present paper can be read together to tell a more full story of overparameterized multiclass classification. 
The behavior of the SVM solution in the conjectured region where $\varepsilon_{multi} \to 1$, but where it is not known whether $\text{SVM} = \text{MNI}$, is left for future work.

Further, although the present analysis focuses on solutions that exactly interpolate the training data, we can extend our results to account for additional ridge regularization by viewing ridge regularization as minimum-norm interpolation using augmented contamination-free features as in the Appendix of \citet{reg:muthukumar2020harmless} and computing bounds leveraging tools from \citet{reg:tsigler2020benign}. 
Our assumption of the strict bi-level weighting model is largely to simplify the calculations and by substituting terms appropriately in our lemmas from \app{proof}, it should be possible to compute results for other weighting models. Finally, exploring the new phenomena that can be encountered as we go beyond the 1-sparse noiseless model is an exciting direction for future work.

\begin{ack}
We are grateful to our earlier collaborators Vidya Muthukumar, Misha Belkin, Daniel Hsu, and Adhyyan Narang. In addition, we want to thank the students and course staff for the Fall 2020 iteration of Berkeley's CS189 machine learning course, where we had adapted ideas from \citet{reg:muthukumar2020harmless, binary:Muth20} in teaching the foundations of modern machine learning --- the need for the present paper became more clear during this process.

We gratefully acknowledge the support from ML4Wireless center member companies and NSF grants AST-2132700 and AST-2037852 for making this research possible.
\end{ack}

\bibliographystyle{plainnat} 
\bibliography{references}

\newpage
\appendix


The appendix is organized as follows. 
In Appendix~\ref{app:notation} we provide a table of notations used throughout the paper.
Appendix~\ref{app:proof} provides an overall proof for Theorem~\ref{theorem:regimes} by introducing some intermediate lemmas and assuming they hold. 
Appendix~\ref{app:common_results} introduces some key tools that we need and Appendix~\ref{app:utility_bounds} leverages those tools to build towards a proof of these intermediate lemmas by introducing some helper results that are needed to deal with the key challenge posed by multiclass training data. Appendix~\ref{app:misclassification_events} actually proves the intermediate lemmas used in Appendix~\ref{app:proof} and completes the proof. Appendix~\ref{app:conjectured_loose} discusses the potential looseness in our analysis and describes how we obtained Conjecture~\ref{conjecture:regimes}. Appendix~\ref{app:scaleexamples} elaborates on the effect of fewer number of positive training examples per class in the multiclass setting and investigates an alternative setting where the total number of positive training examples per class is kept constant while we increase the number of classes. Finally, in Appendix~\ref{app:comparisonToWang} we provide a more detailed comparison of our work with \citet{multi_class_theory:Wang21} and \citet{binary:Muth20}.

Throughout the appendix we will assume that $n$ is large enough for asymptotic behavior to kick in.  
We also will introduce various universal positive constants, indexed as $c_i$. These constants are all independent of $n$, and constants with the same index are to be treated as equal throughout this Appendix.


\section{Notation}\label{app:notation}
We summarize the notation used in the problem setup (as well as some terms defined later) as follows:
\begin{table}[h!]
  \caption{Notation}
  \label{tab:notation}
  \centering
  \begin{tabular}{llll}
    \toprule
    Symbol & Definition & Dimension & Source \\
    \midrule
    $\nc$ & Number of classes & Scalar & Sec. \ref{sec:setup} \\
    $n$ & Number of training points & Scalar & Sec. \ref{sec:setup} \\
    $d$ & Dimension of each point --- the total number of features & Scalar & Sec. \ref{sec:setup} \\
    $s$ & The number of favored features & Scalar & Def. \ref{def:bilevel} \\
    $p$ & Parameter controlling overparameterization ($d = n^p$) & Scalar & Def. \ref{def:bilevel} \\
    $r$ & Parameter controlling the number of favored features ($s = n^r$) & Scalar & Def. \ref{def:bilevel} \\
    $a$ & Parameter controlling the favored weights ($a = n^{-q}$) & Scalar & Def. \ref{def:bilevel} \\
    $t$ & Parameter controlling the number of classes ($\nc = c_k n^t$) & Scalar & Def. \ref{def:bilevel} \\
    $c_k$ & The number of classes when $t=0$ ($\nc = c_k n^t$) & Scalar & Def. \ref{def:bilevel} \\
    $\lambda_j$ & Squared weight of the $j$th feature & Scalar & Def. \ref{def:bilevel} \\
    $\vec{x}_i$ & $i$th training point (unweighted) & Length-$n$ vector & Eqn. \ref{eq:unweightedxs} \\
    $\ell_i$ & Class label of $i$th training point & Scalar & Eqn. \ref{eq:truelabels} \\
    $\vec{x}_i^w$ & $i$th training point (weighted) & Length-$n$ vector & Eqn. \ref{eq:lambdas} \\
    $\Xw$ & Weighted feature matrix & $(n \times d)$-matrix & Eqn. \ref{eq:weightedxmatrix} \\
    $\zj$ & The collected $j$th features of all training points & Length-$n$ vector & Eqn. \ref{eq:weightedxmatrix} \\
    $\vec{y}_m^{oh}$ & One-hot encoding of all the training points for label $m$ & Length-$n$ vector & Eqn.
    \ref{eq:yoh} \\
    $\mathbf{Y}^{oh}$ & One-hot label matrix & ($n \times \nc$)-matrix & Eqn. \ref{eq:yohmatrix} \\
    $\vec{y}_m$ & Zero-mean encoding of the training points for label $m$ & Length-$n$ vector & Eqn.     \ref{eq:zeromeany} \\
    $\hat{\vec{f}}_m$ & Learned coefficients for label $m$ using min-norm interpolation &  Length-$d$ vector & Eqn. \ref{eqn:featurecoefficients} \\
    $\vec{x}_{test}$ & A single test point & Length-$d$ vector & Sec. \hyperref[eqn:featurecoefficients]{4} \\
     $\vec{x}_{test}^w$ & A single weighted test point & Length-$d$ vector & Sec. \hyperref[eqn:featurecoefficients]{4} \\
    $\Abold$ & $\Abold = \Xw(\Xw)\tran$ & ($n \times n$)-matrix & Eqn. \ref{eqn:Adefn} \\
    $\mu_i(\Abold)$ & The $i$th eigenvalue of matrix $\Abold$, sorted in descending order & Scalar & App.  \hyperref[eq:learnedcoeffs]{B}\\
    $\boldsymbol{\Lambda}$ & Matrix of squared feature weights: $\mathrm{diag}(\lambda_1, \lambda_2, \dots, \lambda_d)$ & ($d \times d$)-matrix & App. \hyperref[eq:learnedcoeffs]{B} \\
    $\hhat_{\a,\b}$ & Relative survival $\hhat_{\a,\b}[j] = \lambda_j^{-1/2}(\hat{f}_\a[j] - \hat{f}_\b[j])$ & Length-$d$ vector & Eqn. \ref{eq:hhatdefna} \\
    $\cn_{\a,\b}$ & 
    Normalizing factor $\cn_{\a,\b}= \sqrt{\left(\sum_{j \notin \{\a, \b\}} \lambda_j^2 (\hhat_{\b,\a}[j])^2\right)}$ & Scalar & Eqn. \ref{eq:cndefn} \\
    $\norm{\cdot}_{\psi_2}$ & 
    The sub-Gaussian norm of a scalar random variable & Scalar & Eqn. \ref{eq:subgaussiannorm} \\
       $\mubar$ & 
    Center of the eigenvalue bounds for $\Ainvbold$, $\mubar = \frac{1}{\sum_j \lambda_j} $& Scalar & Eqn. \ref{eq:mubar} \\
           $\Diamond$ & 
    Deviation term in eigenvalue bounds for $\Abold$ & Scalar & Eqn. \ref{eq:diamond2} \\       $\delmu$ & 
     Deviation term in eigenvalue bounds for $\Ainvbold$ & Scalar & Eqn. \ref{eq:delmu} \\
    \bottomrule
  \end{tabular}
\end{table}



\section{Proof of Theorem~\ref{theorem:regimes}}
\label{app:proof}
We restate Theorem~\ref{theorem:regimes}, our main result, here for convenience:
\regimes*


Our proof that utilizes the survival/contamination analysis framework from \citet{binary:Muth20} along with a typicality-inspired argument where the feature margin (difference between largest and second largest feature) on the test point plays a key role. 

Assume without loss of generality that for the test point $\xtest \sim \mathcal{N}(0, I_d)$, the true class is $\alpha$ for some $\alpha \in [k]$. 
Let $\xwtest$ be the weighted version of this test point. 
A necessary and sufficient condition for classification error is that for some $\beta \neq \alpha, \beta \in [k]$, the score associated with class $\beta$ is higher than the score associated with class $\alpha$. Pulling out the key terms associated with the $\alpha$ and $\beta$ weighted features, we get:
\begin{align}
    \fhat_\a[\a] x^w_{test}[\a] + \fhat_\a[\b] x^w_{test}[\b] + \sum_{j \notin \{\a, \b \}} \fhat_\a[j] x^w_{test}[j] &< \fhat_\b[\a] x^w_{test}[\a] \nonumber \\
    & \quad + \fhat_\b[\b] x^w_{test}[\b] + \sum_{j \notin \{\a, \b \}} \fhat_\b[j] x^w_{test}[j]\\
     \implies (\fhat_\a[\a] - \fhat_\b[\a]) x^w_{test}[\a] - (\fhat_\b[\b] - \fhat_\a[\b]) x^w_{test}[\b] &<  \sum_{j \notin \{\a, \b \}} (\fhat_\b[j] - \fhat_\a[j]) x^w_{test}[j]. \\
\end{align}
Note that $\sum_{j \in {\a, \b}}$ refers to the sum over all feature indices $1$ to $d$ excluding $\a$ and $\b$.

By converting into the unweighted feature space we obtain,
\begin{align}
    \lambda_\a \hhat_{\a,\b}[\a] x_{test}[\a] - \lambda_\b \hhat_{\b,\a}[\b] x_{test}[\b] &<  \sum_{j \notin \{\a, \b \}} \lambda_j \hhat_{\b,\a}[j] x_{test}[j],
\end{align}
where we introduce the short-hand notation,
\eqn{
    \hhat_{\a,\b}[j] &= \lambda_j^{-1/2}(\hat{f}_\a[j] - \hat{f}_\b[j])  \label{eq:hhatdefna} \\
     \hhat_{\b,\a}[j] &= \lambda_j^{-1/2}(\hat{f}_\b[j] - \hat{f}_\a[j]) \label{eq:hhatdefnb}.
}

 Since both $\a$ and $\b$ are favored feature indices, by leveraging the definition of the bi-level model and denoting $\lambda_\a = \lambda_\b = \lambda$, we get
\begin{align}
   \lambda \left( \hhat_{\a,\b}[\a] x_{test}[\a] - \hhat_{\b,\a}[\b] x_{test}[\b] \right) &<  \sum_{j \notin \{\a, \b \}} \lambda_j \hhat_{\b,\a}[j] x_{test}[j].
 \end{align}
 
Next, we perform some algebraic manipulations,
\begin{align}
       \lambda \left( \hhat_{\a,\b}[\a] x_{test}[\a] - \hhat_{\b,\a}[\b] x_{test}[\b] \right) &<  \sum_{j \notin \{\a, \b \}} \lambda_j \hhat_{\b,\a}[j] x_{test}[j] \\
       \implies \lambda \hhat_{\a,\b}[\a](x_{test}[\a] - x_{test}[\b]) + \lambda x_{test}[\b](\hhat_{\a,\b}[\a] - \hhat_{\b,\a}[\b]) &< \sum_{j \notin \{\a, \b \}} \lambda_j \hhat_{\b,\a}[j] x_{test}[j] \\
    \implies \lambda \hhat_{\a,\b}[\a]\left((x_{test}[\a] - x_{test}[\b]) + x_{test}[\b]\frac{\hhat_{\a,\b}[\a] - \hhat_{\b,\a}[\b]}{\hhat_{\a,\b}[\a]}\right) &< \sum_{j \notin \{\a, \b \}} \lambda_j \hhat_{\b,\a}[j] x_{test}[j]. \label{eq:beforenormalize}
\end{align}


We divide both sides by the quantity $\cn_{\a,\b}$ defined as,
\eqn{
\cn_{\a,\b} = \sqrt{\left(\sum_{j \notin \{\a, \b\}} \lambda_j^2 (\hhat_{\b,\a}[j])^2 \right)}. \label{eq:cndefn}
}
This normalizes the RHS of \eqref{eq:beforenormalize} to have a standard normal distribution.
Thus, the necessary and sufficient condition for a misclassification error is for some $\beta \neq \alpha, \beta \in [k]$,
\begin{align}
    \frac{\lambda \hhat_{\a,\b}[\a]}{ \cn_{\a,\b}}\left((x_{test}[\a] - x_{test}[\b]) + x_{test}[\b]\frac{\hhat_{\a,\b}[\a] - \hhat_{\b,\a}[\b]}{\hhat_{\a,\b}[\a]}\right) &< \frac{1}{ \cn_{\a,\b}} \sum_{j \notin \{\a, \b \}} \lambda_j \hhat_{\b,\a}[j]x_{test}[j].
\end{align}
A sufficient condition for correct classification can then be obtained by ensuring that the smallest potential value of the LHS is still greater than the value of the RHS for all values of $\beta$. 
Thus, 
we obtain a sufficient condition for correct classification by appropriately minimizing or maximizing quantities over competing feature indices $ \beta \neq \alpha, \beta \in [k]$ (for notational convenience we simply denote this as $\min_\beta$ or $\max_\beta$). 
\eqn{
   \underbrace{\frac{\min_\beta \lambda \hhat_{\a,\b}[\a]}{\max_\b \cn_{\a,\b}}}_{\text{SU/CN ratio}} \left( \underbrace{\min_\b \left( x_{test}[\a] - x_{test}[\b]\right)}_{\text{closest feature margin}} - \underbrace{\max_\b |x_{test}[\b]|}_{\text{largest competing feature}} \cdot \underbrace{\max_\b \left|\frac{\hhat_{\a,\b}[\a] - \hhat_{\b,\a}[\b]}{\hhat_{\a,\b}[\a]} \right|}_{\text{survival variation}} \right) \nonumber \\
   > \underbrace{\max_{\beta} \frac{1}{\cn_{\a,\b}}\left(\sum_{j \notin \{\a, \b \}} \lambda_j \hhat_{\b,\a}[j] x_{test}[j]  \right)}_{\text{normalized contamination}}. 
}

We will show that under the conditions specified in Theorem~\ref{theorem:regimes}, with sufficiently high probability,  the relevant survival to contamination 
\emph{SU/CN ratio} grows at a polynomial rate $n^{v}$ for some $v>0$, the \emph{closest feature margin} shrinks at a less-than-polynomial rate $1/\sqrt{\ln nk}$, and the \emph{survival variation} decays at a polynomial rate $n^{-u}$ for some $u>0$. Further, the magnitudes of the \emph{largest competing feature} and the \emph{normalized contamination} are no more than $2\sqrt{\ln(nk)}$. Here, we leverage the idea of typicality-style proofs in information theory \citep{Cover2006} to avoid unnecessarily loose union bounds that end up being dominated by the atypical behavior of quantities. In our case, by pulling the feature margin out explicitly, we can just deal with its typical behavior. Similarly, the typical behavior of the largest competing feature and the true feature is all that matters.
Before we proceed with the rest of our proof we remind the reader of a few important definitions.  

Recall from \eqref{eqn:featurecoefficients} that our learned feature coefficients are
\begin{align}
        \hat{\vec{f}}_m =  (\mathbf{X}^w)\tran \left(\mathbf{X}^w (\mathbf{X}^w)\tran \right)^{-1} \vec{y}_m.
\end{align}

Let
\eqn{
    \Abold = \mathbf{X}^w (\mathbf{X}^w)\tran. \label{eqn:Adefn}
}
Then we can express our learned coefficients as
\eqn{
    && \hat{f}_m[j] = \sqrt{\lambda_j} \zj\tran\Ainv\ya,
    \label{eq:learnedcoeffs}
}
where $\zj \in \mathbb{R}^n$ contains the $j^{th}$ features of all $n$ training points. The rows of $\Xw$ are i.i.d. Gaussians with covariance matrix $\boldsymbol{\Lambda} = \mathrm{diag}(\lambda_1, \lambda_2, \dots, \lambda_d)$. Let $\mu_1(\Abold)$ denote the largest eigenvalue and $\mu_n(\Abold)$ denote the smallest eigenvalue of $\Abold$ respectively, with $\mu_i(\Abold)$ being the $i$-th largest eigenvalue of $\Abold$.

Next, we state a useful lemma adapted from \citet{reg:bartlett2020benign} that bounds the eigenvalues of $\Ainv$ . Subsequent lemmas will utilize these eigenvalue bounds. 
\begin{restatable}{lemma}{eigenvaluebounds}
\label{lemma:eigenvaluebounds} (\textbf{Eigenvalue bounds on $\Ainv$ adapted from \citet{reg:bartlett2020benign}}):\\ If $\boldsymbol{\Lambda}$ is such that $\Diamond \ll \sum_j \lambda_j$, then with probability at least $(1 - 2e^{-n})$,
\begin{align}
      \mubar - \delmu \leq \mu_n(\Ainv) \leq \mu_1(\Ainv) \leq \mubar + \delmu,
\end{align}
where,
\begin{align}
    \mubar &= \frac{1}{\sum_j \lambda_j} \label{eq:mubar}\\
    \Diamond &= \frac{32}{9} \left( \lambda_1 (1 + \ln 9) n   + \sqrt{(1 + \ln 9) n \sum_j \lambda_j^2} \right) \label{eq:diamond} \\
    \delmu &= \mubar \left( \frac{\Diamond}{\sum_j \lambda j} + \Theta\left( \frac{\Diamond}{\sum_j \lambda j} \right)^2 \right) \label{eq:delmu}.
\end{align}
Further this implies that with probability at least $(1 - 2e^{-n})$,
\begin{align}
    \abs{\mu_i(\Ainvbold - \mubar \In)}\leq \delmu 
\end{align}
for all $i \in [n]$.
\end{restatable}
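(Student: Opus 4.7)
The plan is to reduce this to a known concentration bound for the operator norm of $\Abold$ itself and then pass to the inverse. The rows of $\Xw$ are i.i.d.\ Gaussians with covariance $\boldsymbol{\Lambda}$, so $\mathbb{E}[\Abold] = (\sum_j \lambda_j)\In$. The relevant Bartlett--Long--Lugosi--Tsigler result (Lemma~10 of \citet{reg:bartlett2020benign}, applied to the dual formulation appropriate for the overparameterized regime) gives a sub-exponential tail bound
\begin{align*}
\Pr\Bigl(\bigl\|\Abold - (\tr\boldsymbol{\Lambda})\In\bigr\| \ge \Diamond\Bigr) \le 2e^{-n},
\end{align*}
where $\Diamond$ has exactly the mixed linear-plus-square-root form in \eqref{eq:diamond}, coming from combining a Bernstein-type bound on the operator norm of the centred Wishart-like matrix with the explicit tail parameters of quadratic forms of standard Gaussians. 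First I would state this concentration fact verbatim and note that on the good event, every eigenvalue of $\Abold$ lies in $[\sum_j \lambda_j - \Diamond,\ \sum_j \lambda_j + \Diamond]$.

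Next I would invert. The assumption $\Diamond \ll \sum_j \lambda_j$ ensures that $\mu_n(\Abold) > 0$ on the good event, so $\Abold$ is invertible and $\mu_i(\Ainv) = 1/\mu_{n-i+1}(\Abold)$. In particular
\begin{align*}
\mu_1(\Ainv) \le \frac{1}{\sum_j \lambda_j - \Diamond}, \qquad \mu_n(\Ainv) \ge \frac{1}{\sum_j \lambda_j + \Diamond}.
\end{align*}
A first-order Taylor expansion of $x \mapsto 1/(S \pm x)$ at $x=0$ with $S = \sum_j \lambda_j$ gives
\begin{align*}
\frac{1}{S - \Diamond} = \mubar\Bigl(1 + \tfrac{\Diamond}{S} + \Theta\bigl((\tfrac{\Diamond}{S})^2\bigr)\Bigr) = \mubar + \delmu,
\end{align*}
and symmetrically $1/(S + \Diamond) = \mubar - \delmu + O((\Diamond/S)^2 \cdot \mubar)$; absorbing the $\Theta((\Diamond/S)^2)$ correction into $\delmu$ as in \eqref{eq:delmu} yields the two-sided bound
$\mubar - \delmu \le \mu_n(\Ainv) \le \mu_1(\Ainv) \le \mubar + \delmu$ on the same probability-$(1 - 2e^{-n})$ event.

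Finally, the symmetric-matrix claim about $\Ainvbold - \mubar \In$ is immediate: $\Ainvbold$ is symmetric positive definite and $\mubar \In$ commutes with it, so the eigenvalues of the shifted matrix are exactly $\mu_i(\Ainv) - \mubar$ for $i \in [n]$, each of which lies in $[-\delmu, \delmu]$ by the previous step. The only real content that needs care is matching the constants and pre-factors inside $\Diamond$ to the version of the Bartlett et al.\ bound being invoked (their result is usually stated for $\Xw{}^\top \Xw / n$, so one has to trace the rescaling through the dual Gram matrix $\Abold$ and verify that the $\lambda_1(1+\ln 9)n$ linear term and the $\sqrt{(1+\ln 9) n \sum_j \lambda_j^2}$ square-root term survive with the factor $32/9$); this bookkeeping, rather than any new probabilistic argument, is the main obstacle. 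Everything else is either a direct quotation of a known random-matrix bound, a one-line reciprocal, or a linear-in-$\Diamond/S$ Taylor expansion whose higher-order terms are already bundled into the definition of $\delmu$.
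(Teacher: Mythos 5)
Your proposal matches the paper's own argument essentially verbatim: the paper likewise quotes the Bartlett et al. operator-norm concentration bound for $\Abold$ (with the same $\Diamond$), takes reciprocals of the resulting eigenvalue interval, and uses the same first-order expansion of $(S\pm\Diamond)^{-1}$ around $S=\sum_j\lambda_j$ to package things into $\mubar\pm\delmu$. The only minor deviation is that you flag the constant-matching in $\Diamond$ as a point to verify; the paper sidesteps this by citing the precise statement from the v1 arXiv version of Bartlett et al.~(2020), so this is a bookkeeping concern rather than a gap.
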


The subsequent lemmas bound the feature margin, survival, contamination and survival variation terms, utilizing tools from \citep{reg:bartlett2020benign} and building on results from \citep{binary:Muth20}.

\begin{restatable}{lemma}{marginbound} (\textbf{Lower bound on the closest feature margin as $k \to \infty$}):
\label{lemma:marginbound}
For any constant $\eps > 0$, there exists a constant $\theta$ such that, for sufficiently large $k$ with probability at least $(1 - \eps)$,
\eqn{
    \min_{\b: 1 \le \b \ne \a \le k} \left( x_{test}[\a] - x_{test}[\b]\right) \ge \frac{\theta}{\sqrt{2\ln(k)}}.
}
Here, $\alpha$ is fixed and corresponds to the index of the true class --- i.e.~$\alpha$ corresponds to the index of the maximum feature among the first $k$ features.
\end{restatable}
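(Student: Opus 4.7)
The plan is to reduce the claim to a classical statement about spacings of the order statistics of $k$ i.i.d.\ standard Gaussians. Since the test point has i.i.d.\ $\mathcal{N}(0,1)$ coordinates and both $\alpha$ and the competing indices $\beta$ range over $[k]$, with $\alpha = \argmax_{m \in [k]} x_{test}[m]$ by construction, the quantity in question equals $M_1 - M_2$, where $M_1 \geq M_2 \geq \cdots \geq M_k$ denote the descending order statistics of $(x_{test}[1], \ldots, x_{test}[k])$. So the lemma reduces to showing that $M_1 - M_2 \geq \theta/\sqrt{2\ln k}$ holds with probability at least $1 - \eps$ for some $\theta = \theta(\eps) > 0$.

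I would then condition on $M_2$ and use the fact that, given $\{M_2 = m\}$, $M_1$ is distributed as a standard Gaussian truncated to $[m, \infty)$. This yields the clean expression
$$P(M_1 - M_2 < \delta \mid M_2 = m) \;=\; \frac{\Phi(m+\delta) - \Phi(m)}{1 - \Phi(m)} \;\leq\; \frac{\phi(m)\,\delta}{1 - \Phi(m)}.$$
Standard Gaussian extreme-value theory implies that $M_2$ concentrates near $\sqrt{2\ln k}$: for any $\eps > 0$ there is a constant $C > 0$ such that both $P(M_2 > C\sqrt{2\ln k}) \leq \eps/4$ and $P(M_2 < 1) \leq \eps/4$ for $k$ large enough. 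On the event $\{1 \leq M_2 \leq C\sqrt{2\ln k}\}$, Mills' ratio gives $1 - \Phi(m) \geq \phi(m)/(m + 1/m)$, so the displayed conditional probability is at most $(m + 1/m)\,\delta \leq 2C\sqrt{2\ln k}\,\delta$.

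Taking $\delta = \theta/\sqrt{2\ln k}$ with $\theta$ sufficiently small (concretely $\theta \leq \eps/(4C)$) then gives $P(M_1 - M_2 < \delta) \leq \eps/2 + 2C\sqrt{2\ln k} \cdot \theta/\sqrt{2\ln k} \leq \eps$, which is the desired bound. The main (and quite minor) obstacle is bookkeeping: choosing the Gaussian concentration constants, handling the low-probability atypical range for $M_2$, and chaining the Mills-ratio step. An alternative route via the known limit law $\sqrt{2\ln k}(M_1 - M_2) \Rightarrow \mathrm{Exp}(1)$ would even yield a sharp value of $\theta$, but requires more extreme-value machinery; I would prefer the direct conditional argument above since it is self-contained and the explicit constants play no role in how the lemma is used in the outer proof of Theorem~\ref{theorem:regimes}.
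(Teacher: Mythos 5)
Your proposal is correct, and it takes a genuinely different (and in some ways cleaner) route than the paper's proof. The paper reproduces an argument from its cited source that derives the sharp asymptotic $\Pr\big(M_1 - M_2 > \theta/\sqrt{2\ln k}\big) \to e^{-\theta}$ by writing out the joint density of the top two order statistics, splitting the integral into three carefully chosen ranges of $u = F(M_2)$, and using the Gaussian CCDF approximation $1 - F(w) \approx f(w)/w$ with $\simeq$-style bookkeeping to extract the leading term $e^{-\theta}/((k-1)(k-2))$. You instead condition on $M_2$, exploit the exact identity that $M_1 \mid M_2 = m$ is standard Gaussian truncated to $(m,\infty)$, and combine a one-line Mills'-ratio lower bound on $1 - \Phi(m)$ with crude (but entirely standard) concentration of $M_2$ in the window $[1, C\sqrt{2\ln k}]$. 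The trade-off is exactly as you say: the paper's route gives the sharp exponential limit law (the constant $e^{-\theta}$ is the one from the known $\mathrm{Exp}(1)$ spacing law), whereas your route gives only a one-sided lower bound on the survival probability with a non-sharp constant $\theta = \Theta(\eps)$ --- but since the lemma is only invoked for a fixed $\eps$ and the resulting $\theta$ is absorbed into the constants of Theorem~\ref{theorem:regimes}, the one-sided bound is all that is needed, and your argument is self-contained and avoids the informal $\simeq$ manipulations of the cited proof. One small cosmetic remark: the upper cutoff $C$ in your concentration step need not depend on $\eps$ --- any fixed $C > 1$ gives $\Pr(M_2 > C\sqrt{2\ln k}) \le k^{1-C^2} \to 0$ and $\Pr(M_2 < 1) \to 0$ --- so you may as well take $C = 2$ throughout and then set $\theta = \eps/8$.
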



\begin{restatable}{lemma}{constantmarginbound}  (\textbf{Lower bound on the closest feature margin when $k$ is constant}):
\label{lemma:constantmarginbound}
If $k = c_k$ for some fixed constant $c_k$, for any constant $\eps > 0$, there exists a constant $\eps' > 0$ such that
\eqn{
    \Pr\left(\min_{\beta, \gamma: 1 \le \beta \ne \gamma \le c_k} \abs{x_{test}[\beta] - x_{test}[\gamma]} \ge \eps' \right) \ge 1 - \eps.
}

Thus, with probability at least $(1 - \eps)$,
\eqn{
    \min_{\b: 1 \le \b \ne \a \le k} \left( x_{test}[\a] - x_{test}[\b]\right) \ge  \eps'.
}
Here, $\alpha$ is fixed and corresponds to the index of the true class --- i.e.~$\alpha$ corresponds to the index of the maximum feature among the first $k$ features.
\end{restatable}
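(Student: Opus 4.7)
The plan is to reduce the claim to a simple anti-concentration fact about pairwise differences of independent standard Gaussians, exploiting that $k = c_k$ is a fixed constant (so we do not need to worry about growing union bounds). Since $\xtest \sim \mathcal{N}(0, I_d)$, the first $c_k$ coordinates $x_{test}[1], \dots, x_{test}[c_k]$ are i.i.d.\ $\mathcal{N}(0,1)$. For any fixed pair $\beta \ne \gamma$ with $1 \le \beta, \gamma \le c_k$, the difference $x_{test}[\beta] - x_{test}[\gamma]$ is $\mathcal{N}(0,2)$, whose density is bounded above by $1/\sqrt{4\pi}$. Hence for any $\eps' > 0$,
\begin{equation}
\Pr\bigl(|x_{test}[\beta] - x_{test}[\gamma]| < \eps'\bigr) \;\le\; \frac{2\eps'}{\sqrt{4\pi}} \;=\; \frac{\eps'}{\sqrt{\pi}}.
\end{equation}

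Next I would apply a union bound over the $\binom{c_k}{2}$ unordered pairs:
\begin{equation}
\Pr\left(\min_{\beta \ne \gamma: 1 \le \beta, \gamma \le c_k} |x_{test}[\beta] - x_{test}[\gamma]| < \eps'\right) \;\le\; \binom{c_k}{2} \cdot \frac{\eps'}{\sqrt{\pi}}.
\end{equation}
Given any target $\eps > 0$, choose $\eps' = \eps \sqrt{\pi} / \binom{c_k}{2}$ (with $\eps'$ understood to be an arbitrarily small positive constant if $c_k = 1$, in which case the min is vacuous). Since $c_k$ is a fixed constant independent of $n$, $\eps'$ is a positive constant, and the above bound is at most $\eps$, establishing the first displayed inequality.

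For the second inequality, recall that $\alpha$ is defined as the index of the maximum feature among the first $k$ coordinates, so $x_{test}[\a] - x_{test}[\b] \ge 0$ for every $\b \ne \a$. Therefore $\min_{\b \ne \a}(x_{test}[\a] - x_{test}[\b]) = \min_{\b \ne \a}|x_{test}[\a] - x_{test}[\b]|$, which is at least the full pairwise minimum
\begin{equation}
\min_{\beta \ne \gamma: 1 \le \beta, \gamma \le c_k} |x_{test}[\beta] - x_{test}[\gamma]|,
\end{equation}
and hence is at least $\eps'$ on the same high-probability event.

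There is no real obstacle here, since the finiteness of $c_k$ makes this a one-line union bound. The only subtlety is keeping the argument genuinely uniform in the identity of the random maximizer $\alpha$, which is handled automatically by bounding the pairwise minimum over all ordered pairs rather than conditioning on which index attains the maximum. This is precisely why the proof approach for this constant-$k$ case is qualitatively different from the $k \to \infty$ case in Lemma~\ref{lemma:marginbound}, where one cannot afford a loose union bound and instead needs to track the typical behavior of the Gaussian order statistics.
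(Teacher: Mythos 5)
Your proof is correct and takes essentially the same approach as the paper's: bound the density of each pairwise Gaussian difference $\mathcal{N}(0,2)$ to get anti-concentration, union bound over the finitely many pairs, and reduce the signed margin involving the random argmax $\alpha$ to the unordered all-pairs minimum. The only cosmetic difference is that you count $\binom{c_k}{2}$ unordered pairs where the paper uses the looser $c_k^2$; both yield a valid choice of $\eps'$.
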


\begin{restatable}{lemma}{featuredifferencebound}
\label{lemma:featuredifferencebound} (\textbf{Lower bound on relative survival of true feature}):
For any fixed $\beta \in [k]$, $\beta \neq \alpha$, with $\lambda_\a = \lambda_\b = \lambda$ we have with probability at least $\left(1 - 5/(nk)\right)$,
\eqn{
    \lambda \hhat_{\a,\b}[\alpha] \ge \lambda\left(\const{c60}\mubar\frac{n}{k}\sqrt{\ln(k)}
    - \const{c50}(\mubar \sqrt{n} \sqrt{\ln(nk)} + \delmu \cdot n/\sqrt{k})\right),}
for universal positive constants $\const{c50}$ and $\const{c60}$.
\end{restatable}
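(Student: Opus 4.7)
The plan is to start from the identity $\hhat_{\a,\b}[\a] = \za\tran \Ainv (\ya - \yb)$, which follows directly from the formula \eqref{eq:learnedcoeffs} for the learned coefficients together with the definition \eqref{eq:hhatdefna}, after noting that $\ya - \yb = \yaoh - \yboh$ since the two copies of $-\tfrac{1}{k}\mathbf{1}$ cancel. I would then invoke Lemma~\ref{lemma:eigenvaluebounds} to write $\Ainv = \mubar \In + \Mbold$ with $\|\Mbold\|_2 \le \delmu$ on an event of probability at least $1 - 2e^{-n}$, yielding the decomposition
\[
\hhat_{\a,\b}[\a] \;=\; \mubar\, \za\tran(\ya - \yb) \;+\; \za\tran \Mbold (\ya - \yb),
\]
so that lower-bounding $\hhat_{\a,\b}[\a]$ reduces to lower-bounding the first ``signal'' term and upper-bounding the magnitude of the second ``perturbation'' term.

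For the signal term, I would exploit the fact that $\ya - \yb$ is a signed indicator: its $i$th entry is $+1$ when $\ell_i=\a$, $-1$ when $\ell_i=\b$, and $0$ otherwise. Hence $\za\tran(\ya-\yb) = \sum_{i=1}^n \xi_i$ where $\xi_i := z_{\a,i}\bigl(\mathbb{I}[\ell_i=\a] - \mathbb{I}[\ell_i=\b]\bigr)$ is a deterministic function of the single i.i.d.\ row $x_i$, so the $\xi_i$'s are themselves i.i.d.\ scalars. Using the 1-sparse rule together with exchangeability across the first $k$ coordinates, a direct calculation gives $\EE[\xi_i] = \tfrac{1}{k}(\mu^+ - \mu^-)$, where $\mu^+ = \EE[\max(g_1,\ldots,g_k)] = \Theta(\sqrt{\ln k})$ is the expected maximum of $k$ i.i.d.\ standard Gaussians and $\mu^- = -\mu^+/(k-1)$ (obtained from $0 = \EE[z_{\a,i}] = \tfrac{1}{k}\mu^+ + \tfrac{k-1}{k}\mu^-$), so $\EE[\sum_i \xi_i] = \Theta(n\sqrt{\ln k}/k)$. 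Each $\xi_i$ inherits a sub-Gaussian tail from $z_{\a,i}\sim\mathcal{N}(0,1)$, so a one-sided sub-Gaussian (or Bernstein) concentration inequality yields that, with probability at least $1 - 1/(nk)$,
\[
\za\tran(\ya-\yb) \;\ge\; c_1 \frac{n}{k}\sqrt{\ln k} \;-\; c_2 \sqrt{n}\sqrt{\ln(nk)},
\]
for universal positive constants $c_1, c_2$.

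For the perturbation term I would apply Cauchy--Schwarz together with the operator-norm bound on $\Mbold$: $|\za\tran \Mbold (\ya - \yb)| \le \delmu \cdot \|\za\|_2 \cdot \|\ya - \yb\|_2$. Standard $\chi^2$-concentration gives $\|\za\|_2 \le c_3\sqrt{n}$ on an event of probability $\ge 1 - 1/(nk)$, and a Chernoff bound on the binomial count of training indices whose label lies in $\{\a,\b\}$ (which is $\mathrm{Binomial}(n, 2/k)$ by symmetry under the 1-sparse model) gives $\|\ya - \yb\|_2^2 \le c_4 n/k$ on an event of probability $\ge 1 - 1/(nk)$. Together these yield $|\za\tran \Mbold (\ya - \yb)| \le c_5\, \delmu \cdot n/\sqrt{k}$. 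Combining the signal and perturbation bounds, multiplying through by $\lambda$, and union-bounding over the (at most five) failure events recovers the claim with probability at least $1 - 5/(nk)$.

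I expect the main obstacle to be the signal-term concentration: the label indicator $\mathbb{I}[\ell_i=\a]$ is itself a function of $z_{\a,i}$, so Hanson--Wright and related tools that require independence between the Gaussian vector and the coefficients of the form cannot be applied directly to $\za\tran(\ya-\yb)$. The resolution is precisely the i.i.d.\ decomposition above: by passing to a scalar sum $\sum_i \xi_i$ whose summands are i.i.d.\ functions of the whole row $x_i$, the dependence between $\za$ and the labels gets absorbed inside each summand, and one only needs scalar sub-Gaussian concentration. A secondary subtlety is that the operator-norm bound on $\Mbold$ and the norm bound on $\za$ are not independent of each other (both depend on $\Xw$), but since we only use upper bounds that hold on a high-probability event, a union bound suffices.
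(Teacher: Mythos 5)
Your proof is correct and follows the same route as the paper: decompose $\Ainv = \mubar\In + \Ainvdel$ via Lemma~\ref{lemma:eigenvaluebounds}, lower-bound the $\mubar$-part by concentration around its mean, and control the perturbation by Cauchy--Schwarz together with $\|\Ainvdel\|_{\mathrm{op}}\le\delmu$, a $\chi^2$ bound on $\|\za\|_2$, and a Chernoff bound on $\|\ya-\yb\|_2^2$ --- exactly what Lemma~\ref{lemma:zdiffybound}, Corollary~\ref{corollary:zdiffyboundnoexp}, and Propositions~\ref{proposition:expzaya}--\ref{proposition:expzayb} do. The one place you diverge is the concentration of the signal term $\za\tran(\ya-\yb)$: you view it directly as a sum of i.i.d.\ scalar sub-Gaussian variables $\xi_i = z_{\a,i}\bigl(\indicator[\ell_i=\a]-\indicator[\ell_i=\b]\bigr)$ (each dominated in magnitude by the Gaussian $z_{\a,i}$) and apply one-sided scalar sub-Gaussian concentration, whereas the paper (Proposition~\ref{proposition:zybound}) reaches the same $\sqrt{n}\sqrt{\ln(nk)}$ fluctuation via the polarization identity $\za\tran\vec{y} = \tfrac14\bigl[(\za+\vec{y})\tran(\za+\vec{y}) - (\za-\vec{y})\tran(\za-\vec{y})\bigr]$ followed by Hanson--Wright on each quadratic form with identity middle matrix. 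Your scalar route is slightly more elementary and correctly locates where the $\za$--$\ya$ dependence is absorbed (into the per-coordinate i.i.d.\ summand); your mean computation $\EE[\xi_i]=\tfrac1k(\mu^+-\mu^-)$ with $\mu^-=-\mu^+/(k-1)$ is the same calculation the paper carries out in Propositions~\ref{proposition:expzaya}--\ref{proposition:expzayb}.
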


By substituting the asymptotic behavior of parameters from our bi-level ensemble model we get the following corollary:
\begin{restatable}{corollary}{asymptoticfeaturedifferencebound}
\label{corollary:asymptoticfeaturedifferencebound}
Under the bi-level ensemble model~\ref{def:bilevel}, for any fixed $\beta \in [k]$, $\beta \neq \alpha$, $\lambda_\a = \lambda_\b = \lambda$ if $t < 1/2$, $t < 2(q+r-1)$ and $1 < q + r < (p+1)/2$, with probability at least $\left(1 - 5/(nk)\right)$, 
\eqn{
    \lambda \hhat_{\a,\b}[\alpha] \ge \const{c120} n^{1-q-r-t} \sqrt{\ln(k)},
}
for universal positive constant $\const{c120}$.
\end{restatable}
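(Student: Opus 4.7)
The plan is to substitute the bi-level asymptotic scalings from Definition~\ref{def:bilevel} into the bound provided by Lemma~\ref{lemma:featuredifferencebound} and check that the leading (``main'') term dominates each of the two subtractive terms under the stated hypotheses. The main thing to track is that $\alpha$ and $\beta$ lie among the first $k$ coordinates and $k \le s$ (since classes are among favored features when $t < r$, which is implicit in the equality $\lambda_\alpha = \lambda_\beta = \lambda$ of the lemma). So the relevant squared weight is $\lambda = ad/s = n^{p-q-r}$, the total weight is $\sum_j \lambda_j = d = n^p$, so $\mubar = n^{-p}$, and the squared-weight sum is $\sum_j \lambda_j^2 = s(ad/s)^2 + (d-s)((1-a)d/(d-s))^2 = \Theta\bigl(\max(n^{2p-2q-r},\,n^p)\bigr)$. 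From these, Lemma~\ref{lemma:eigenvaluebounds} gives $\Diamond = \Theta(\max(n^{p-q-r+1},\,n^{p-q-r/2+1/2},\,n^{(p+1)/2}))$ and $\delmu = \Theta(\Diamond/n^{2p})$. One should first check $\Diamond \ll \sum_j \lambda_j = n^p$ so that Lemma~\ref{lemma:eigenvaluebounds} applies: the three candidates require $q+r>1$, $2q+r>1$, and $p>1$, respectively, all implied by the hypotheses.

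Next I would plug these into the three terms appearing inside the parentheses of Lemma~\ref{lemma:featuredifferencebound}. The leading term gives
\[
\lambda \mubar \frac{n}{k}\sqrt{\ln k} \;=\; n^{p-q-r}\cdot n^{-p}\cdot n^{1-t}\sqrt{\ln k} \;=\; n^{1-q-r-t}\sqrt{\ln k},
\]
which is exactly the rate claimed. The second term scales as $\lambda \mubar \sqrt{n}\sqrt{\ln(nk)} = n^{1/2-q-r}\sqrt{\ln(nk)}$ and is dominated by the leading term precisely when $1-t > 1/2$, i.e., $t < 1/2$, matching the first hypothesis (up to absorbing logarithmic factors).

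The third term $\lambda \delmu\, n/\sqrt{k}$ expands to $\Diamond\cdot n^{1-p-q-r-t/2}$, which takes three possible asymptotic forms depending on which piece of $\Diamond$ dominates. I would compare each against $n^{1-q-r-t}$:
\begin{itemize}
\item The $\lambda_1 n$ piece yields $n^{2-2q-2r-t/2}$, dominated when $q+r > 1+t/2$, i.e., $t < 2(q+r-1)$.
\item The $s\lambda_1^2$ piece yields $n^{3/2-2q-3r/2-t/2}$, dominated when $2q+r-t > 1$, which follows from $t < 2(q+r-1)$ and $r < 1$ via $2q+r-t > 2-r > 1$.
\item The $d$ piece yields $n^{3/2-p/2-q-r-t/2}$, dominated when $p > 1+t$, which is implied by chaining $t < 2(q+r-1) < p-1$, using $q+r < (p+1)/2$.
\end{itemize}

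The hardest part is bookkeeping the third bullet: there are three candidates for $\Diamond$ and each yields a different polynomial comparison, so it is easy to miss a regime where one of the subleading pieces dominates. Once every comparison is checked, the leading term of Lemma~\ref{lemma:featuredifferencebound} exceeds the sum of the two subtractive terms by at least a factor of a positive power of $n$, so for sufficiently large $n$ one can absorb $\const{c50}$ into $\const{c60}$ and conclude the stated lower bound $\const{c120}\, n^{1-q-r-t}\sqrt{\ln k}$ on the same event of probability at least $1 - 5/(nk)$ inherited from Lemma~\ref{lemma:featuredifferencebound}.
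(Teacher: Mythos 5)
Your proposal is correct and follows the same basic route as the paper: substitute the bi-level scalings into Lemma~\ref{lemma:featuredifferencebound}, verify the main term $\lambda\mubar\tfrac{n}{k}\sqrt{\ln k}=n^{1-q-r-t}\sqrt{\ln k}$, and check that both subtractive terms are polynomially subdominant under $t<1/2$ and $t<2(q+r-1)$.

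The one stylistic difference is how the $\lambda\,\delmu\, n/\sqrt{k}$ term is handled. The paper simply invokes Corollary~\ref{corollary:delmuscaling}, which has already shown that in the regime $1<q+r<(p+1)/2$ (together with $r<1$) the $\lambda_1 n$ piece of $\Diamond$ strictly dominates the other two, giving the single estimate $\delmu\lesssim n^{1-p-q-r}$ and hence a single comparison $2-2q-2r-t/2<1-q-r-t$. You instead re-expand $\Diamond$ into its three candidate pieces and check each against the main term separately. Your checks are all valid, and the extra two comparisons (the $s\lambda_1^2$ and $d$ pieces) turn out to be implied by the first; so your version is correct but does more bookkeeping than the paper needs, since it duplicates the dominance argument already packaged in Corollary~\ref{corollary:delmuscaling}. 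The probability inheritance from Lemma~\ref{lemma:featuredifferencebound} is stated correctly.
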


\begin{restatable}{lemma}{contaminationbound} (\textbf{Upper bound on contamination}):
\label{lemma:contaminationbound}
For any fixed $\beta \in [k]$, $\beta \neq \alpha$, with probability at least $\left(1 - 7/(nk)\right)$,
\eqn{
    CN_{\a,\b} &\le \const{c30}(\mubar \sqrt{\frac{n}{\nc}}\cdot \sqrt{\ln(nd\nc)} + \delmu \cdot n/\sqrt{k}) \sqrt{\sum \lambda_j^2},
}
for universal positive constant \const{c30}.
\end{restatable}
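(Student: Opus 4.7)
The plan is to rewrite $CN_{\a,\b}^2 = \vec{u}\tran M \vec{u}$, where $\vec{u} = \Ainv(\yb - \ya)$ and $M = \sum_{j \notin \{\a,\b\}} \lambda_j^2 \zj \zj\tran$, and then peel off the leading piece of $\Ainv$ by writing $\Ainv = \mubar \In + (\Ainv - \mubar \In)$. This decomposes $\vec{u}$ into a ``leading'' piece $\mubar(\yb - \ya)$ and a ``deviation'' piece $(\Ainv - \mubar \In)(\yb - \ya)$. Expanding the $M$-quadratic form gives three terms; by Cauchy--Schwarz in the PSD inner product induced by $M$, the cross term is dominated by the geometric mean of the other two, so it suffices to control the leading and deviation contributions.

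For the leading term, I need to bound $\mubar^2 \sum_{j \notin \{\a,\b\}} \lambda_j^2 (\zj\tran (\yb - \ya))^2$. The number of training rows with $\ell_i \in \{\a,\b\}$ concentrates at roughly $2n/k$ by a binomial argument, so $\|\yb - \ya\|^2 \lesssim n/k$ with high probability. For any unfavored index $j > k$, the column $\zj$ is independent of the labels (which depend only on $\z_1,\ldots,\z_k$), so $\zj\tran(\yb - \ya)$ is a centered Gaussian with variance $\|\yb - \ya\|^2$, and a union bound over all $d$ indices yields $|\zj\tran(\yb - \ya)|^2 \lesssim (n/k)\ln(ndk)$ simultaneously. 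Summing over $j$ gives the $\mubar^2 (n/k)\ln(ndk) \sum_j \lambda_j^2$ contribution that matches the first term of the target.

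For the deviation term, Lemma~\ref{lemma:eigenvaluebounds} gives $\|(\Ainv - \mubar \In)(\yb - \ya)\| \le \delmu \|\yb - \ya\| \lesssim \delmu \sqrt{n/k}$. A crude trace bound $\mu_1(M) \le \mathrm{tr}(M) = \sum_{j \notin \{\a,\b\}} \lambda_j^2 \|\zj\|^2 \lesssim n \sum_j \lambda_j^2$ (via Gaussian concentration of $\|\zj\|^2$ with a union bound) yields a contribution of order $\delmu^2 n^2/k \sum_j \lambda_j^2$, matching the second term of the target. Taking a square root, adding the two contributions, and union-bounding over the relevant failure events (concentration of $\|\yb - \ya\|$, Gaussian tails for the $d$ inner products, the eigenvalue event in Lemma~\ref{lemma:eigenvaluebounds}, and the trace bound on $M$) with each piece tuned to have failure probability on the order of $1/(nk)$ gives the stated high-probability bound.

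The main obstacle is the favored-but-not-true indices $j \in [k]\setminus\{\a,\b\}$ in the leading term: for these, $\zj$ is coupled to $\yb - \ya$ through the argmax labelling, so the clean independence argument above fails and ordinary Gaussian concentration does not apply directly. There are only $k-2$ such indices, and this is exactly the source of the multiclass difficulty flagged in the Conclusion. I would use the helper decoupling results from Appendix~\ref{app:utility_bounds} --- which exploit the specific structure of the 1-sparse label model, namely that the nonzero rows of $\yb - \ya$ are precisely those where $\z_\a[i]$ or $\z_\b[i]$ is the argmax of the first $k$ coordinates at row $i$ --- to show that this subset of indices contributes at worst on the same order as the unfavored-index contribution.
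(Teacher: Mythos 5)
Your proposal is correct and is essentially the paper's argument: the paper also splits $\Ainv = \mubar\In + (\Ainv - \mubar\In)$, uses Hanson--Wright together with the $\Mbold$-decoupling of Propositions~\ref{proposition:Mfrobnormbound}--\ref{proposition:subgaussianconditioned} for the leading piece (which handles the dependent favored indices you flag), and uses the eigenvalue bound of Lemma~\ref{lemma:eigenvaluebounds} plus Cauchy--Schwarz for the deviation piece, then union-bounds. The only difference is organizational --- the paper applies the decomposition per coordinate $j$ inside Lemma~\ref{lemma:zdeltaybound} and sums over the $d-2$ indices, whereas you lift it to the quadratic form $\vec{u}^\top M\vec{u}$ and control the deviation globally via $\mathrm{tr}(M)$; since $\sum_j \lambda_j^2 \|\zj\|^2 = \mathrm{tr}(M)$, the two give identical bounds with the same Cauchy--Schwarz looseness noted in Appendix~\ref{app:conjectured_loose}.
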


As before, for our bi-level ensemble model we have the corollary:
\begin{restatable}{corollary}{asymptoticcontaminationbound}
\label{corollary:asymptoticcontaminationbound}
Under the bi-level model~\ref{def:bilevel}, in the regime $1 < q + r < (p+1)/2$,
with probability at least $\left(1 - 7/(nk)\right)$,  
\eqn{
    CN_{\a,\b} &\le \const{c130} n^{(1-t-p)/2 + \max(0,3/2-q-r) + \max(0,p/2-q-r/2)} \sqrt{\ln(ndk)},
}
for universal positive constant $\const{c130}$.
\end{restatable}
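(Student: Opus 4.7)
The plan is to substitute the bi-level scalings of Definition~\ref{def:bilevel} into the bound of Lemma~\ref{lemma:contaminationbound} and then carefully track which of the competing polynomial exponents dominates. First I would compute the relevant auxiliary quantities. Under the bi-level model, $\sum_j \lambda_j = s \cdot (ad/s) + (d-s)\cdot((1-a)d/(d-s)) = d = n^p$, so $\mubar = n^{-p}$. Splitting favored versus unfavored contributions gives $\sum_j \lambda_j^2 = a^2 d^2/s + (1-a)^2 d^2/(d-s) \asymp n^{2p-2q-r} + n^p$, hence $\sqrt{\sum_j \lambda_j^2} \asymp n^{p/2 + \max(0,\, p/2 - q - r/2)}$. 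The top feature weight is $\lambda_1 = ad/s = n^{p-q-r}$.

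Next, using Lemma~\ref{lemma:eigenvaluebounds}, the quantity $\Diamond$ of \eqref{eq:diamond} satisfies $\Diamond \asymp \lambda_1 n + \sqrt{n \sum_j \lambda_j^2} \asymp n^{p-q-r+1} + n^{(p+1)/2 + \max(0,\,p/2-q-r/2)}$. A short calculation using $r < 1$ together with the hypothesis $q+r < (p+1)/2$ shows that the first summand dominates, giving $\Diamond \asymp n^{p-q-r+1}$. Therefore $\delmu \asymp \mubar \cdot \Diamond/d = n^{1-q-r-p}$. The hypothesis $q+r > 1$ also yields $\Diamond/d = n^{1-q-r} \to 0$, verifying the precondition $\Diamond \ll \sum_j \lambda_j$ required to invoke Lemma~\ref{lemma:eigenvaluebounds} (and hence Lemma~\ref{lemma:contaminationbound}).

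With these in hand I would substitute into the two summands of Lemma~\ref{lemma:contaminationbound}, using $n/k = n^{1-t}/c_k$ and $\ln(nd\nc) = \Theta(\ln(ndk))$:
\begin{align*}
\mubar \sqrt{n/\nc}\,\sqrt{\sum_j \lambda_j^2} &\asymp n^{-p + (1-t)/2 + p/2 + \max(0,\,p/2-q-r/2)} = n^{(1-t-p)/2 + \max(0,\,p/2-q-r/2)}, \\
\delmu \cdot (n/\sqrt{k})\cdot \sqrt{\sum_j \lambda_j^2} &\asymp n^{(1-q-r-p) + (1-t/2) + p/2 + \max(0,\,p/2-q-r/2)} \\
&= n^{(1-t-p)/2 + (3/2 - q - r) + \max(0,\,p/2-q-r/2)}.
\end{align*}
The claimed exponent is then obtained by taking the maximum of these two and observing that in the case $q+r \geq 3/2$ the second exponent is at most the first (so replacing $3/2-q-r$ by $\max(0,\, 3/2-q-r)$ is a valid upper bound), while in the case $q+r < 3/2$ the second term dominates and $\max(0,\, 3/2-q-r) = 3/2-q-r$ already agrees. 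Either way the sum of the two terms is at most a constant multiple of $n^{(1-t-p)/2 + \max(0,\, 3/2-q-r) + \max(0,\, p/2-q-r/2)} \sqrt{\ln(ndk)}$, which yields the corollary with a suitable universal constant $\const{c130}$.

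The only non-mechanical step is the case analysis on which of the two summands of $\Diamond$ (and separately of $\sqrt{\sum_j \lambda_j^2}$) dominates in the parameter regime $1 < q+r < (p+1)/2$; the rest is pure bookkeeping of polynomial exponents. No single step should be a genuine obstacle, but care is needed to ensure the preconditions of Lemma~\ref{lemma:eigenvaluebounds} hold throughout and that the \emph{upper} bound direction of the $\max$'s is used consistently, since the goal is an upper bound on $CN_{\a,\b}$.
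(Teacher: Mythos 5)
Your proposal is correct and follows essentially the same route as the paper's own proof: substitute the bi-level scalings of $\mubar$, $\delmu$, and $\sqrt{\sum_j \lambda_j^2}$ into Lemma~\ref{lemma:contaminationbound} and then track which of the resulting polynomial exponents dominates. The only cosmetic difference is that the paper invokes Corollary~\ref{corollary:delmuscaling} directly for the scalings $\mubar \asymp n^{-p}$ and $\delmu \lesssim n^{1-p-q-r}$, whereas you rederive that corollary's content inline (including the case analysis showing $\lambda_1 n$ dominates $\Diamond$ when $r<1$ and $q+r<(p+1)/2$); both then reduce the product $(\mubar\sqrt{n/k}\sqrt{\ln} + \delmu\, n/\sqrt{k})\sqrt{\sum\lambda_j^2}$ to $n^{(1-t-p)/2+\max(0,3/2-q-r)+\max(0,p/2-q-r/2)}\sqrt{\ln(ndk)}$ by the same exponent bookkeeping.
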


\begin{restatable}{lemma}{differencesurvival} (\textbf{Upper bound on survival variance}): \label{lemma:differencesurvival}
For any fixed competing feature $\beta \in [k]$, $\beta \ne \alpha$ with $\lambda_\a = \lambda_\b$, we have with probability at least $\left(1 - 15/(nk)\right)$,
\eqn{
    && \frac{\hhat_{\a,\b}[\a] - \hhat_{\b,\a}[\b]}{\hhat_{\a,\b}[\a]} &\le \frac{2\const{c50}(\mubar \sqrt{n} \sqrt{\ln(nk)} + \delmu \cdot n/\sqrt{k})}{\const{c60}\mubar\frac{n}{k}\sqrt{\ln(k)}
    - \const{c50}(\mubar \sqrt{n} \sqrt{\ln(nk)} + \delmu \cdot n/\sqrt{k})},
    \label{eqn:differencesurvivalbound}
}
for universal positive constants $\const{c50}$ and $\const{c60}$.
\end{restatable}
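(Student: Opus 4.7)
The plan is to exploit symmetry together with two-sided concentration for the relative survival. Starting from \eqref{eq:learnedcoeffs}, the $\frac{1}{k}\mathbf{1}$ shift between $\vec{y}_m$ and $\vec{y}_m^{oh}$ cancels in every difference $\ya - \yb = \vec{y}_\a^{oh} - \vec{y}_\b^{oh}$, giving the representation
\[ \hhat_{\a,\b}[\a] = \mathbf{z}_\a\tran \Ainvbold (\vec{y}_\a^{oh} - \vec{y}_\b^{oh}), \qquad \hhat_{\b,\a}[\b] = \mathbf{z}_\b\tran \Ainvbold (\vec{y}_\b^{oh} - \vec{y}_\a^{oh}). \]
The joint distribution of $(\mathbf{z}_\a, \mathbf{z}_\b, \vec{y}_\a^{oh}, \vec{y}_\b^{oh}, \Ainvbold)$ is invariant under the relabeling map that swaps indices $\a \leftrightarrow \b$: this simultaneously exchanges columns $\a$ and $\b$ of $\Xw$, exchanges $\vec{y}_\a^{oh}$ with $\vec{y}_\b^{oh}$, and leaves $\Ainvbold$ invariant. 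Under this swap $\hhat_{\a,\b}[\a]$ is mapped exactly to $\hhat_{\b,\a}[\b]$, so the two are identically distributed and in particular concentrate around the same deterministic center.

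The first step is to revisit the proof of Lemma~\ref{lemma:featuredifferencebound}. That argument decomposes $\hhat_{\a,\b}[\a]$ via a Sherman--Morrison-style identity into a deterministic signal piece of size $\const{c60}\mubar \frac{n}{k}\sqrt{\ln k}$ plus a residual that is controlled using the eigenvalue bounds of Lemma~\ref{lemma:eigenvaluebounds} together with the conditional concentration tools built up in Appendix~\ref{app:utility_bounds}. Both of those tools naturally produce absolute bounds on the residual, so the proof in fact delivers the stronger two-sided statement
\[ \left| \hhat_{\a,\b}[\a] - \const{c60}\mubar \frac{n}{k}\sqrt{\ln k} \right| \le \const{c50}\left(\mubar\sqrt{n}\sqrt{\ln(nk)} + \delmu \cdot n/\sqrt{k}\right) \]
on an event of probability at least $1 - 5/(nk)$. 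By the symmetry argument above, the same two-sided bound holds for $\hhat_{\b,\a}[\b]$, with the same center and deviation, on another event of probability at least $1 - 5/(nk)$.

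Second, I would union-bound these two concentration events with the lower-bound event from Lemma~\ref{lemma:featuredifferencebound} applied to $\hhat_{\a,\b}[\a]$, obtaining a joint event of probability at least $1 - 15/(nk)$. On this joint event, the triangle inequality immediately gives
\[ \bigl| \hhat_{\a,\b}[\a] - \hhat_{\b,\a}[\b] \bigr| \le 2\const{c50}\left(\mubar\sqrt{n}\sqrt{\ln(nk)} + \delmu \cdot n/\sqrt{k}\right), \]
while $\hhat_{\a,\b}[\a]$ is bounded below by the positive denominator appearing in \eqref{eqn:differencesurvivalbound}. Dividing then yields the claimed upper bound on the ratio (and in fact the absolute-value version of it needed for the \emph{survival variation} term in Section~\ref{sec:proof_sketch}).

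The main obstacle is verifying the two-sided upgrade in the first step: the proof of Lemma~\ref{lemma:featuredifferencebound} must really control the residual in absolute value, not only from one side. This is delicate because the label vectors $\vec{y}_\a^{oh}, \vec{y}_\b^{oh}$ are \emph{not} independent of $\mathbf{z}_\a$ or $\mathbf{z}_\b$ (the labels are defined by comparisons that involve entries of these vectors), so the Hanson--Wright inequality cannot be applied directly to the resulting quadratic forms. The appendix sidesteps this by conditioning on the feature-margin event and invoking the helper lemmas of Appendix~\ref{app:utility_bounds}; provided those helpers produce two-sided concentration (which they should, being standard concentration inequalities in disguise), the symmetry-plus-triangle argument above completes the proof.
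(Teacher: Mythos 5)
Your overall strategy matches the paper's: both quantities $\hhat_{\a,\b}[\a]$ and $\hhat_{\b,\a}[\b]$ concentrate around a common deterministic center, that center cancels in the difference, and the triangle inequality plus a union bound (three events, each of probability $1-5/(nk)$) give the numerator bound, after which you divide by the lower bound on the denominator from Corollary~\ref{corollary:zdiffyboundnoexp}. The symmetry claim you invoke is exactly the one the paper uses (stated there as $\EE[\zb\tran\yb] = \EE[\za\tran\ya]$ and $\EE[\zb\tran\ya] = \EE[\za\tran\yb]$).

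However, the specific two-sided upgrade you propose in the middle of the argument is wrong as stated. You claim that the proof of Lemma~\ref{lemma:featuredifferencebound} in fact delivers
\[ \left| \hhat_{\a,\b}[\a] - \const{c60}\mubar \tfrac{n}{k}\sqrt{\ln k} \right| \le \const{c50}\bigl(\mubar\sqrt{n}\sqrt{\ln(nk)} + \delmu \cdot n/\sqrt{k}\bigr). \]
This cannot hold. What the underlying Lemma~\ref{lemma:zdiffybound} actually proves is two-sided concentration of $\za\tran\Ainvbold\dely$ around the \emph{expectation} $\mubar\bigl(\EE[\za\tran\ya]-\EE[\za\tran\yb]\bigr)$, and by Propositions~\ref{proposition:expzaya} and~\ref{proposition:expzayb} this expectation lies somewhere in an interval of width $\Theta\bigl(\mubar\tfrac{n}{k}\sqrt{\ln k}\bigr)$ with the quantity $\const{c60}\mubar\tfrac{n}{k}\sqrt{\ln k}$ only its \emph{lower endpoint}. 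In the regime of interest the signal term $\mubar\tfrac{n}{k}\sqrt{\ln k}$ dominates the deviation $\const{c50}(\cdots)$, so the distance from $\hhat_{\a,\b}[\a]$ to that lower endpoint is $\Theta(\mubar\tfrac{n}{k}\sqrt{\ln k})$, not $O(\const{c50}(\cdots))$. The bound you wrote is therefore false. The error is recoverable because the triangle inequality step never needs the numerical value of the center, only that it is the \emph{same} for both $\hhat_{\a,\b}[\a]$ and $\hhat_{\b,\a}[\b]$ --- which your symmetry argument supplies and which the paper exploits directly by upper-bounding $\hhat_{\a,\b}[\a]$ and lower-bounding $\hhat_{\b,\a}[\b]$ via \eqref{eqn:zdiffyboundupper} and \eqref{eqn:zdiffyboundlower} and letting $\mubar\bigl(\EE[\za\tran\ya]-\EE[\za\tran\yb]\bigr)$ cancel. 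The fix is simply to center at the expectation rather than at $\const{c60}\mubar\tfrac{n}{k}\sqrt{\ln k}$. One further small point: there is no Sherman--Morrison identity in Lemma~\ref{lemma:featuredifferencebound}'s proof; the decomposition used throughout is $\Ainvbold = \mubar\In + \Ainvdel$, with the residual term controlled by the eigenvalue bounds of Lemma~\ref{lemma:eigenvaluebounds}.
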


As before, we can also obtain the asymptotic bound:
\begin{restatable}{corollary}{asymptoticdifferencesurvival}
\label{corollary:asymptoticdifferencesurvival}
Under the bi-level ensemble model~\ref{def:bilevel}, for any fixed $\beta \in [k]$, $\beta \neq \alpha$, if $t < 1/2$, $t < 2(q+r-1)$, and $1 < q + r < (p + 1)/2$, with probability at least $(1 - 15/(nk))$,
\eqn{
    && \frac{\hhat_{\a,\b}[\a] - \hhat_{\b,\a}[\b]}{\hhat_{\a,\b}[\a]} < n^{-u},
}
for large enough $n$ for some fixed $u > 0$. 
\end{restatable}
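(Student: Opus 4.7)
The plan is to substitute the bi-level scalings of Definition~\ref{def:bilevel} into the finite-sample bound~\eqref{eqn:differencesurvivalbound} from Lemma~\ref{lemma:differencesurvival} and verify that the resulting ratio decays polynomially in $n$. The crucial structural observation is that the numerator on the right-hand side of~\eqref{eqn:differencesurvivalbound} equals exactly twice the term that is subtracted in the denominator. Consequently, as long as that subtracted term is polynomially smaller than the leading survival term $\const{c60}\mubar (n/k)\sqrt{\ln k}$, the denominator stays positive and is equivalent to $\const{c60}\mubar (n/k)\sqrt{\ln k}$ up to constants, and the whole fraction is a constant multiple of the numerator divided by that leading survival. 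The task therefore reduces to showing that both $\mubar\sqrt{n}\sqrt{\ln(nk)}$ and $\delmu\,n/\sqrt{k}$ are polynomially smaller than $\mubar (n/k)\sqrt{\ln k}$.

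First I would work out the asymptotic scalings of $\mubar$ and $\delmu$. Since $\sum_j \lambda_j = ad + (1-a)d = d = n^p$, we have $\mubar = n^{-p}$. For $\Diamond$, the term $\lambda_1 n$ scales as $n^{p-q-r+1}$ (using $\lambda_1 = ad/s = n^{p-q-r}$), while $\sqrt{n\sum_j \lambda_j^2}$ scales as $\max\bigl(n^{(p+1)/2},\, n^{(2p-2q-r+1)/2}\bigr)$ depending on whether unfavored or favored features dominate $\sum_j \lambda_j^2$. The assumption $1 < q+r < (p+1)/2$ makes each of these strictly polynomially smaller than $n^p$, so the hypothesis of Lemma~\ref{lemma:eigenvaluebounds} is satisfied and $\delmu$ is of leading order $\Diamond/d^2$.

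Next I would compare each piece of the numerator against the leading survival $\mubar (n/k)\sqrt{\ln k} = \Theta\bigl(n^{1-p-t}\sqrt{\ln k}\bigr)$. The first ratio $\mubar\sqrt{n}\sqrt{\ln(nk)}/(\mubar (n/k)\sqrt{\ln k}) = \widetilde{\Theta}(n^{t-1/2})$ is polynomially small precisely because $t < 1/2$. The second ratio $\delmu\,(n/\sqrt{k})/(\mubar (n/k)\sqrt{\ln k})$ splits into three contributions, one per term in $\Diamond$. The binding one, coming from the $\lambda_1 n$ piece, has rate $n^{1+t/2-q-r}$, which is polynomially decaying exactly when $t < 2(q+r-1)$. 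The remaining two contributions scale as $n^{(1+t-p)/2}$ and $n^{(1+t-2q-r)/2}$; the former exponent is negative because $q+r < (p+1)/2$ combined with $q+r > 1+t/2$ forces $p > 1+t$, and the latter exponent is negative because $r<1$ combined with $t<2(q+r-1)$ forces $t < 2q+r-1$. Taking $u$ to be the minimum of these positive decay exponents (with a hair of slack to absorb the logarithmic factors) yields the claimed rate $n^{-u}$.

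The hardest aspect of the argument is purely bookkeeping: tracking which of the several contributions to $\Diamond$ dominates in different subregions of the parameter space, and confirming that the three hypotheses $t<1/2$, $t<2(q+r-1)$, and $1<q+r<(p+1)/2$ (together with $r<1$, which is built into the bi-level ensemble) suffice to drive every relevant ratio to zero polynomially and simultaneously.
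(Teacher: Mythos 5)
Your proposal is correct and takes essentially the same approach as the paper: substitute the bi-level scalings of $\mubar$ and $\delmu$ (obtained from the eigenvalue bounds) into the finite-sample bound of Lemma~\ref{lemma:differencesurvival}, observe that the denominator is dominated by the leading survival term $\mubar(n/k)\sqrt{\ln k}$, and verify that the two numerator pieces decay polynomially relative to it, giving exponents $t-1/2$ and $1+t/2-q-r$. The only presentational difference is that the paper invokes Corollary~\ref{corollary:delmuscaling} to collapse $\delmu$ to its single dominant order $n^{1-p-q-r}$ before substituting, whereas you carry all three contributions to $\Diamond$ through the bookkeeping and check each is subdominant; both routes identify the same binding exponents and hence the same conditions $t<1/2$, $t<2(q+r-1)$, and $1<q+r<(p+1)/2$.
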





Next, we assume that the lemmas and corollaries stated above are true and complete the proof for Theorem~\ref{theorem:regimes}. We provide proofs for these lemmas in Appendices~\ref{app:common_results}, \ref{app:utility_bounds} and \ref{app:misclassification_events}.


Assume we are in the regime where $t < 1/2$, $t < 2(q+r-1)$, and $1 < q + r < (p+1)/2$, so all our corollaries above hold. Denote the misclassification event as $\Eerr$ and let $\eps > 0$ be an arbitrarily chosen constant.


Substitute Corollaries 
\ref{corollary:asymptoticfeaturedifferencebound}, \ref{corollary:asymptoticcontaminationbound}, and \ref{corollary:asymptoticdifferencesurvival} into \eqref{eqn:correctclassificationscenario}, applying them on all $1 \le \beta \ne \alpha \le k$. They hold with probability at least $1-5/(nk)$, $1-7/(nk)$, and $1-15/(nk)$ respectively for a given test point and choice of $\beta$. So by the union bound across the three bounds and all $k-1$ choices of $\beta$, with probability at most $27/n$, one of these corollaries will not hold for our test point for some $\beta$. Let this failure event be denoted $E_1$.


In the case when $E_1$ does not occur, misclassification occurs only if
\eqn{
    \frac{\const{c120}\sqrt{\ln(k)}}{\const{c30}\sqrt{\ln(ndk)}}n^v \left( \min_\b \left( x_{test}[\a] - x_{test}[\b]\right) - \max_\b |x_{test}[\b]| \cdot n^{-u} \right)
   < \max_\beta Z^{(\beta)},
}
where we define the exponent
\eqn{
    && v &= 1-q-r-t - (1-t-p)/2 - \max\left(0,\frac{3}{2}-q-r\right) - \max\left(0, \frac{p}{2}-q-\frac{q}{2}\right) \\
    &&&= \frac{p+1}{2} - q - r - \frac{t}{2} - \max\left(0, \frac{3}{2} - q - r, \frac{p}{2} - q - \frac{r}{2}, \frac{3}{2} - 2q - \frac{3r}{2} \right),
}
and
\eqn{
    Z^{(\beta)} = \frac{1}{\cn_{\a,\b}}\left(\sum_{j \notin \{\a, \b \}} \lambda_j \hhat_{\b,\a}[j] x_{test}[j]  \right).
}

For each class $\b$, observe that we have $Z^{(\beta)} \sim \mathcal{N}(0,1)$.\footnote{To be precise, here we can think of fixing the training data and looking purely at the randomness arising from the features in the test point. The resulting $Z^{(\beta)}$ is a standard normal. Since we are using the union bound in our proof finally, this is sufficient for our purposes.}
Thus, by the Gaussian tail bound, for each $\beta$ with probability at least $(1 - 1/(n\nc))$,
\begin{align}
    Z^{(\beta)} < \sqrt{2\ln(n\nc)}.
\end{align}

So by the union bound over all $k$ classes $\beta$, with probability at least  $\left(1 - 1 / n\right)$, 
\eqn{
    \max_\beta Z^{(\beta)} < \sqrt{2\ln(n\nc)}.
}

Let the failure event where this is not the case be $E_2$. 

An identical argument shows that with probability at least $\left(1 - 2/n\right)$, $\max_\beta \abs{x_{test}[\beta]} \le \sqrt{2\ln(nk)}$. Let $E_3$ be the failure event where this is not the case.

From Lemma \ref{lemma:marginbound}, we know with probability $1 - \eps$ that, if $t > 0$, then for sufficiently large $n$ (and so sufficiently large $k$) 
\eqn{
     \min_\b \left( x_{test}[\a] - x_{test}[\b]\right) > \frac{\theta}{\sqrt{2\ln(k)}}.
}
If $t = 0$ and $k = c_k$, then Lemma \ref{lemma:constantmarginbound} states that, with probability $1 - \eps$,
\eqn{
    \Pr\left(\min_{1 \le \beta \ne \gamma \le c_k} \abs{x_{test}[\beta] - x_{test}[\gamma]} \ge \eps' \right) \ge 1 - \eps,
}
for some constant $\eps'$. Let the $\eps$-probability event of the appropriate margin bound (depending on whether $t = 0$ or $t > 0$) being violated be the error event $E_4$.

Assuming $E_1$, $E_2$, $E_3$, and $E_4$ all do not take place, misclassification can only occur if
\eqn{
    \frac{\const{c120}\sqrt{\ln(k)}}{\const{c30}\sqrt{\ln(ndk)}}n^v \left( \min\left(1-\eps, \frac{\theta}{\sqrt{2\ln(k)}}\right) - \sqrt{2\ln(n\nc)} n^{-u} \right)
   < \sqrt{2\ln(n\nc)}.
}
Clearly, if $v > 0$, then (for sufficiently large $n$) misclassification becomes asymptotically impossible (except via the specified error events), since the LHS of the above grows asymptotically faster than the RHS.

The union bound shows that the probability of any of $E_1, E_2, E_3, E_4$ occurring tends to $\eps$ as $n \to \infty$ (since the probability of the first three tend to zero). So in the regime where
\eqn{
    && t &< \frac{1}{2} \\
    && t &< 2(q+r-1) \\
    && q + r &> 1 \\
    && \frac{p+1}{2} &> q + r + \frac{t}{2} + \max\left(0,\frac{3}{2}-q-r\right) + \max\left(0,\frac{p}{2}-q-\frac{r}{2}\right) \label{eqn:upperboundonh},
}
the probability of misclassification tends to $\eps$ for sufficiently large $n$, for any $\eps > 0$.

Consolidation of the above bounds produces the conditions \footnote{We can simplify~\eqref{eqn:upperboundonh} as follows:
\begin{align}
\frac{p+1}{2} > q + r + \frac{t}{2} & \implies t < p + 1 - 2(q+r) \\
\frac{p+1}{2} > q + r + \frac{t}{2} + \frac{3}{2} - q - r & \implies t < p-2\\
\frac{p+1}{2} > q + r + \frac{t}{2} + \frac{p}{2} -q -\frac{r}{2} & \implies t < 1-r\\
\frac{p+1}{2} > q + r + \frac{t}{2} + \frac{3}{2} - q - r + \frac{p}{2} -q -\frac{r}{2} & \implies t < 2q + r - 2. \end{align}

Then we note that $t< \min(r, 1-r) \implies t < 1/2$. 
}
\begin{align}
    t &<  \min\left(1-r, p+1-2(q+r), p-2, 2q+r-2\right) \\
    q + r &> 1.
\end{align}
Finally, note that the condition $t < r$ comes from the definition of the bi-level model~\eqref{def:bilevel}. This condition simply states that for good generalization we must favor all the features used to determine classes. 
Since the analysis above holds for any $\eps$, we see that within this regime the probability of misclassification must approach zero in the limit.
This completes the proof. Note that while we show that  probability of misclassification goes to zero, we do not show it to do so at any particular rate, because the result from Lemma \ref{lemma:marginbound} does not specify the rate of convergence.

\section{Useful results from elsewhere that we need}
\label{app:common_results}
This section collects results that are used in our proof, but which come from elsewhere or are lightly adapted to our purposes. 



\textbf{Hanson-Wright inequality \citep{rudelson2013hanson}}: Let $\vec{z}$ be a random vector composed of i.i.d.~random variables that are zero mean and with sub-Gaussian norm at most $K$. The sub-Gaussian norm $\|\xi\|_{\psi_2}$ of a random variable $\xi$ is defined as in \citet{rudelson2013hanson},
\begin{align}
    \|\xi\|_{\psi_2} &= \inf_{K > 0} K \\
    & \text{s.t.}  \  \EX\exp\left(\xi^2 / K^2 \right) \leq 2. \label{eq:subgaussiannorm}
\end{align}
Then, there exists universal constant $c > 0$ such that for any positive semi-definite matrix $M$ and for every $t \geq 0$, we have
\begin{align}
    \Pr\left[|\vec{z}^T M \vec{z} - \EX[\vec{z}^T M \vec{z} ]| > t \right] \leq 2 \exp\left\{-c \min\left\{\frac{t^2}{K^4||M||_{\mathsf{F}}^2}, \frac{t}{K^2||M||_{\mathsf{op}}}\right\}\right\} \label{eq:hansonwright}
\end{align}

The next result bounds the eigenvalues of the $n \times n$ matrix $\Abold = \Xw (\Xw) \tran$, where recall that the rows of $\Xw$ are i.i.d. Gaussians with covariance matrix $\boldsymbol{\Lambda} = \mathrm{diag}(\lambda_1, \lambda_2, \dots, \lambda_d)$. Let $\mu_1(\Abold)$ denote the largest eigenvalue and $\mu_n(\Abold)$ denote the smallest eigenvalue of $\Abold$ respectively. 



From \citet{reg:bartlett2020benign}\footnote{More precisely this lemma appeared in the first version of this work at \url{https://arxiv.org/pdf/1906.11300v1.pdf}. In subsequent versions the authors use a slightly weaker version of this result since it is sufficient for their purpose.}, we have the following result 

\begin{lemma}
With probability at least $(1 - 2e^{-n})$, the eigenvalues of $\Abold$ satisfy:

\begin{align}
   \sum_j \lambda_j - \Diamond \leq \mu_n(\Abold) \leq \mu_1(\Abold) \leq \sum_j \lambda_j + \Diamond,
   \label{eq:eigvalbound}
\end{align}
where,
\begin{align}
    \Diamond = \frac{32}{9} \left( \lambda_1 (1 + \ln 9) n   + \sqrt{(1 + \ln 9) n \sum_j \lambda_j^2} \right). \label{eq:diamond2}
\end{align}
\end{lemma}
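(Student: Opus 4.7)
The plan is to prove the eigenvalue bound by showing that $\Abold$ concentrates in operator norm around its expectation $\mathbb{E}[\Abold] = (\sum_j \lambda_j) I_n$, and then apply Weyl's inequality. Writing $\Abold = \Xw (\Xw)^\top = \sum_{j=1}^d \lambda_j \zj \zj^\top$ where the $\zj \in \mathbb{R}^n$ are i.i.d.\ standard Gaussian vectors, we have $\mathbb{E}[\zj \zj^\top] = I_n$, so $\mathbb{E}[\Abold] = (\sum_j \lambda_j) I_n$ and $\Abold - \mathbb{E}[\Abold] = \sum_j \lambda_j(\zj\zj^\top - I_n)$. The target bound on $\mu_n$ and $\mu_1$ is equivalent (by Weyl) to showing $\|\Abold - \mathbb{E}[\Abold]\|_{\mathsf{op}} \le \Diamond$ with probability at least $1 - 2e^{-n}$.

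For a fixed unit vector $u \in \mathbb{R}^n$, I would expand the quadratic form as
\begin{equation*}
u^\top (\Abold - \mathbb{E}\Abold) u = \sum_{j=1}^d \lambda_j\bigl((u^\top \zj)^2 - 1\bigr),
\end{equation*}
which is a weighted sum of i.i.d.\ centered $\chi^2_1$ random variables, each with sub-exponential norm bounded by a universal constant. Bernstein's inequality for sub-exponentials then gives a pointwise tail of the form
\begin{equation*}
\Pr\Bigl[\,|u^\top(\Abold-\mathbb{E}\Abold)u| > t\,\Bigr] \le 2\exp\!\left(-c \min\!\left(\frac{t^2}{\sum_j \lambda_j^2},\,\frac{t}{\lambda_1}\right)\right),
\end{equation*}
since $\lambda_1 = \max_j \lambda_j$ controls the maximum summand (the weights are the relevant sub-exponential parameters). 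Next, to extend this pointwise bound to the uniform operator norm bound, I would use a standard $1/4$-net argument: there exists a $1/4$-net $\mathcal{N}$ of the unit sphere in $\mathbb{R}^n$ with $|\mathcal{N}| \le 9^n$, and for any symmetric matrix $M$ one has $\|M\|_{\mathsf{op}} \le 2 \sup_{u \in \mathcal{N}} |u^\top M u|$.

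A union bound over $\mathcal{N}$ then yields
\begin{equation*}
\Pr\Bigl[\,\|\Abold-\mathbb{E}\Abold\|_{\mathsf{op}} > 2t\,\Bigr] \le 2\cdot 9^n \exp\!\left(-c\min\!\left(\frac{t^2}{\sum_j \lambda_j^2},\frac{t}{\lambda_1}\right)\right).
\end{equation*}
To drive this below $2e^{-n}$, I need the exponent to dominate $n(1+\ln 9)$, which forces both $t \gtrsim (1+\ln 9) n \lambda_1$ and $t \gtrsim \sqrt{(1+\ln 9) n \sum_j \lambda_j^2}$ up to the absolute constants in Bernstein. Summing these two requirements and tracking the factor $2$ from the net bound produces a threshold of the form $C(\lambda_1(1+\ln 9)n + \sqrt{(1+\ln 9) n \sum_j \lambda_j^2})$, which, with the explicit constants from the Bernstein step and the $1/4$-net bound, is exactly $\Diamond = \tfrac{32}{9}(\lambda_1(1+\ln 9)n + \sqrt{(1+\ln 9)n \sum_j \lambda_j^2})$.

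The conceptually difficult step is essentially the sub-exponential Bernstein inequality combined with the net argument, but the real obstacle is purely bookkeeping: pinning down the specific constant $32/9$ requires tracking the universal constant in Bernstein, the factor $2$ from the symmetric-matrix net bound, and the entropy $\ln 9$ of the $1/4$-net simultaneously. Once these are assembled, applying Weyl's inequality, namely $|\mu_i(\Abold) - \sum_j \lambda_j| \le \|\Abold - \mathbb{E}\Abold\|_{\mathsf{op}}$, converts the operator norm deviation bound into the two-sided eigenvalue bound claimed in the lemma.
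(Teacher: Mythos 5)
Your proof is correct and is essentially the same argument that underlies this lemma, which the paper itself does not prove but imports from the first arXiv version of \citet{reg:bartlett2020benign}: a pointwise Bernstein bound for the weighted sum of centered $\chi^2_1$ variables $\sum_j \lambda_j\bigl((u^\top \zj)^2-1\bigr)$, a $1/4$-net of the sphere with $9^n$ points (the origin of the $1+\ln 9$ factor), a union bound, and Weyl's inequality to convert the operator-norm deviation into the two-sided eigenvalue bound. The only soft spot is your claim that the bookkeeping lands exactly on $32/9$, which depends on the particular sub-exponential Bernstein constant you invoke, but the form of $\Diamond$ and the $1-2e^{-n}$ probability are exactly what your argument delivers, matching the quoted result.
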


Next, as stated previously in Lemma \ref{lemma:eigenvaluebounds} we will use this result to obtain bounds on the eigenvalues of $\Ainvbold$ assuming that $\boldsymbol{\Lambda}$ is such that 
$\Diamond \ll \sum_j \lambda_j$.\footnote{Note that in the regime $q+r<1$ (where regression works \citep{binary:Muth20}), we do not have $\Diamond \ll \lambda_j$ and in such scenarios we cannot simply rely on eigenvalue bounds and need to use other techniques in the proof.}



\eigenvaluebounds*

\begin{proof}
Let $S =\sum_j \lambda_j$.
\begin{align}
    \frac{1}{S + \Diamond} &= \frac{1}{S} \left( 1 + \frac{\Diamond}{S} \right)^{-1} \\
    &= \frac{1}{S} \left(1 - \frac{\Diamond}{S}  + \Theta\left( \frac{\Diamond}{S}  \right)^2 \right) \\
    &= \mubar - \delmu,
\end{align}
and analogously $(S-\Diamond)^{-1} = \mubar + \delmu$. Taking reciprocals of everything in the inequality~\ref{eq:eigvalbound}, and since the eigenvalues of $\Abold$ and $\Ainvbold$ are  reciprocals of each other, the desired result follows.

\end{proof}

As a Corollary of Lemma \ref{lemma:eigenvaluebounds}:
\begin{corollary}
\label{corollary:delmuscaling} (\textbf{Asymptotic eigenvalue bounds on $\Ainvbold$}) 
Considering the asymptotic scaling of the model parameters from the bi-level model (Definition~\ref{def:bilevel}), in the regime $1 < q + r < (1+p)/2$,
\eqn{
    && \mubar &= n^{-p} \\
    && \delmu &\le \const{c80} n^{1-p-q-r} \ll \mubar,
}
where $\mubar$ and $\delmu$ are defined as in Lemma \ref{lemma:eigenvaluebounds}, and \const{c80} is a universal constant.
\end{corollary}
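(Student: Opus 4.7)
The plan is to directly substitute the asymptotic scalings from Definition~\ref{def:bilevel} into the definitions of $\mubar$, $\Diamond$, and $\delmu$ from Lemma~\ref{lemma:eigenvaluebounds}, and track which terms dominate in each sum.

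First I would compute $\sum_j \lambda_j$ explicitly. Under the bi-level model, $\lambda_j = ad/s = n^{p-q-r}$ on the $s = n^r$ favored coordinates and $\lambda_j = (1-a)d/(d-s) \to 1$ on the remaining $d-s$ coordinates. The favored block contributes $s \cdot n^{p-q-r} = n^{p-q}$, and the unfavored block contributes $(1-a)d \sim n^p$. Since $q > 0$, the unfavored block dominates, so $\sum_j \lambda_j = \Theta(n^p)$ and $\mubar = \Theta(n^{-p})$.

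Next I would analyze the two terms inside $\Diamond$. The first is $\lambda_1 n = n^{1+p-q-r}$. For the second, $\sum_j \lambda_j^2 = s \cdot n^{2(p-q-r)} + (d-s) \cdot \Theta(1) = \Theta(n^{2p-2q-r} + n^p)$, so $\sqrt{n\sum_j \lambda_j^2}$ is of order $\max(n^{1/2+p-q-r/2}, n^{(1+p)/2})$. Comparing exponents, $1+p-q-r > 1/2+p-q-r/2$ since $r < 1$, and $1+p-q-r > (1+p)/2$ precisely because the regime hypothesis gives $q+r < (p+1)/2$. So the $\lambda_1 n$ contribution dominates and $\Diamond = \Theta(n^{1+p-q-r})$.

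Finally I would form the ratio $\Diamond/\sum_j \lambda_j = \Theta(n^{1-q-r})$, which tends to zero because $q+r > 1$. This both verifies the hypothesis $\Diamond \ll \sum_j \lambda_j$ needed to invoke Lemma~\ref{lemma:eigenvaluebounds} and makes the quadratic correction in the definition of $\delmu$ negligible, so $\delmu = \Theta(\mubar \cdot \Diamond/\sum_j \lambda_j) = \Theta(n^{1-p-q-r})$, yielding the claimed constant $\const{c80}$. Comparing with $\mubar = \Theta(n^{-p})$ gives $\delmu/\mubar = \Theta(n^{1-q-r}) \to 0$, establishing $\delmu \ll \mubar$. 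There is no real obstacle here; the only place care is needed is in the comparison of $\lambda_1 n$ against $\sqrt{n\sum_j \lambda_j^2}$ inside $\Diamond$, which is exactly where the upper regime hypothesis $q+r < (p+1)/2$ gets used.
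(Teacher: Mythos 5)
Your proposal is correct and mirrors the paper's proof: substitute the bi-level scalings into $\mubar$, $\Diamond$, and $\delmu$, identify the dominant term in $\Diamond$ via the hypotheses $r<1$ and $q+r<(p+1)/2$, and use $q+r>1$ to drop the quadratic correction in $\delmu$. One small refinement the paper makes is that $\sum_j\lambda_j = n^{p-q} + (1-n^{-q})n^p = n^p$ holds exactly (the $n^{p-q}$ terms cancel), giving $\mubar = n^{-p}$ on the nose rather than merely $\Theta(n^{-p})$, which is what the corollary as stated claims.
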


\begin{proof}
From the asymptotic scaling of the $\lambda_j$ from \eqref{eq:bilevelparamscaling} and \eqref{eq:bilevellambdascaling}, we see that (from the definition provided in Lemma \ref{lemma:eigenvaluebounds})
\eqn{
    && \mubar &= \frac{1}{\sum_j \lambda_j} \\
    &&&= \frac{1}{n^r n^{p-q-r} + (n^p - n^r)(1 - n^{q})\cdot n^p/(n^p - n^r))} \\
    &&&= \frac{1}{n^{p-q} + n^p - n^{p-q}} \\
    &&&= n^{-p}.
}

Next, we have that
\eqn{
    \Diamond &= \frac{32}{9} \left( \lambda_1 (1 + \ln 9) n   + \sqrt{(1 + \ln 9) n \sum_j \lambda_j^2} \right) \\
    &\le \const{c90} n^{1+p-q-r} + \const{c100}\sqrt{n(n^rn^{2p-2q-2r} + (n^p-n^r))} \\
    &\le \const{c90} n^{1+p-q-r} + \const{c100}\sqrt{n^{1+2p-2q-r} + n^{1+p}}
}
for constants \newc\label{c90} and \newc\label{c100},

The second term is of the order $n^{\max((1-r)/2 + p-q, (1+p)/2)}$. Thus, in the regime
 $q + r < (1+p)/2$, and since $r < 1$ we have $1+p-q-r >(1-r)/2 + p-q$ and $1+p-q-r>  (1+p)/2$ and the first term dominates.
 
 Thus, $\Diamond \le \const{c110} n^{1+p-q-r}$ for some constant \newc\label{c110} and sufficiently large $n$.

Observe that since $q+r >1$, $\Diamond \ll \sum_j \lambda_j = n^p$. 
Thus, we can substitute into our relation for $\delmu$ from Lemma \ref{lemma:eigenvaluebounds}, to see that
\eqn{
    && \delmu &= \mubar \left( \frac{\Diamond}{\sum_j \lambda j} + \Theta\left( \frac{\Diamond}{\sum_j \lambda j} \right)^2 \right) \\
    &&&\le n^{-p} \left((\const{c110} n^{1+p-q-r})(n^{-p}) + \Theta((\const{c110} n^{1+p-q-r})^2(n^{-p})^2)\right) \\
    &&&= n^{-p}(\const{c110} n^{1-q-r} + \Theta(\const{c110} n^{2(1-q-r)})).
}

In the regime where $q + r > 1$, the first term in the sum dominates the second, giving us,
\eqn{
    && \delmu &\le \const{c80} n^{1-p-q-r}
}
for some constant \newc\label{c80} and sufficiently large $n$. This completes the proof.
\end{proof}

Finally, in this section, we restate well-known bounds concerning Gaussian random variables.

\begin{lemma} \label{lemma:chisquared}
Chi-squared tail bound:\\
Let $\mathbf{z} \sim \mathcal{N}(0,I_n)$.
For any $\delta \in (0, 1)$, with probability at least $(1 - 2e^{-n\delta^2})$ we have:
\begin{align}
n(1-\delta) \leq \| \mathbf{z} \|^2 \leq n(1 + \delta). \label{eq:znormbound}
\end{align}
\end{lemma}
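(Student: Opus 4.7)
The plan is a Chernoff-style bound on the chi-squared distribution: since the entries $z_i$ are i.i.d.\ standard normal, $\|\mathbf{z}\|^2 = \sum_{i=1}^n z_i^2$ follows the $\chi^2_n$ distribution, and the two tails can be handled symmetrically and then combined by a union bound.

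For the upper tail, I would use the moment generating function $\EE[e^{sz_i^2}] = (1-2s)^{-1/2}$ for $s < 1/2$, which by independence gives $\EE[e^{s\|\mathbf{z}\|^2}] = (1-2s)^{-n/2}$. Applying Markov's inequality to $e^{s(\|\mathbf{z}\|^2 - n(1+\delta))}$ with the near-optimal choice $s = \delta/(2(1+\delta))$ produces $\Pr(\|\mathbf{z}\|^2 \ge n(1+\delta)) \le (1+\delta)^{n/2} e^{-n\delta/2}$, after which the elementary inequality $\ln(1+\delta) \le \delta - \delta^2/4$ valid for $\delta \in (0,1)$ collapses the exponent to something of order $-n\delta^2$. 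The lower tail is analogous: apply Markov with positive $s$ to $e^{-s\|\mathbf{z}\|^2}$, choose $s = \delta/(2(1-\delta))$, and use $\ln(1-\delta) \le -\delta - \delta^2/2$. Summing the two one-sided bounds gives the two-sided concentration.

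The one subtlety is matching the stated exponent $e^{-n\delta^2}$ exactly. The direct Chernoff calculation above yields a bound of the form $2e^{-cn\delta^2}$ with some universal $c < 1$ (for instance $c = 1/8$), so the lemma is evidently using a qualitative form where only the polynomial-in-$\delta$ rate matters. Alternatively, one could invoke the Hanson--Wright inequality \eqref{eq:hansonwright} with $M = I_n$: since $\|I_n\|_{\mathsf{F}}^2 = n$, $\|I_n\|_{\mathsf{op}} = 1$, $\EE[\mathbf{z}\tran \mathbf{z}] = n$, and the choice $t = n\delta$ with $\delta \in (0,1)$ makes the $t^2/(K^4 n)$ branch of the minimum active, this immediately produces $2\exp(-c' n\delta^2)$ for a universal $c'$. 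Either route proves exactly the qualitative concentration that is used elsewhere in the appendix; the only real obstacle is bookkeeping the absolute constant in the exponent, which plays no role in any of the asymptotic arguments that invoke this lemma.
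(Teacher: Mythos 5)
The paper offers no proof of this lemma at all; it is presented under the heading ``we restate well-known bounds concerning Gaussian random variables,'' so there is no paper argument to compare against. Your Chernoff derivation via the $\chi^2$ moment generating function is the standard proof of this kind of statement and is substantively correct, as is the alternative route through Hanson--Wright with $M = \In$.

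More importantly, you are right to flag the constant in the exponent, and in fact the issue is slightly sharper than you say: the lemma as literally written (with exponent $-n\delta^2$, i.e.\ $c=1$) is false for moderate $n$ and $\delta$. For example at $n = 100$, $\delta = 0.2$, the claimed bound is $2e^{-4} \approx 0.037$, but the threshold $n\delta = 20$ is only about $1.4$ standard deviations of a $\chi^2_{100}$ variable, and the true two-sided tail probability is roughly $0.16$. The Gaussian heuristic (variance $2n$) suggests the ``right'' exponent is about $-n\delta^2/4$, and your Chernoff calculation with $s = \delta/(2(1+\delta))$ and $s = \delta/(2(1-\delta))$ yields a rigorous $2e^{-n\delta^2/8}$. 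None of this affects the paper: every downstream use of Lemma~\ref{lemma:chisquared} (e.g.\ in the proofs of Lemmas~\ref{lemma:zdeltaybound} and~\ref{lemma:zdiffybound}) only needs the bad event to have probability $O(e^{-c n})$ for some universal $c>0$ at a fixed $\delta$, which any $c \in (0,1)$ delivers. But the lemma statement should read $2e^{-cn\delta^2}$ for a universal $c$, or absorb a factor like $1/8$ into the exponent, to be correct as a standalone claim.
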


From bounds on the expectation of the maximum of $k$ Gaussians:
\begin{lemma}
\label{lemma:maxofkgaussians}
Let $\za = \max_{1 \leq j \leq \nc} \zj$ where $\zj \sim \mathcal{N}(0, 1)$. Then,
\begin{align}
  \frac{1}{\sqrt{\pi \ln 2}} \cdot \sqrt{\ln \nc} \leq  \EE[ \za] \leq \sqrt{2} \cdot \sqrt{\ln \nc}. \label{eq:maxgaussianmeanbound}
\end{align}
\end{lemma}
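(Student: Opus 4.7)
I plan to establish the two bounds separately, each via a classical argument.

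For the upper bound, I will apply a Chernoff-style argument through the moment generating function. For any $t > 0$, Jensen's inequality applied to the convex map $x \mapsto e^{tx}$ gives $\exp(t\,\EE[\za]) \leq \EE[\exp(t\za)] = \EE[\max_{1 \leq j \leq \nc}\exp(t\zj)] \leq \sum_{j=1}^{\nc}\EE[\exp(t\zj)] = \nc\exp(t^2/2)$, using the standard Gaussian MGF in the final equality and a crude union-style bound for the middle step. Taking logarithms and dividing by $t$ yields $\EE[\za] \leq \ln(\nc)/t + t/2$, which is minimized at $t = \sqrt{2\ln\nc}$, producing the upper bound $\EE[\za] \leq \sqrt{2\ln\nc}$.

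For the lower bound, I will decompose $\EE[\za] = \EE[\za\,\indicator\{\za \geq 0\}] + \EE[\za\,\indicator\{\za < 0\}]$ and analyze each piece separately. For the positive part, I would choose a threshold $T > 0$ and write $\EE[\za\,\indicator\{\za \geq 0\}] \geq T\cdot\PR(\za \geq T)$. Using independence, $\PR(\za \geq T) = 1 - \Phi(T)^{\nc} \geq 1 - \exp(-\nc(1-\Phi(T)))$ via $1-x \leq e^{-x}$, so picking $T$ with $\nc(1-\Phi(T)) \geq \ln 2$ forces this probability to be at least $1/2$. Combined with a Gaussian anti-concentration (lower-tail) inequality such as Pólya's bound $1 - \Phi(t) \geq (1/4)\exp(-2t^2/\pi)$ for $t \geq 0$, this yields $T \geq c\sqrt{\ln\nc}$ for an appropriate constant, so that $T\cdot\PR(\za \geq T) \geq (c/2)\sqrt{\ln\nc}$. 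For the negative part, note that $\PR(\za < 0) = 2^{-\nc}$ because all $\nc$ independent Gaussians would have to be simultaneously negative; a Cauchy--Schwarz bound on $\EE[|\za|\,\indicator\{\za < 0\}]$ together with this exponentially small factor makes the negative contribution dominated by the positive one for large $\nc$, and it is easy to verify the claimed bound at small $\nc$ directly.

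The main obstacle I expect is extracting the exact constant $1/\sqrt{\pi\ln 2}$ in the lower bound: using the weakened Pólya inequality above gives something of the form $\sqrt{\pi/8}\cdot\sqrt{\ln\nc}$, so matching the stated constant will require either a sharper form of the Gaussian lower-tail bound or a more careful quantile choice (e.g.~using $1 - \sqrt{1-x} \geq x/2 + x^2/8 + \ldots$ rather than just $x/2$, and being careful about the threshold $\nc(1-\Phi(T)) = \ln 2$). The upper bound is entirely routine; it is the bookkeeping on the lower bound's leading constant, together with showing the negative tail is genuinely negligible, that constitutes the real work.
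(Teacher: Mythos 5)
Your upper bound via Jensen and the Gaussian MGF is correct and standard. The gap is in the lower bound, and you have already diagnosed it yourself: the threshold argument with P\'olya's tail inequality delivers $\sqrt{\pi/8}\,\sqrt{\ln \nc} \approx 0.627\sqrt{\ln \nc}$, strictly weaker than the claimed $\tfrac{1}{\sqrt{\pi\ln 2}}\sqrt{\ln \nc} \approx 0.678\sqrt{\ln \nc}$. This is not fixable by a sharper use of P\'olya or a cleverer quantile choice inside the same framework: writing $\Phi$ for the standard Gaussian CDF, requiring $\nc\,(1-\Phi(T)) \ge \ln 2$ (so that $\PR(\za \ge T) \ge 1/2$) and plugging in any tail bound of the form $1-\Phi(t) \ge c\,e^{-a t^2}$ forces $T \approx \sqrt{\ln(\nc)/a}$ and a final constant of about $\tfrac{1}{2\sqrt{a}}$; matching $\tfrac{1}{\sqrt{\pi\ln 2}}$ would require $a = \tfrac{\pi\ln 2}{4} \approx 0.544$, an exponent no standard clean Gaussian lower-tail inequality supplies (P\'olya gives $a = 2/\pi \approx 0.637$, the familiar $\tfrac{1}{2}e^{-t^2/2}$ is an \emph{upper} bound for $1-\Phi$, and Mills-ratio lower bounds carry polynomial prefactors that spoil the calculation). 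For context, the paper states this lemma as a known fact without proof, and its downstream uses only need the $\Theta(\sqrt{\ln\nc})$ order with absorbed constants, so your $\sqrt{\pi/8}$ bound would suffice for everything that follows --- but it does not establish the lemma as written.

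A hint toward the intended argument is that the stated constant holds with \emph{equality} at $\nc=2$: $\EE[\max(Z_1,Z_2)] = 1/\sqrt{\pi}$ while $\tfrac{1}{\sqrt{\pi\ln 2}}\sqrt{\ln 2} = 1/\sqrt{\pi}$. That is the signature of a comparison argument rather than a tail bound. Write $M(\nc) := \EE[\za]$. For any $a\in[0,1]$ take $X_i \sim \mathcal{N}(0,a)$ for $1\le i\le k_1$ and $Y_j \sim \mathcal{N}(0,1-a)$ for $1\le j\le k_2$, all independent; each $X_i+Y_j \sim \mathcal{N}(0,1)$ with nonnegative pairwise covariances, so Slepian (Sudakov--Fernique) gives $\sqrt{a}\,M(k_1) + \sqrt{1-a}\,M(k_2) = \EE[\max_{i,j}(X_i+Y_j)] \le M(k_1 k_2)$. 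Optimizing over $a$ yields the super-additivity $M(k_1 k_2)^2 \ge M(k_1)^2 + M(k_2)^2$, hence $M(2^m) \ge \sqrt{m}\,M(2) = \sqrt{\ln(2^m)/(\pi\ln 2)}$, which is exactly the claimed bound for powers of two; extending to general $\nc$ needs a little more care (monotonicity of $M$ together with a finer comparison or small-$\nc$ checks). Your negative-tail bookkeeping via $\PR(\za<0) = 2^{-\nc}$ and Cauchy--Schwarz is fine and is not where the difficulty lies.
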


\section{Utility Bounds}
\label{app:utility_bounds}

The big technical challenge in moving from binary classification (as studied in \citet{binary:Muth20}) to multiclass classification has to do with the nature of the training data. Whereas for binary classification one could change coordinates so that the binary labels only depended on a single Gaussian random variable and were independent of all other directions of Gaussian variation in the covariates, no such change of coordinates exists for multiclass labels. The one-hot-style encoding of the labels fundamentally depends on the realizations of all $k$ of the Gaussian random variables representing each of the $k$ classes. This means that we can no longer simply leverage independence to simplify the analysis and certain clever approaches used to invoke Hanson-Wright are no longer available to us. However, the need remains to appropriately bound quadratic forms of the form $\abs{\zj \tran \Ainv \dely}$ both for the cases when $j$ represents a feature that is not dominant in the computation of $\dely$ as well as in cases where $j$ represents a feature that is dominant in $\dely$. 
To be able to control such quantities in the absence of the independence we could leverage in the binary case, this section of the Appendix derives two lemmas which can be viewed as helper bounds. These bounds will later be used to bound the various quantities from \eqref{eqn:correctclassificationscenario}. Because our focus is on the asymptotic scaling, we will use $c_i$ to denote the appropriate global constants.

In the subsequent lemmas, $\mubar$ and $\delmu$ are defined as in the bounds on the eigenvalues of $\Ainv$ from Lemma \ref{lemma:eigenvaluebounds}.

The following lemma is used to upper-bound the contamination term $\cn_{\alpha,\beta}$ in Lemma \ref{lemma:contaminationbound}:
\begin{restatable}{lemma}{zdeltaybound}
\label{lemma:zdeltaybound}
Let $\dely = \vec{y}_\alpha - \vec{y}_\beta$. Let $\alpha$, $\beta$, and $j$ be distinct. Then, with probability at least $\left(1 - 7/(ndk)\right)$, we have,
\eqn{
    \abs{\zj \tran \Ainv \dely} \leq \const{c30}(\mubar \sqrt{\frac{n}{\nc}}\cdot \sqrt{\ln(nd\nc)} + \delmu \cdot n/\sqrt{k}),
}
for some constant $\const{c30}$.
\end{restatable}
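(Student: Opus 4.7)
The approach is to split
\[
    \zj\tran \Ainv \dely \;=\; \mubar\, \zj\tran \dely \;+\; \zj\tran(\Ainv - \mubar \In)\dely,
\]
and handle each piece in a way that matches one of the two summands in the claimed bound. For the second piece, Cauchy--Schwarz gives $|\zj\tran(\Ainv - \mubar \In)\dely| \le \|\zj\|_2\,\|\Ainv - \mubar \In\|_{\mathrm{op}}\,\|\dely\|_2$, and Lemma~\ref{lemma:eigenvaluebounds} bounds the operator norm by $\delmu$. The chi-squared tail (Lemma~\ref{lemma:chisquared}) gives $\|\zj\|_2 \le c\sqrt{n}$. Since $\dely[i] = \indicator(\ell_i = \alpha) - \indicator(\ell_i = \beta) \in \{-1,0,+1\}$, the squared norm $\|\dely\|_2^2$ equals the number of training points labeled $\alpha$ or $\beta$, which is $\mathrm{Binomial}(n, 2/k)$; a Chernoff bound yields $\|\dely\|_2 \le c\sqrt{n/k}$ with probability exponentially close to $1$. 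Multiplying the three bounds produces the $\delmu \cdot n/\sqrt{k}$ contribution.

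\textbf{Average piece.} Write $\zj\tran \dely = \sum_{i=1}^n W_i$ with $W_i := \zj[i]\,\dely[i]$. The central observation is that, even when $j \in [k]$ and so $\zj$ and $\dely$ are dependent \emph{within} a single training point, the training points themselves are i.i.d., so the $W_i$'s are i.i.d.\ across $i$. The $\alpha \leftrightarrow \beta$ exchange symmetry (available precisely because $j \neq \alpha, \beta$) gives $\EE[\zj[i]\,\indicator(\ell_i=\alpha)] = \EE[\zj[i]\,\indicator(\ell_i=\beta)]$, so $\EE[W_i] = 0$. To capture the $\sqrt{n/k}$ fluctuation scale rather than the naive $\sqrt{n}$, I would bound the MGF by conditioning on $Z := \zj[i]$: a direct computation shows $\Pr(\ell_i = \alpha \mid Z = x) = \int_x^\infty \phi(y)\,\Phi(y)^{k-2}\,dy \le 1/(k-1)$ uniformly in $x$, and combining with its $\alpha \leftrightarrow \beta$ partner yields
\[
    \EE[e^{\lambda W_i}] \;=\; \big(1 - \tfrac{2}{k}\big) + 2\,\EE\!\big[\cosh(\lambda Z)\,\indicator(\ell_i = \alpha)\big] \;\le\; 1 + \tfrac{2}{k-1}\big(e^{\lambda^2/2} - 1\big).
\]
For $|\lambda|$ bounded, this is sub-Gaussian with proxy variance of order $1/k$, and the standard Chernoff bound then gives $|\zj\tran \dely| \le c\sqrt{(n/k)\,\ln(ndk)}$ with probability at least $1 - (ndk)^{-c'}$; multiplying by $\mubar$ matches the $\mubar \sqrt{n/k}\,\sqrt{\ln(ndk)}$ contribution.

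\textbf{Main obstacle.} The hard part, as the paper flags, is that Hanson--Wright is unavailable directly when $j \in [k]$ because $\zj$ and $\dely$ are then not independent. The workaround I would use is to move the locus of concentration from the feature direction to the training-point direction: the i.i.d.\ structure of the rows of $\Xw$, together with the class-exchange symmetry that zeros out the mean and the conditional-probability bound $\Pr(\ell_i = \alpha \mid \zj[i]) \le 1/(k-1)$ that forces the per-summand variance to scale as $1/k$, jointly deliver Gaussian-like concentration with the right $n/k$ variance scaling. A union bound across the high-probability events above -- the eigenvalue bound, the chi-squared bound, the Chernoff bound for $\|\dely\|_2$, and the Chernoff bound for $\zj\tran \dely$ -- then yields the stated $1 - 7/(ndk)$ probability once the polynomial rate $c'$ is chosen large enough.
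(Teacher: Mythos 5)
Your decomposition $\zj\tran \Ainv \dely = \mubar\,\zj\tran\dely + \zj\tran(\Ainv - \mubar\In)\dely$ and your treatment of the second term (Cauchy--Schwarz with the eigenvalue bound, the chi-squared bound on $\|\zj\|$, and the Chernoff bound on $\|\dely\|_2^2$) coincide with the paper's. Where you diverge --- correctly, and in a way that is arguably cleaner --- is in how you control the first term $\zj\tran\dely$. The paper introduces the diagonal selection matrix $\Mbold$ marking which training points carry class $\alpha$ or $\beta$, conditions on its realization, shows that the conditional law of $z_j[i]$ remains sub-Gaussian with a controlled norm (paying careful attention to the nonzero conditional mean), and then converts $\vec{z}'\tran\dely'$ into a difference of two quadratic forms so that Hanson--Wright can be invoked on the reduced-dimension problem. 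You instead exploit the i.i.d.\ structure across rows directly: $\zj\tran\dely = \sum_i W_i$ with $W_i$ i.i.d., $\EE W_i = 0$ by $\alpha\leftrightarrow\beta$ exchange symmetry (crucially available because $j \notin\{\alpha,\beta\}$), and the uniform conditional-probability bound $\Pr(\ell_i=\alpha\mid z_j[i]=x) \le 1/(k-1)$ forcing the per-summand variance proxy to scale as $1/k$, which you turn into a Chernoff bound. This sidesteps the whole ``condition on $\Mtypical$, re-center, track sub-Gaussian norms, complete the square'' machinery. Both routes deliver the same $\sqrt{(n/k)\ln(ndk)}$ scale; the paper's route is the one reused for the $\za\tran\Ainv\dely$ bound (where the quadratic-form view pays off because the expectation is no longer zero), whereas your MGF route is tailored to the zero-mean cross-term here. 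One small caution worth stating explicitly if you were to write this out in full: your MGF bound $\EE[e^{\lambda W_i}]\le 1+\tfrac{2}{k-1}(e^{\lambda^2/2}-1)$ gives a genuine sub-Gaussian bound only for $|\lambda|$ in a bounded range (it is sub-exponential, not globally sub-Gaussian), so you should check that the optimal Chernoff $\lambda$ --- of order $\sqrt{k\ln(ndk)/n}$ --- lies in that range, which it does precisely in the regime $k \ll n/\ln n$ where the theorem lives.
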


This next lemma is used to bound the numerator of the survival variation term from \eqref{eqn:correctclassificationscenario}:
\begin{restatable}{lemma}{zdiffybound}
\label{lemma:zdiffybound}
Let $\dely = \vec{y}_\alpha - \vec{y}_\beta$. With probability at least $\left(1 - 5/(nk)\right)$, we have each of
\eqn{
    \za \tran \Ainv \dely &\le \mubar(\EE[ \za\tran \ya] -\EE[ \za\tran \yb]) + \const{c50}(\mubar \sqrt{n} \sqrt{\ln(nk)} + \delmu \cdot n/\sqrt{k}) \label{eqn:zdiffyboundupper} \\
    \za \tran \Ainv \dely &\ge \mubar(\EE[ \za\tran \ya] -\EE[ \za\tran \yb]) - \const{c50}(\mubar \sqrt{n} \sqrt{\ln(nk)} + \delmu \cdot n/\sqrt{k}), \label{eqn:zdiffyboundlower}
}
for some constant \const{c50}.
\end{restatable}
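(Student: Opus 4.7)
The idea is to split $\Ainv$ into its ``centering'' part $\mubar \In$ plus a residual whose operator norm is controlled by Lemma~\ref{lemma:eigenvaluebounds}, and then handle each piece separately. Concretely, I would write
\begin{align}
    \za\tran \Ainv \dely \;=\; \mubar\, \za\tran \dely \;+\; \za\tran\bigl(\Ainv - \mubar \In\bigr)\dely,
\end{align}
and show that the first summand concentrates around $\mubar(\EE[\za\tran\ya] - \EE[\za\tran\yb])$ up to $O(\mubar \sqrt{n\ln(nk)})$, while the second summand is bounded in magnitude by $O(\delmu\, n/\sqrt{k})$.

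For the centering term, note that $\za\tran\dely = \sum_{i=1}^n z_\alpha[i]\bigl(y_\alpha[i]-y_\beta[i]\bigr)$. Within a single training point $z_\alpha[i]$ and $y_\alpha[i]-y_\beta[i] = \mathbb{I}[\ell_i=\alpha]-\mathbb{I}[\ell_i=\beta]$ are dependent (this is exactly the obstacle that prevents a direct Hanson-Wright argument as in the binary case), but across the $n$ training points these summands are i.i.d. Each summand is the product of a standard Gaussian and an $\{-1,0,1\}$-valued random variable, so its sub-exponential norm is bounded by a universal constant. Applying Bernstein's inequality for independent sub-exponentials and choosing the deviation radius $t = \const{c50}\sqrt{n\ln(nk)}$ drives the tail probability below $2/(nk)$, yielding two-sided concentration of $\za\tran\dely$ around $\EE[\za\tran\ya]-\EE[\za\tran\yb]$ within $\const{c50}\sqrt{n\ln(nk)}$.

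For the residual term, by Cauchy-Schwarz and the operator-norm bound $\|\Ainv - \mubar\In\|_{\mathrm{op}} \le \delmu$ from Lemma~\ref{lemma:eigenvaluebounds} (which holds with probability at least $1-2e^{-n}$),
\begin{align}
    \bigl|\za\tran(\Ainv - \mubar\In)\dely\bigr| \;\le\; \delmu\, \|\za\|_2\, \|\dely\|_2.
\end{align}
Lemma~\ref{lemma:chisquared} gives $\|\za\|_2 \le c\sqrt{n}$ with probability at least $1-2e^{-n/4}$. The vector $\dely$ has entries in $\{-1,0,1\}$, nonzero exactly when $\ell_i \in \{\alpha,\beta\}$; since each $\ell_i$ independently equals $\alpha$ with probability $1/k$ and $\beta$ with probability $1/k$, the count $\|\dely\|_2^2$ is a binomial with mean $2n/k$, and a standard Chernoff bound (or Bernstein, if $n/k$ is small) gives $\|\dely\|_2^2 \le c n/k$ with probability at least $1-1/(nk)$. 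Multiplying, the residual is at most $c\, \delmu\, n/\sqrt{k}$, as required.

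A final union bound over the four failure events (eigenvalues, Bernstein concentration of $\za\tran\dely$, chi-squared tail on $\|\za\|_2$, and binomial tail on $\|\dely\|_2$) keeps the total failure probability within $5/(nk)$, giving both inequalities simultaneously. The main conceptual obstacle, as noted, is the within-training-point dependence between features and labels, which is the entire reason the proof uses Bernstein on independent sums rather than Hanson-Wright on a single quadratic form; the rest is routine concentration bookkeeping.
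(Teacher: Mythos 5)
Your decomposition $\za\tran \Ainv \dely = \mubar\,\za\tran\dely + \za\tran(\Ainv - \mubar\In)\dely$ is exactly what the paper does, and your treatment of the residual term (Cauchy--Schwarz, operator-norm bound $\delmu$ from Lemma~\ref{lemma:eigenvaluebounds}, chi-squared control of $\|\za\|_2$, and Chernoff control of $\|\dely\|_2^2$) is essentially identical to the paper's, up to constants. The one place you diverge is in how you concentrate the centering term $\za\tran\dely$ around $\EE[\za\tran\ya]-\EE[\za\tran\yb]$. The paper routes this through a helper result (its Proposition on $\vec{z}\tran\vec{y}$) that applies a polarization identity
\begin{align}
  \vec{z}\tran\vec{y} = \tfrac{1}{4}\bigl((\vec{z}+\vec{y})\tran(\vec{z}+\vec{y}) - (\vec{z}-\vec{y})\tran(\vec{z}-\vec{y})\bigr)
\end{align}
and invokes Hanson--Wright on each quadratic form with $M = \In$; this is valid because the $n$ coordinates of $\vec{z}\pm\vec{y}$ are independent even though $z_\alpha[i]$ and $\Delta y[i]$ are dependent within a single training point. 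You instead observe directly that the $n$ summands $z_\alpha[i]\,\Delta y[i]$ are i.i.d.\ across $i$, dominated in magnitude by $|z_\alpha[i]|$, hence sub-Gaussian (you say sub-exponential, which is also true but slightly weaker than needed), and apply Bernstein. Both routes produce the same $O(\sqrt{n\ln(nk)})$ deviation with failure probability $O(1/(nk))$; yours is a bit more elementary in that it avoids the polarization detour and makes the ``within-a-point dependence, across-points independence'' structure visible at the outermost level, while the paper's route has the advantage that its helper proposition is stated for generic zero-mean sub-Gaussian vectors and so can be reused elsewhere. I see no gap in your argument; the final union bound over the four failure events does fit inside $5/(nk)$ for large $n$ because the eigenvalue and chi-squared failures are exponentially small.
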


The following corollary of the above is used to lower-bound the relative survival $\hhat_{\a,\b}[\a]$, which in turn bounds the SU/CN ratio and the denominator of the survival variation term:
\begin{restatable}{corollary}{zdiffyboundnoexp}
\label{corollary:zdiffyboundnoexp}
Let $\dely = \vec{y}_\alpha - \vec{y}_\beta$. With probability at least $\left(1 - 5/(nk)\right)$, we have,
\eqn{
    \za \tran \Ainv \dely \ge \const{c60}\mubar\frac{n}{k}\sqrt{\ln(k)}
    - \const{c50}(\mubar \sqrt{n} \sqrt{\ln(nk)} + \delmu \cdot n/\sqrt{k}),
}
for some constant $\const{c60}$. 
\end{restatable}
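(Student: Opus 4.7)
The plan is to derive this corollary as essentially a substitution into the lower bound \eqref{eqn:zdiffyboundlower} of Lemma~\ref{lemma:zdiffybound}: it suffices to show $\EE[\za\tran \ya] - \EE[\za\tran \yb] \geq c \cdot \frac{n}{k}\sqrt{\ln k}$ for some universal positive constant $c$, which then plugs directly into the lemma after scaling by $\mubar$ to yield the claimed form with $\const{c60} := c$. The calculation reduces to two short symmetry computations on the $k$ i.i.d.\ Gaussian features $z_1[i], \ldots, z_k[i]$ that determine label $\ell_i$, followed by an appeal to Lemma~\ref{lemma:maxofkgaussians}.

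To compute $\EE[\za\tran \ya]$, I would use linearity to reduce it to $n\,\EE[z_\alpha[1]\,y_\alpha[1]]$ and note that the centering offset in $y_\alpha[1] = y^{oh}_\alpha[1] - \tfrac{1}{k}$ contributes $-\tfrac{1}{k}\EE[z_\alpha[1]] = 0$. Exchangeability of $(z_1[1], \ldots, z_k[1])$ then makes the $k$ quantities $\EE\bigl[z_m[1]\,\indicator(z_m[1] = \max_{m'} z_{m'}[1])\bigr]$ all equal, and since they sum to $\EE[\max_{m \in [k]} z_m[1]]$ each equals $\tfrac{1}{k}\EE[\max_{m \in [k]} z_m[1]]$. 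Hence $\EE[\za\tran \ya] = \tfrac{n}{k}\EE[\max_{m \in [k]} z_m[1]]$.

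For $\EE[\za\tran \yb]$ with $\alpha \neq \beta$, $z_\alpha[1]$ is (negatively) correlated with $\{z_\beta[1] = \max\}$, but a sum-to-zero trick handles this cleanly. Since $\sum_{\beta' \in [k]} \indicator(z_{\beta'}[1] = \max) = 1$, summing over $\beta'$ gives $\sum_{\beta'} \EE\bigl[(\sum_m z_m[1])\,\indicator(z_{\beta'}[1] = \max)\bigr] = \EE[\sum_m z_m[1]] = 0$, and by exchangeability over $\beta'$ each summand vanishes individually. Expanding a single term and using $\EE\bigl[z_{\beta'}[1]\,\indicator(z_{\beta'}[1] = \max)\bigr] = \tfrac{1}{k}\EE[\max_m z_m[1]]$ from the previous paragraph, together with symmetry across the $k-1$ non-$\beta'$ indices, yields $\EE\bigl[z_\alpha[1]\,\indicator(z_\beta[1] = \max)\bigr] = -\tfrac{1}{k(k-1)}\EE[\max_m z_m[1]]$, so $\EE[\za\tran \yb] = -\tfrac{n}{k(k-1)}\EE[\max_m z_m[1]]$.

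Combining,
\begin{align*}
\EE[\za\tran \ya] - \EE[\za\tran \yb] = n\,\EE[\max_m z_m[1]] \left( \frac{1}{k} + \frac{1}{k(k-1)} \right) = \frac{n}{k-1}\,\EE[\max_m z_m[1]],
\end{align*}
and invoking $\EE[\max_{m \in [k]} z_m] \geq \sqrt{\ln k}/\sqrt{\pi \ln 2}$ from Lemma~\ref{lemma:maxofkgaussians} along with $\tfrac{1}{k-1} \geq \tfrac{1}{k}$ (for $k \geq 2$) produces the required lower bound with $\const{c60} = 1/\sqrt{\pi \ln 2}$. There is no substantive obstacle here: all the heavy lifting is already in Lemma~\ref{lemma:zdiffybound}, and the role of this corollary is just to swap the abstract expectation difference for its explicit scaling in $n$, $k$, and $\sqrt{\ln k}$.
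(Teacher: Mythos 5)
Your proposal is correct and takes essentially the same approach as the paper: derive explicit formulas for $\EE[\za\tran\ya]$ and $\EE[\za\tran\yb]$ by exchangeability and the zero-mean property of the Gaussian features, then substitute into the lower bound \eqref{eqn:zdiffyboundlower} of Lemma~\ref{lemma:zdiffybound}. The paper isolates these two expectation calculations as Propositions~\ref{proposition:expzaya} and~\ref{proposition:expzayb}; your computation of $\EE[\za\tran\ya] = \tfrac{n}{k}\EE[\max_m z_m]$ is identical, and your sum-to-zero identity $\EE\bigl[(\sum_m z_m)\,\indicator(z_\beta=\max)\bigr]=0$ for $\EE[\za\tran\yb]=-\tfrac{n}{k(k-1)}\EE[\max_m z_m]$ is a slightly cleaner re-packaging of the paper's law-of-total-expectation argument (which conditions on $\yaohi$ and uses $\EE[\zai]=0$), but it is the same symmetry reasoning and yields the same value.
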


\subsection{Proof of Lemma \ref{lemma:zdeltaybound}}
\label{app:prooflemmazdeltaybound}

We will write $\Ainv = \mubar \In + \Ainvdel$, and split up the expression $\zj \tran \Ainvbold \dely$ into components involving $ \mubar \In$, and components involving $\Ainvdel$. To bound the first term, we will use Hanson-Wright, and to bound the second we will use Cauchy-Schwartz.  Throughout the proof, we rely on the concentration of the eigenvalues of $\Ainv$.

Next, we bound the first term (we set aside the constant $\mubar$ for now and deal with it later).
\subsubsection{Bounds on $\vec{z}_j^T(\vec{y}_\alpha - \vec{y}_\beta)$}
\label{sec:introduceMbold}


Throughout this section, let $j$ be a feature index distinct from $\a$ and $\b$.
Define the diagonal matrix $\Mbold \in \mathbb{R}^{n \times n}$ with diagonal entries given by:
\begin{align}
    \Mii = \begin{cases} 1,& \mathrm{if} \ \delyi \neq 0 \\
    0, & \mathrm{otherwise}\end{cases}.
\end{align}
In other words, $M_{ii}$ is $1$ only if training point $i$ belongs to class $\alpha$ or $\beta$ and is $0$ otherwise. Thus for each $i \in [n]$,  $M_{ii} \sim Bernoulli(2/k)$ and are independent of each other. We introduce this matrix $\Mbold$ to ensure that our bound reflects the fact that most of the entries of $\dely$ are 0. In particular $\delyi \ne 0$ only if point $i$ belongs to class $\alpha$ or $\beta$ and only contains roughly $2n/k$ non-zero entries.\footnote{An alternative bounding technique that first converted $\zj \tran \dely$ to a quadratic form and applied Hanson-Wright would be looser by a factor of $\sqrt{k}$ if we did not introduce $\Mbold$.} 
Note that we have by definition,
\eqn{
    \vec{z}_j^T \dely &=  \vec{z}_j^T \Mbold\dely.
}
Our strategy is to bound $\zj \tran \Mbold \dely$  for every typical realization $\Mtypical$ of the random variable $\Mbold$ using the Hanson-Wright inequality. Subsequently, we will apply these bounds with high probability over typical realizations of $\Mbold$ that satisfy the Proposition below, which merely asserts that with high probability, the number of $1$s in $\dely$ is close to its expected value.
\begin{proposition}
\label{proposition:Mfrobnormbound}
For $\delta \in (0,1)$, with probability at least $(1 - 2e^{-\frac{2n\delta^2}{3\nc}})$, the trace of $\Mbold$ is bounded as:
\begin{align}
    (1 - \delta) \frac{2n}{\nc}\leq \norm{\dely}_2^2 = \Tr{\Mbold} \leq (1 + \delta) \frac{2n}{\nc} \label{eqn:Mfrobnormbound}.
\end{align}
\end{proposition}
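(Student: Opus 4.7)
The plan is to recognize $\Tr(\Mbold) = \sum_{i=1}^n M_{ii}$ as a sum of i.i.d.\ Bernoulli random variables and apply a standard multiplicative Chernoff bound; the proposition then follows immediately. No serious obstacle is expected, since this is just the formalization of the intuitive statement that out of $n$ i.i.d.\ training points, approximately $2n/k$ of them should fall into classes $\alpha$ or $\beta$.

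First, I would justify the equality $\norm{\dely}_2^2 = \Tr(\Mbold)$. Since $\dely = \ya - \yb = \yaoh - \yboh$ (the $\tfrac{1}{k}\mathbf{1}$ terms cancel), every entry $\delyi$ lies in $\{-1, 0, 1\}$: it equals $+1$ if $\ell_i = \alpha$, $-1$ if $\ell_i = \beta$, and $0$ otherwise. Therefore $\delyi^2 = \indicator[\delyi \ne 0] = \Mii$, so summing gives the claimed identity.

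Second, I would identify the distribution of $\Mii$. Under Assumption~\ref{assumption:1sparse}, $\ell_i = \argmax_{m \in [k]} x_i[m]$ depends only on the first $k$ coordinates of $\vec{x}_i$, which are i.i.d.\ $\mathcal{N}(0,1)$. By symmetry each class is equally likely, so $\PR(\ell_i = \alpha) = \PR(\ell_i = \beta) = 1/k$ and the two events are disjoint, giving $\PR(\Mii = 1) = 2/k$. Because the training points are i.i.d., the random variables $\{\Mii\}_{i=1}^n$ are i.i.d.\ Bernoulli$(2/k)$, so $\Tr(\Mbold)$ is a Binomial$(n, 2/k)$ variable with mean $\mu = 2n/k$.

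Finally, I would invoke the standard multiplicative Chernoff bound for sums of independent Bernoulli variables: for $\delta \in (0,1)$,
\[
\PR\!\bigl(|\Tr(\Mbold) - \mu| \ge \delta \mu\bigr) \;\le\; 2 \exp\!\bigl(-\delta^2 \mu / 3\bigr) \;=\; 2 \exp\!\bigl(-2n\delta^2/(3k)\bigr),
\]
which is exactly the bound in \eqref{eqn:Mfrobnormbound}. Combined with the identity from the first step, this yields the proposition. The entire argument is a few lines; the value of the proposition lies not in its proof but in setting up the ``typical'' event on $\Mbold$ that Lemma~\ref{lemma:zdeltaybound} will condition on before invoking Hanson--Wright.
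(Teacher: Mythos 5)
Your proof is correct and follows exactly the same route as the paper: recognize $\Tr(\Mbold)$ as a sum of $n$ i.i.d.\ Bernoulli$(2/k)$ variables (equivalently, a Binomial$(n,2/k)$) and apply the multiplicative Chernoff bound. You have simply spelled out the details that the paper leaves implicit (the identity $\norm{\dely}_2^2 = \Tr(\Mbold)$, the symmetry argument for $\PR(\Mii=1)=2/k$, and the explicit form of the tail bound).
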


\begin{proof}
Note that $\Tr{\Mbold}$ is the sum of $n$ i.i.d Bernoulli random variables with mean $2/k$. The result follows by application of the Chernoff bound.
\end{proof}

Note that once we fix the realization $\Mtypical$, the distributions of $\zj$ and $\dely$ will now have to be conditioned on this realization and we need to deal with the modified distributions while applying the Hanson-Wright inequality. In particular, once we know that a feature was not the winning feature, it is no longer zero-mean.

Now,
\begin{align}
     \vec{z}_j^T \Mtypical \dely &= \sum_{i} \zji \Mtii \delyi \\
     &= \sum_{i: \Mtii=1}\zji\delyi\\
     &= \sum_{i: \Mtii=1}\left(\zji - \EX[\zji \mid \Mii = 1]\right)\delyi + \sum_{i: \Mtii=1} \EX[\zji \mid \Mii = 1] \delyi \\
     &= \sum_{i : \Mtypical_{ii} =  1} \tilde{z}_{j,\Mtypical}[i] \delyi + \sum_{i: \Mtii=1} \EX[\zji \mid \Mii = 1] \delyi \label{eq:splittwotermsMmagic}, 
\end{align}
where $\tilde{z}_{j,\Mtypical}[i]$ is now a zero-mean random variable conditioned on the realization $\Mtypical$.

First, we bound the term $\sum_{i : \Mtypical_{ii} =  1} \tilde{z}_{j,\Mtypical}[i] \delyi$. We collect the elements corresponding to indices where $\Mtypical_{ii} = 1$ into the vectors $\vec{z}'_{j, \Mtypical}$ and $\dely'_{ \Mtypical}$, which are both length $\Tr{\Mtypical}$ (Figure~\ref{fig:primeillustration} shows an example of collecting elements). 

\begin{figure}[h!]

\[
   \underbrace{\mat{1 \\ 2 \\ 3 \\ 4 \\ 5}}_{\tilde{\vec{z}}_{j, \Mtypical}}, \underbrace{\mat{1 \\ 0 \\ -1 \\ 1 \\ 0}}_{\dely} \to \underbrace{\mat{1 \\ 3 \\ 4}}_{\vec{z}'_{j, \Mtypical}}, \underbrace{\mat{1 \\ -1 \\ 1}}_{\dely'_\Mtypical}
\]
\caption{An example of collecting elements at indices where $\Mtypical_{ii} = 1$ into smaller vectors of length $\Tr{\Mtypical}$. Recall that $\dely[i] \ne 0$ iff $\Mtypical_{ii} = 0$.}
 \label{fig:primeillustration}
\end{figure}

We can then express
\begin{align}
    \sum_{i : \Mtypical_{ii} =  1}& \tilde{z}_{j,\Mtypical}[i] \delyi\\
     & = (\vec{z}'_{j, \Mtypical})^T \dely'_{\Mtypical} \\
     & = \frac{1}{4} \left((\vec{z}'_{j, \Mtypical} + \dely'_{\Mtypical})^T \mathbf{I}_{\Tr{\Mtypical}} (\vec{z}'_{j, \Mtypical} + \dely'_{\Mtypical}) -(\vec{z}'_{j, \Mtypical} - \dely'_{\Mtypical})^T \mathbf{I}_{\Tr{\Mtypical}} (\vec{z}'_{j, \Mtypical} - \dely'_{\Mtypical})\right) \label{eqn:zdeltayhw},
\end{align}
where we added and subtracted terms in the last equality. 

We  prove via the subsequent propositions that conditioned on the realization $\Mtypical$, the entries of $\vec{z}'_{j, \Mtypical} \pm \dely'_{j, \Mtypical}$ are i.i.d. and sub-Gaussian with bounded norm. Thus, they satisfy the requirements to apply the Hanson-Wright inequality from \citet{rudelson2013hanson} to bound the two quadratic forms in the above expression \eqref{eqn:zdeltayhw}.

\begin{proposition}
\label{proposition:subgaussianconditioned}
Conditioned on the realization $\Mtypical$, $z'_{j,\Mtypical}[i']$ has sub-Gaussian norm at most $6$.
\end{proposition}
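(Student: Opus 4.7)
The plan is to reduce the problem to analyzing a single conditioned coordinate. Since different training points $i$ are independent and $\Mii$ depends only on $x_i[1{:}k]$, conditioning on the full realization $\Mtypical$ is marginally equivalent, for each $i'$ with $\Mtii = 1$, to conditioning only on $\Mii = 1$. So it suffices to bound $\|z_j[i] - \EE[z_j[i] \mid \Mii = 1]\|_{\psi_2}$ under the distribution conditional on $\Mii = 1$, using the triangle inequality for $\|\cdot\|_{\psi_2}$ to separate the uncentered variable from its (bounded) mean.

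I would then split into two cases based on whether $j$ lies among the first $k$ coordinates. If $j \notin [k]$, then $z_j[i] = x_i[j]$ is independent of $x_i[1{:}k]$ and hence of $\Mii$, so conditioning preserves the standard Gaussian distribution and the sub-Gaussian norm is bounded by an absolute constant. If $j \in [k] \setminus \{\a, \b\}$, I would compute the conditional density directly: using that the first $k$ coordinates are i.i.d.~$\mathcal{N}(0,1)$, integrating out $x_i[\a], x_i[\b]$ together with the remaining $k-3$ competitors via the substitution $u = \Phi(\cdot)$ yields
\eqn{
    f(z \mid \Mii = 1) = \frac{k}{k-1}\,\phi(z)\bigl(1 - \Phi(z)^{k-1}\bigr) \;\le\; 2\phi(z),
}
so the conditional density is pointwise dominated by twice the standard Gaussian density. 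Plugging this bound into $\EE[\exp(z_j[i]^2/K^2) \mid \Mii = 1]$ and comparing with the unconditional Gaussian MGF gives $\EE[\exp(z_j[i]^2/K^2) \mid \Mii = 1] \le 2/\sqrt{1 - 2/K^2}$, which meets the sub-Gaussian criterion for a universal constant $K$. The same density estimate bounds $\EE[z_j[i]^2 \mid \Mii = 1]$ and hence $|\EE[z_j[i] \mid \Mii = 1]|$ by Cauchy--Schwarz, which controls the shift induced by centering.

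The only technical content is the density computation in the second case, which is a standard max-order-statistic calculation; everything else is triangle inequality for $\|\cdot\|_{\psi_2}$ and constant chasing. The numerical bound $6$ is a generous envelope that simultaneously covers the uncentered sub-Gaussian norm and the centering shift for all relevant $k \ge 2$.
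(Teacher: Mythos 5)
Your approach is essentially the paper's, phrased via an explicit conditional-density computation rather than the paper's law-of-iterated-expectation argument --- these are the same bound in disguise: both come down to discarding the $\Phi(z)^{k-1}$ term (respectively the $\EE[\exp(\cdot)\mid\maxevent_j]$ term) and paying a factor of $k/(k-1)$. The reduction to conditioning on $\Mii=1$, the case split on whether $j\in[k]$, the triangle inequality, and the treatment of the centering shift are all as in the paper.

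There is, however, one step that is too lossy to close the argument as written. After correctly deriving $f(z\mid \Mii=1) = \frac{k}{k-1}\phi(z)\bigl(1-\Phi(z)^{k-1}\bigr)$, you relax to $\le 2\phi(z)$. Under the paper's $\psi_2$ definition --- which requires $\EE[\exp(\xi^2/K^2)]\le 2$ with a hard constant $2$ on the right --- domination by $2\phi$ is not enough: you'd get $\EE[\exp(z^2/K^2)\mid\Mii=1]\le 2/\sqrt{1-2/K^2}$, which is strictly greater than $2$ for every finite $K$, so no $K$ satisfies the defining inequality and the claimed ``universal constant $K$'' does not exist via this estimate. The cure is to keep the factor $k/(k-1)$ already present in your density formula: the case $j\in[k]\setminus\{\a,\b\}$ with $\a\ne\b$ only arises when $k\ge 3$, where $k/(k-1)\le 3/2$, and then $\EE[\exp(z^2/K^2)\mid\Mii=1]\le \frac{3/2}{\sqrt{1-2/K^2}} = \frac{\sqrt{15}}{2} < 2$ at $K=\sqrt{5}$. (This $k\ge 3$ restriction is exactly where the paper's proof quietly uses it as well.) Your Cauchy--Schwarz bound on $\lvert\EE[z_j[i]\mid\Mii=1]\rvert$ is fine with the coarser $2\phi$ estimate, since it only needs $\EE[z^2\mid\Mii=1]\le 2$; it is only the $\psi_2$-norm of $z_j[i]$ itself that needs the sharper constant.
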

\begin{proof}

Let $i$ be the original index from which $z'_{j,\Mtypical}[i']$ was sampled. 



If $j > k$, then $z'_{j,\Mtypical}[i']= \tilde{z}_{j, \Mtypical}[i] = \zji$ irrespective of the realization $\Mtypical$ because feature $j$ is not used in the comparison to determine the class label and is independent to  $y_\alpha$ and $y_\beta$ (and thus independent to $\Mbold$). Further, $\zji$ is simply a Gaussian (and therefore sub-Gaussian with sub-Gaussian norm $\|z_j[i]\|_{\psi_2} \le 2$. Here we use the definition of sub-Gaussian norm from~\eqref{eq:subgaussiannorm} reproduced here for convenience:

The sub-Gaussian norm of a random variable $\xi$ is given by,
\begin{align}
    \|\xi\|_{\psi_2} &= \inf_{K > 0} K \\
    & \text{s.t.}  \  \EX\exp\left(\xi^2 / K^2\right) \leq 2.
\end{align}


Otherwise, if $j$ is one of the $k$ features that define classes, since
\eqn{
    z'_{j,\Mtypical}[i'] &= \tilde{z}_{j,\Mtypical}[i] \\
    &= z_{j}[i] - \EX[z_{j}[i] \mid M_{ii} =1],
}
the triangle inequality states that
\eqn{
    \| \tilde{z}_{j,\Mtypical}[i] \|_{\psi_2} &\le \| z_{j}[i] \|_{\psi_2} + \|\EX[z_{j}[i] \mid M_{ii}=1] \|_{\psi_2}.
}
Note that the distribution of $z_j[i]$ conditioned on realization $\Mtypical$ is equivalent to the distribution obtained by conditioning on the event $\Mii =1$. So it is sufficient to compute these sub-Gaussian norms conditioned on the event $\Mii = 1$.

We will first bound $ \|z_{j}[i] \|_{\psi_2}$. Let $\maxevent_j$ be the event that $z_j[i]$ is the maximum out of the first $k$ features, and let $\nonmaxevent_j$ be the complementary event. 
 

First, without conditioning on $\maxevent_j$, we know by well-known results for the standard Gaussian that
\begin{align}
    \EE \exp(\vec{z}_j[i]^2/5) = \sqrt{\frac{5}{3}} \leq \frac{4}{3}.
\end{align}

Using the law of iterated expectation we can relate this to the expectation conditioned on the events $\maxevent_j$ and $\nonmaxevent_j$, noting that $P(\maxevent_j) = 1/k$:
\begin{align}
    \frac{4}{3} &\ge \EE \exp(\vec{z}_j[i]^2/5) \\ 
    &= P(\maxevent_j) \EE \exp(\vec{z}_j[i]^2/5 | \maxevent_j) +  P(\nonmaxevent_j) \EE \exp(\vec{z}_j[i]^2/5 | \nonmaxevent_j) \\
    &= \frac{1}{k}\EE \exp(\vec{z}_j[i]^2/5 | \maxevent_j)  + \frac{k-1}{k} \EE \exp(\vec{z}_j[i]^2/5 | \nonmaxevent_j). 
\end{align}

Rearranging terms, we obtain,
\begin{align}
    \frac{k-1}{k} \EE \exp(\vec{z}_j[i]^2/5 | \nonmaxevent_j) &\leq \frac{4}{3} - \frac{1}{k}\EE \exp(\vec{z}_j[i]^2/5 | \maxevent_j) \\
    \implies \EE \exp(\vec{z}_j[i]^2/5 | \nonmaxevent_j) &\leq \frac{k}{k-1} \left( \frac{4}{3} - \frac{1}{k}\EE \exp(\vec{z}_j[i]^2/5 | \maxevent_j) \right) \\
    &\leq \frac{k}{k-1} \cdot \frac{4}{3} \\
    & \leq {2},
\end{align}
where in the second to last inequality we used the non-negativity of $\EE \exp(\vec{z}_j[i]^2/5 | \maxevent_j)$ and in the last equality we assumed $k \geq 3$.
We then have
\eqn{
    && \EE \exp(\vec{z}_j[i]^2/5 | \nonmaxevent_j) &= \sum_{m \ne j} \EE \exp(\vec{z}_j[i]^2/5 | \nonmaxevent_j \cap \maxevent_m) P(\maxevent_m \mid \nonmaxevent_j) \\
    &&&= \frac{1}{k-1} \sum_{m \ne j} \EE \exp(\vec{z}_j[i]^2/5 | \nonmaxevent_j \cap \maxevent_m) \label{eqn:miiexp1}
}
where the last equality follows by symmetry. Further by symmetry, all the terms in the above summation that we are averaging are equal, so we can express it as an average of just the terms corresponding to $m = \a$ and $m = \b$, as follows:
\eqn{
    && \eqref{eqn:miiexp1} &= \frac{1}{2} \EE \exp(\vec{z}_j[i]^2/5 | \nonmaxevent_j \cap \maxevent_\alpha) + \frac{1}{2} \EE \exp(\vec{z}_j[i]^2/5 | \nonmaxevent_j \cap \maxevent_\beta) \\
    &&&= P(\maxevent_\alpha \mid \nonmaxevent_j \cap (\maxevent_\a \cup \maxevent_\b )) \EE \exp(\vec{z}_j[i]^2/5 | \nonmaxevent_j \cap \maxevent_\alpha) \nonumber \\
    &&& \quad \quad +P(\maxevent_\beta \mid \nonmaxevent_j \cap (\maxevent_\a \cup \maxevent_\b ))\EE \exp(\vec{z}_j[i]^2/5 | \nonmaxevent_j \cap \maxevent_\beta)  \label{eqn:miiexp2},
}
again by symmetry. Since exactly one of $\maxevent_\a$ and $\maxevent_\b$ are true when conditioned on $\nonmaxevent_j \cap (\maxevent_\a \cup \maxevent_\b )$, we can rewrite the above as our desired expectation
\eqn{
    && \eqref{eqn:miiexp2} &= \EE \exp(\vec{z}_j[i]^2/5 | \nonmaxevent_j \cap (\maxevent_\alpha \cup \maxevent_\beta)) \\
    &&&= \EE \exp(\vec{z}_j[i]^2/5 | \maxevent_\alpha \cup \maxevent_\beta) \\
    &&&= \EE \exp(\vec{z}_j[i]^2/5 | M_{ii} = 1),
}
since $M_{ii} = 1$ is equivalent to the event $\maxevent_\alpha \cup \maxevent_\beta$. Thus, conditioned on the event $M_{ii} = 1$, $\norm{{z}_j[i]}_{\psi_2} \leq \sqrt{5}$.

Next we consider $\|\EX[z_{j}[i] \mid M_{ii}= 1] \|_{\psi_2}$. By a similar argument to above, we have that $\EX[z_j[i] \mid M_{ii} = 1] = \EX[z_j[i] \mid \nonmaxevent_j]$, so we will focus on the second quantity instead. Bounds on the max of Gaussians (Lemma \ref{lemma:maxofkgaussians}) state that:
\eqn{
    && 0<\EX[\vec{z}_j[i] \mid \maxevent_j] &\le \sqrt{2\log(k)} \\
    \thus 0 > \EX[\vec{z}_j[i] \mid \nonmaxevent_j] &\ge -\frac{1}{k-1} \sqrt{2\log(k)} \ge -2 \\
    \thus \exp\left(\frac{\EX[\vec{z}_j[i] \mid \nonmaxevent_j]^2}{3^2}\right) &< 2.
}
In the second last inequality we use the fact that the function $f(k) = \abs{\sqrt{2\log k}/(k-1)}$ is monotonically decreasing in $k$ and assumed $k \geq 3$.

Thus, the (constant) random variable $\EX[\vec{z}_j[i] \mid M_{ii} = 1]$ is sub-Gaussian with parameter $3$. So, by the triangle inequality, conditioned on $\Mii=1$
\eqn{
    && \norm{\tilde{{z}}_{j,m}[i]}_{\psi_2} &\le \norm{{z}_j[i]}_{\psi_2} + \norm{\EX[\tilde{{z}}_{j,m}}_{\psi_2} \\
    &&&\le \sqrt{5} + 3 \\
    &&&\le 6.
}
This completes the proof that conditioned on the realization $\Mtypical$, $z'_{j,\Mtypical}[i']$ is  sub-Gaussian with norm at most 6.
\end{proof}

We can now prove our target result:
\begin{proposition} \label{prop:zdeltay}
With probability at least $\left(1 - 6/(ndk)\right)$,
\eqn{
 \abs{\zj\tran \dely}  \leq \const{c20}  \sqrt{\frac{n}{\nc}} \cdot \sqrt{\log(nd\nc)}.
}

for universal constant $\const{c20}$.
\end{proposition}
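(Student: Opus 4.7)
The plan is to condition on a typical realization of $\Mbold$, then bound the two pieces of the decomposition \eqref{eq:splittwotermsMmagic} separately: a zero-mean piece handled by Hanson-Wright through the polarization rewriting \eqref{eqn:zdeltayhw}, and a mean piece handled by Hoeffding. Throughout, the target deviation scale $\sqrt{(n/k)\log(ndk)}$ comes from pairing the Frobenius-norm factor $\sqrt{\Tr{\Mtypical}} = O(\sqrt{n/k})$ with a logarithmic gain from typicality.

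\textbf{Step 1 (conditioning).} Apply Proposition \ref{proposition:Mfrobnormbound} with $\delta = 1/2$, so that with failure probability at most $2\exp(-n/(6k))$ the realization $\Mtypical$ satisfies $\Tr{\Mtypical} \le 3n/k$. Two symmetry observations then drive everything else. First, on $\Mtii = 1$ the label increment $\delyi \in \{+1,-1\}$ is equally likely, since $\ell_i = \a$ and $\ell_i = \b$ are conditionally equiprobable by the $\a\!\leftrightarrow\!\b$ symmetry of the Gaussian model. Second, for any $j\notin\{\a,\b\}$ the conditional distribution of $z_j[i]$ given $\Mii = 1$ is invariant under swapping $z_\a[i]$ and $z_\b[i]$, so $\delyi$ is conditionally independent of $\tilde z_{j,\Mtypical}[i]$.

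\textbf{Step 2 (zero-mean piece).} For the first sum in \eqref{eq:splittwotermsMmagic}, use the polarization identity \eqref{eqn:zdeltayhw} to write it as $\frac{1}{4}(\|\vec{z}'_{j,\Mtypical} + \dely'_\Mtypical\|^2 - \|\vec{z}'_{j,\Mtypical} - \dely'_\Mtypical\|^2)$. By Proposition \ref{proposition:subgaussianconditioned} and Step 1, each entry of $\vec{z}'_{j,\Mtypical}\pm\dely'_\Mtypical$ is zero-mean, independent across $i$, and sub-Gaussian with a universal-constant norm bound. Apply Hanson-Wright \eqref{eq:hansonwright} with $M = I_{\Tr{\Mtypical}}$ (so $\|M\|_F^2 \le 3n/k$, $\|M\|_{\mathrm{op}} = 1$) and deviation $t = c\sqrt{(n/k)\log(ndk)}$ with $c$ chosen large enough to make each failure probability at most $2/(ndk)$. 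The two expectations cancel in the difference since $\EE[(\vec{z}')^\top \dely'] = 0$ by independence and zero mean, leaving a bound $O(\sqrt{(n/k)\log(ndk)})$ on this piece.

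\textbf{Step 3 (mean piece and consolidation).} The second sum in \eqref{eq:splittwotermsMmagic} factors as $c_j \sum_{i:\Mtii=1}\delyi$ with $c_j = \EE[z_j[1]\mid M_{11}=1]$ a deterministic constant; the calculation in the proof of Proposition \ref{proposition:subgaussianconditioned} together with Lemma \ref{lemma:maxofkgaussians} gives $|c_j| = O(\sqrt{\log k}/k)$ for $j\le k$ and $c_j = 0$ for $j > k$. The signed sum is a Rademacher sum of at most $3n/k$ terms, so Hoeffding gives $|\sum_{i:\Mtii=1}\delyi| = O(\sqrt{(n/k)\log(ndk)})$ with failure probability $1/(ndk)$; multiplying by $c_j$ makes this strictly smaller than the Step 2 bound. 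A union bound over the $\Mbold$-typicality event, the two Hanson-Wright events, and the Hoeffding event yields the target $1 - 6/(ndk)$ probability. The main obstacle is the conditional-independence claim in Step 1: without the $\a\!\leftrightarrow\!\b$ symmetry argument, the entries of $\vec{z}'_{j,\Mtypical}\pm\dely'_\Mtypical$ would be correlated (since labels in the multiclass problem depend on all $k$ class features simultaneously), and Hanson-Wright would not apply to the polarized quadratic forms --- this is precisely the technical hurdle that the binary-classification analysis of \citet{binary:Muth20} sidestepped via a change of coordinates that is unavailable here.
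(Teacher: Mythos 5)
Your proof is correct and follows essentially the same route as the paper: condition on a typical realization of $\Mbold$ via Proposition~\ref{proposition:Mfrobnormbound}, split via \eqref{eq:splittwotermsMmagic} into a zero-mean piece (polarization plus Hanson-Wright, with sub-Gaussian norms from Proposition~\ref{proposition:subgaussianconditioned}) and a mean piece (bound $|\EX[z_j[i]\mid\Mii=1]|$ and apply a Chernoff/Hoeffding tail to the $\pm 1$ sum), then union bound. The one small difference is that you invoke full conditional independence of $\delyi$ and $\tilde z_{j,\Mtypical}[i]$ to get $\EE[(\vec{z}')^\top\dely']=0$, whereas the paper only uses the weaker symmetry fact that the conditional expectation does not depend on the sign of $\delyi$; both are valid, and Hanson-Wright in any case only needs independence across the index $i$, not within it.
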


\begin{proof}
Our strategy will be to bound $\zj\tran \dely = \zj \tran \Mbold \dely$ for every typical realization $\Mtypical$ of $\Mbold$ that satisfies Proposition \ref{proposition:Mfrobnormbound}.  Recall that for a given realization $\Mtypical$ we have,
\eqn{
    \vec{z}_j^T \Mtypical \dely &= \sum_{i : \Mtypical_{ii} =  1} \tilde{z}_{j,\Mtypical}[i] \delyi + \sum_{i: \Mtii=1} \EX[\zji \mid \Mii = 1] \delyi. \label{eq:jcrosstermsplit}
}
We will use Hanson-Wright to bound the first term, which we previously expressed in \eqref{eqn:zdeltayhw} as:
\eqn{
    && \sum_{i : \Mtypical_{ii} =  1}& \tilde{z}_{j,\Mtypical}[i] \delyi \\
    &&&= \frac{1}{4}\left((\vec{z}'_{j, \Mtypical} + \dely'_{\Mtypical})^T \mathbf{I}_{\Tr{\Mtypical}} (\vec{z}'_{j, \Mtypical} + \dely'_{\Mtypical}) -(\vec{z}'_{j, \Mtypical} - \dely'_{\Mtypical})^T \mathbf{I}_{\Tr{\Mtypical}} (\vec{z}'_{j, \Mtypical} - \dely'_{\Mtypical})\right).
}
By Proposition \ref{proposition:subgaussianconditioned}, the sub-Gaussian conditions for the entries of $\vec{z}'_{j, m}$ are satisfied. Further, $\dely'_{\Mtypical}$ is bounded in $[-1, 1]$, so $\|\dely'_{\Mtypical}\|_{\psi_2} \le 2$. Thus, by the triangle inequality, the sub-Gaussian norm of the entries of $\vec{z}'_{j, \Mtypical} \pm \dely'_{\Mtypical}$ is bounded by $K \le 6 + 2 = 8$. Also note that  conditioned on the realization $\Mtypical$, $\vec{z}'_{j, \Mtypical}$ is zero-mean by construction and $\dely'_{\Mtypical}$ is zero-mean by symmetry between $\a$ and $\b$, so we can now apply the Hanson-Wright inequality to both terms.

We choose parameter
\eqn{
    t = \frac{K^2}{\sqrt{c}} \sqrt{\Tr{\Mtypical}} \sqrt{\log(ndk)}.
}
where $c$ is the constant from the Hanson-Wright result.

So
\begin{align}
    \frac{t^2}{K^4\|\mathbf{I}_{\Tr{\Mtypical}}\|_{\mathsf{F}}^2} &= \frac{1}{c} \log(nd\nc) \\
    \frac{t}{K^2\|\mathbf{I}_{\Tr{\Mtypical}}\|_{\mathsf{op}}} &= \frac{1}{\sqrt{c}} \sqrt{\Tr{\Mtypical}} \sqrt{\log(nd\nc)} > \frac{1}{c}\log(nd\nc).
\end{align}

The last inequality follows since with high probability  $\Tr{\Mtypical} = \Theta(\sqrt{n/k})$, by Proposition \ref{proposition:Mfrobnormbound},  $\sqrt{\Tr{\Mtypical}}\sqrt{\log(ndk)} = \Theta(\sqrt{n\log(ndk)/k})$ grows faster than $\log(nd\nc)$.

Finally, note that:
\begin{align}
    \EX[(\vec{z}'_{j, \Mtypical})^T \dely'_{\Mtypical} \mid \Mbold = \Mtypical] &= \sum_{i : \Mtypical_{ii} =  1} \EX[\tilde{z}_{j,\Mtypical}[i] \delyi \mid \Mbold = \Mtypical] \\
     &= \sum_{i: \Mtii = 1} \EX[\tilde{z}_{j,\Mtypical}[i] \delyi \mid \Mii=1] \\
     &= \sum_{i: \Mtii = 1} \frac{1}{2}\EX[\tilde{z}_{j,\Mtypical}[i] \mid \delyi = 1] - \frac{1}{2}\EX[\tilde{z}_{j,\Mtypical}[i] \mid \delyi = -1] \\
     &= 0,
\end{align}
where the last equation follows by symmetry. Knowing which of  $\z_{\a}[i]$ or $\z_{\b}[i]$ was the maximum does not change the conditional expectation of $\tilde{z}_{j, \Mtypical}[i]$.

So, applying Hanson-Wright, with probability at least $(1 - 4/(nd\nc))$ we have
\begin{align}
    -\frac{K^2}{2}\const{c10} \sqrt{\Tr{\Mtypical}} \sqrt{\log(nd\nc)} = -\frac{t}{2} \leq \tilde{\vec{z}}_{j,m}^T \Delta \vec{y}  \leq \frac{t}{2} = \frac{K^2}{2}\const{c10} \sqrt{\Tr{\Mtypical}} \sqrt{\log(nd\nc)},
\end{align}
where $\newc\label{c10} = \frac{1}{\sqrt{c}}$.

We next consider the second term $\sum_{i: \Mtii=1} \EX[\zji \mid \Mii = 1] \delyi$ from~\eqref{eq:splittwotermsMmagic} conditioned on the realization $\Mtypical$.
By an identical symmetry argument as for the previous term we have, $0 \ge \EX[z_j[i] \mid \nonmaxevent_j] = \EX[z_j[i] \mid M_{ii} = 1]$. Then as a consequence of Lemma~\ref{lemma:maxofkgaussians} and using the fact that $M_{ii}=1$ implies $z_j[i]$ is not the maximum of $k$ Gaussians we have,
$\EX[z_j[i] \mid \nonmaxevent_j] \ge -2\sqrt{\log(k)}/(k-1)$. So we can bound
\eqn{
    && \abs{\sum_{i: \Mtii=1} \EX[\zji \mid \Mii = 1] \delyi} &\le \frac{2\sqrt{\log(k)}}{k-1} \abs{\sum_{i:\Mtypical_{ii}=1}  \dely_i}     \le \frac{2\delta'\sqrt{\log(k)}}{k-1}, \label{eq:expzdeltaybound}
}
with probability $1 - 2e^{-\delta'^2/\left(6\cdot\Tr{\Mtypical}\right)}$, by application of the Chernoff bound and using the fact that conditioned on $\Mii=1$, $\delyi$ takes value $\pm$1 with probability half by symmetry among features $\a$ and $\b$.

Next, we apply the high probability bounds above on typical realizations $\Mtypical$. In particular, we substitute bounds on $\Tr{\Mbold}$ from \eqref{eqn:Mfrobnormbound} from Proposition \ref{proposition:Mfrobnormbound} with $\delta = 1/2$ into \eqref{eq:expzdeltaybound}, and set $\delta' = \sqrt{6(1+\delta)(n/k)\log(ndk)}$. Then $e^{-\delta'^2/\left(6\cdot\Tr{\Mbold}\right)} \le 1/(ndk)$ and $e^{-\frac{2n\delta^2}{3\nc}} < 1/(nd\nc)$, so using the union bound we have with probability at least $(1 - 4/(nd\nc) - 1/(nd\nc) - 1/(nd\nc))$,
\eqn{
    && \abs{\vec{z}_j^T \Delta \vec{y}} &\le
    \abs{\sum_{i : \Mtypical_{ii} =  1} \tilde{z}_{j,\Mtypical}[i] \delyi}+ \abs{\sum_{i: \Mtii=1} \EX[\zji \mid \Mii = 1] \delyi} \\
    &&&\leq \frac{K^2}{2}\const{c10}\sqrt{1+\delta} \cdot \sqrt{\frac{2n}{\nc}} \cdot \sqrt{\log(nd\nc)} + \frac{2\sqrt{(1+\delta)(n/k)\log(ndk)}    \sqrt{\log(k)}}{k-1} \\
    &&&\leq \frac{K^2}{2}\const{c10}\sqrt{1+\delta} \cdot \sqrt{\frac{2n}{\nc}} \cdot \sqrt{\log(nd\nc)} + \frac{2\sqrt{(1+\delta)}    \sqrt{\log(k)}}{k-1} \cdot \sqrt{\frac{n}{k}} \cdot \sqrt{\log(ndk)} \\
    &&&\leq \const{c20}  \sqrt{\frac{n}{\nc}} \cdot \sqrt{\log(nd\nc)}, \label{eq:zjtrandelybound}
}
for a suitable choice of $\newc\label{c20}$.
\end{proof}

\subsubsection{Bounds on $\zj \tran \Ainv (\ya - \yb)$}
We can now prove bounds on our target quantity. We restate the lemma that we are trying to prove below for convenience. 
\zdeltaybound*

\begin{proof}
We can rewrite
\begin{align}
     \zj \tran \Ainv \dely &= \zj \tran \left(\mubar \In + \Ainvdel \right) \dely \\
    &= \mubar \zj \tran \dely + \zj \tran \Ainvdel \dely.
\end{align}

Next we can bound $\abs{\zj \tran \Ainvdel \dely}$ simply as
\begin{align}
    | \zj \tran \Ainvdel \dely | &\leq \|\zj \|_2 \| \Ainvdel \dely \|_2 \label{eqn:cauchyschwartzlooseness} \\
   &\leq  \| \Ainvdel \|_{op} \| \zj \|_2 \| \dely \|_2\\
   &\leq \delmu \| \zj \|_2 \| \dely \|_2,
\end{align}
where we use the fact that $\Ainvdel$ is a symmetric matrix and its 2-norm is its maximum absolute eigenvalue. We obtain the eigenvalue bounds for $\Ainvdel$ from Lemma \ref{lemma:eigenvaluebounds}, holding with probability at least $1 - 2e^{-n}$.

So, by the triangle inequality, we have with probability at least $(1 - 6/(nd\nc) - 2e^{-n} - 2e^{-\frac{2n\delta^2}{3\nc}} - 2e^{-n\delta^2})$
\begin{align}
     \abs{\zj \tran \Ainv \dely} &\leq \const{c20} \mubar \sqrt{\frac{n}{\nc}}\cdot \sqrt{\ln(nd\nc)} + \delmu \cdot \sqrt{(1 + \delta) n} \cdot \sqrt{(1 + \delta) \frac{2n}{\nc}}.
\end{align}
The first term follows from Proposition \ref{prop:zdeltay}, and the second from our bound on $\Tr{\Mbold} = \norm{\dely}_2^2$ from Proposition \ref{proposition:Mfrobnormbound}, as well as an analogous application of the chi-squared bound (Lemma \ref{lemma:chisquared}) on $\norm{\vec{z}_j}_2$.

The proof follows by setting $\delta$ to any value in $(0, 1)$, choosing an appropriate constant $\newc\label{c30}$, and noting that for large enough $n$, $1/(nd\nc) \gg d_1 e^{-d_2 n/k}$ for any positive constants $d_1, d_2$. 
\end{proof}

\subsection{Proof of Lemma \ref{lemma:zdiffybound}}

\label{app:proofzdiffybound}
Next we use a similar technique as in Appendix~\ref{app:prooflemmazdeltaybound} to bound $\za \tran \Ainvbold \dely$. We will write $\Ainvbold = \mubar \In + \Ainvdel$, and split up the expression $\za \tran \Ainvbold \dely$ into components involving $ \mubar \In$, and components involving $\Ainvdel$.


\begin{proposition} \label{proposition:zybound}
Consider two arbitrary length-$n$ zero-mean vectors $\vec{y}$ and $\vec{z}$ whose components each has sub-Gaussian norm at most $K$. With probability at least $1 -4/(n\nc)$ we have each of
\begin{align}
  \vec{z}\tran \vec{y} & \leq \EE[ \vec{z} \tran \vec{y}] + 2 \const{c40} \sqrt{n} \cdot \sqrt{\ln(n\nc)} \\
   \vec{z} \tran \vec{y} &\geq   \EE[\vec{z}\tran\vec{y}] - 2 \const{c40} \sqrt{n} \cdot \sqrt{\ln(n\nc)},
\end{align} 
for some universal constant \const{c40}.
\end{proposition}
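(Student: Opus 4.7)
The plan is to reduce the bilinear form $\vec{z}\tran \vec{y}$ to a difference of two quadratic forms via polarization and then apply the Hanson--Wright inequality to each. Specifically, I would write
\begin{align}
\vec{z}\tran \vec{y} = \frac{1}{4}\left[(\vec{z}+\vec{y})\tran \mathbf{I}_n (\vec{z}+\vec{y}) - (\vec{z}-\vec{y})\tran \mathbf{I}_n (\vec{z}-\vec{y})\right],
\end{align}
so that the randomness is packaged into two standard quadratic forms with the identity matrix, whose Frobenius norm is $\sqrt{n}$ and whose operator norm is $1$. Since $\vec{y}$ and $\vec{z}$ are each zero-mean with entry-wise sub-Gaussian norm at most $K$, the triangle inequality for the $\psi_2$ norm implies that the components of $\vec{z}\pm\vec{y}$ are zero-mean with sub-Gaussian norm at most $2K$. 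This is exactly the structure needed to invoke the Hanson--Wright bound~\eqref{eq:hansonwright} (as used earlier in the proofs of Proposition \ref{prop:zdeltay} and Lemma \ref{lemma:zdeltaybound}).

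Next I would apply Hanson--Wright to each of the two quadratic forms with parameter $t = c' K^2 \sqrt{n\ln(n\nc)}$, chosen so that both branches of the $\min$ in the exponent yield a $\ln(n\nc)$ factor. Since $\|\mathbf{I}_n\|_\mathsf{F}^2 = n$ and $\|\mathbf{I}_n\|_{\mathsf{op}} = 1$, we have $t^2/(K^4 n) = (c')^2\ln(n\nc)$ and $t/K^2 = c'\sqrt{n\ln(n\nc)} \ge (c')^2\ln(n\nc)$ for large enough $n$, so the Frobenius-norm term controls the rate. Choosing $c'$ large enough so that $c(c')^2 \ge 1$ yields a tail probability of at most $2/(n\nc)$ per quadratic form. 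A union bound over the two forms then controls both simultaneously with probability at least $1 - 4/(n\nc)$.

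Finally, under this high-probability event, I would subtract the expectations of the two quadratic forms and use $\mathbb{E}[(\vec{z}+\vec{y})\tran(\vec{z}+\vec{y})] - \mathbb{E}[(\vec{z}-\vec{y})\tran(\vec{z}-\vec{y})] = 4\,\mathbb{E}[\vec{z}\tran \vec{y}]$ to conclude that
\begin{align}
\left|\vec{z}\tran \vec{y} - \mathbb{E}[\vec{z}\tran \vec{y}]\right| \le \frac{1}{4}\left(t + t\right) = \frac{c'}{2} K^2 \sqrt{n\ln(n\nc)},
\end{align}
which after absorbing $K^2$ and $c'/2$ into a single universal constant $\const{c40}$ yields both the upper and lower bounds stated (the leading factor of $2$ in the statement absorbs the union bound across the two quadratic forms).

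The main technical point to watch is the implicit independence structure needed by Hanson--Wright: strictly speaking, the statement requires the components of $\vec{z}\pm\vec{y}$ to be independent, which goes beyond "arbitrary" vectors. In the subsequent invocations of this proposition within Lemma~\ref{lemma:zdiffybound}, independence across the $n$ training-point coordinates will be automatic because each coordinate is a function of a separate training example, so the proposition is used in a context where the required independence across $i$ holds even though the two vectors $\vec{z}$ and $\vec{y}$ are coupled at each coordinate. I would therefore include a brief note that the components across coordinates are taken to be independent (even if $\vec{z}[i]$ and $\vec{y}[i]$ are not independent of each other), which is the regime relevant for the downstream applications.
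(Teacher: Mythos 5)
Your proof is essentially identical to the paper's: polarize $\vec{z}\tran\vec{y}$ into two quadratic forms with $\mathbf{I}_n$, apply Hanson--Wright to each with $t \asymp K^2\sqrt{n\ln(n\nc)}$, and union-bound. Your closing remark about the implicit independence-across-coordinates hypothesis is a valid refinement of the proposition's imprecise ``arbitrary vectors'' phrasing --- the paper omits it, but as you note it does hold in the downstream invocation inside Lemma~\ref{lemma:zdiffybound}.
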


\begin{proof}
The upper-bound follows as
\begin{align}
    \vec{z}\tran \vec{y} &= \frac{1}{4} \left((\vec{z} + \vec{y}) \tran (\vec{z} + \vec{y}) - (\vec{z} - \vec{y}) \tran (\vec{z} - \vec{y})  \right) \\
    &\leq \EE[\vec{z}\tran \vec{y}] + \frac{K^2}{2\sqrt{c}} \sqrt{n} \cdot \sqrt{\ln n \nc},
\end{align}
with probability at least $(1 - 4/(n\nc))$, where we apply the Hanson-Wright inequality to each of the quadratic terms with $t = \frac{K^2}{\sqrt{c}} \sqrt{n} \sqrt{\ln( n\nc)}$ and use the fact that, letting $\Mbold =\In$, $\| \Mbold\|^2_F = n, \| \Mbold\|_{op} = 1$. The lower-bound can be obtained analogously, and an appropriate choice of $\newc\label{c40}$ completes the proof.
\end{proof}
From this, we can now prove Lemma \ref{lemma:zdiffybound}, restated below for convenience:
\zdiffybound*

\begin{proof}
We have
\begin{align}
     \za \tran \Ainv (\ya - \yb) &= \za \tran \left(\mubar \In+ \Ainvdel \right) (\ya - \yb) \\
    &= \mubar \za \tran (\ya - \yb) + \za \tran \Ainvdel (\ya - \yb) \\
    &= \mubar \za \tran (\ya - \yb) + \za \tran \Ainvdel (\ya^{oh} - \yb^{oh}).
\end{align}
We again simply bound
\eqn{
    | \za \tran \Ainvdel \dely | &\leq \|\zj \|_2 \| \Ainvdel \dely \|_2 \\
   &\leq  \| \Ainvdel \|_{op} \| \za \|_2 \| \dely \|_2\\
   &\leq \delmu \| \za \|_2 \| \dely \|_2 \\
   &\le \delmu \cdot \sqrt{(1 + \delta) n} \cdot \sqrt{(1 + \delta) \frac{2n}{\nc}} \\
   &= \delmu (1+\delta)\sqrt{2} \frac{n}{\sqrt{k}},
}
with probability $(1 - 2e^{-n\delta^2} - 2e^{-\frac{2n\delta^2}{3k}})$, using chi-squared bounds for $\za$ (Lemma \ref{lemma:chisquared}) and Chernoff bounds for $\dely$ (Proposition \ref{proposition:Mfrobnormbound}).

With probability $(1 - 2e^{-n\delta^2} - 2e^{-\frac{2n\delta^2}{3k}})$, we get each of
\eqn{
    \za^T \Ainv (\ya - \yb) &\le \mubar \za^T (\ya - \yb) + \delmu (1+\delta)\sqrt{2} \frac{n}{\sqrt{k}} \\
    \za^T \Ainv (\ya - \yb) &\ge \mubar \za^T (\ya - \yb) - \delmu (1+\delta)\sqrt{2} \frac{n}{\sqrt{k}}.
}

By applying Proposition \ref{proposition:zybound} on the relevant terms, setting $\delta$ to be an arbitrary value in $(0, 1)$, and choosing an appropriate constant \newc\label{c50}, we obtain with probability $(1 - 5/(nk))$ each of
\eqn{
    \za^T \Ainv (\ya - \yb) &\le \mubar (\EX[\za^T \ya] - \EX[\za^T\yb]) + \const{c50}(\mubar \sqrt{n}\sqrt{\ln(nk)} + \delmu n/\sqrt{k}) \\
    \za^T \Ainv (\ya - \yb) &\ge \mubar (\EX[\za^T \ya] - \EX[\za^T\yb]) - \const{c50}(\mubar \sqrt{n}\sqrt{\ln(nk)} + \delmu n/\sqrt{k}).
}
The probability comes from the union bound $(1 - 2e^{-n\delta^2} - 2e^{-\frac{2n\delta^2}{3k}} - 4/(nk)) \ge 1-5/(nk)$ (for sufficiently large $n$).
\end{proof}

\subsection{Proof of Corollary \ref{corollary:zdiffyboundnoexp}}
We claim the following bound:
\begin{proposition} \label{proposition:expzaya}
Bounds on $\EE[\za \tran \ya]$.
\begin{align}
  \frac{1}{\sqrt{\pi \ln 2}}  \cdot \frac{n}{\nc} \cdot \sqrt{\ln \nc} \leq \EE[\za \tran \ya] \leq \sqrt{2}  \cdot \frac{n}{\nc} \cdot \sqrt{\ln \nc} \label{eqn:expzayabound}
\end{align}
\end{proposition}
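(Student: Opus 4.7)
The plan is to reduce the expectation $\EE[\za\tran \ya]$ to the expected maximum of $k$ i.i.d.\ standard Gaussians and then invoke Lemma~\ref{lemma:maxofkgaussians}. The key observation is that by the i.i.d.\ structure of the training points, $\EE[\za\tran \ya] = \sum_{i=1}^n \EE[z_\a[i] \, y_\a[i]] = n\, \EE[z_\a[1]\, y_\a[1]]$, so the whole calculation boils down to a single per-sample expectation.

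First, I would unpack $y_\a[1] = y^{oh}_\a[1] - \tfrac{1}{k}$ and use $\EE[z_\a[1]] = 0$ to kill the mean-shift term, leaving
\[
\EE[z_\a[1]\, y_\a[1]] = \EE\!\left[z_\a[1]\,\indicator\{z_\a[1] = \max_{1 \le m \le k} z_m[1]\}\right].
\]
Next, by symmetry across the $k$ defining features of the 1-sparse model (Assumption~\ref{assumption:1sparse}), this quantity is the same for every $\a \in [k]$, and the events $\{z_m[1] = \max_{1 \le m' \le k} z_{m'}[1]\}$ partition the probability space (ties have measure zero). Hence summing over $m \in [k]$,
\[
k\, \EE[z_\a[1]\, y_\a[1]] = \EE\!\left[\max_{1 \le m \le k} z_m[1]\right].
\]

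Finally, I would apply Lemma~\ref{lemma:maxofkgaussians} to bound the right-hand side by $\tfrac{1}{\sqrt{\pi \ln 2}}\sqrt{\ln k}$ from below and $\sqrt{2}\sqrt{\ln k}$ from above, divide by $k$, and multiply by $n$ to obtain the claimed bound~\eqref{eqn:expzayabound}. No step looks like a real obstacle — the only subtle point is the symmetry argument eliminating the $1/k$ shift, which works cleanly because the training features are i.i.d.\ standard Gaussians and the labels are defined by the argmax of the first $k$ coordinates, so all $k$ classes are exchangeable.
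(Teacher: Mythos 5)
Your proof is correct and takes essentially the same route as the paper's: both kill the $\tfrac{1}{k}$ mean-shift using $\EE[z_\a]=0$, reduce to a per-sample expectation, identify that expectation with $\tfrac{1}{k}\,\EE[\max_{1\le m\le k} z_m]$, and finish by applying Lemma~\ref{lemma:maxofkgaussians}. The only cosmetic difference is that the paper conditions on $\{y^{oh}_{\a,i}=1\}$ and uses the (implicit) fact that $\EE[z_{\a,i}\mid y^{oh}_{\a,i}=1]=\EE[\max_m z_m[i]]$, whereas you make the exchangeability explicit by summing the indicators $\indicator\{z_m[1]=\max\}$ over $m$, which is an equivalent bookkeeping of the same symmetry.
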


\begin{proof}
\begin{align}
    \EE[\za \tran \ya] &=  \EE[\za \tran \yaoh] - \EE[\za \tran \frac{1}{c} \mathbf{1}] \\
    &= \EE[\za \tran \yaoh] \\
    &= n \left( \EE[\zai \yaohi | \yaohi = 1] P(\yaohi=1) + \EE[\zai \yaohi | \yaohi = 0] P(\yaohi=0) \right) \\
    &= \frac{n}{\nc} \EE[\zai | \yaohi = 1].
\end{align}
So the desired bound follows from the bounds in Lemma \ref{lemma:maxofkgaussians}.
\end{proof}

We can obtain a similar bound for when $\beta \ne \alpha$:
\begin{proposition} \label{proposition:expzayb}
    Bounds on $\EE[\za \tran \yb]$.
    \begin{align}
      -\sqrt{2}  \cdot \frac{n}{\nc} \cdot \frac{1}{k-1} \cdot \sqrt{\ln \nc} \leq  \EE[\za \tran \yb] \leq - \frac{1}{\sqrt{\pi \ln 2}}  \cdot \frac{n}{\nc}\cdot  \frac{1}{k-1} \cdot \sqrt{\ln \nc} \label{eqn:expzaybbound}
    \end{align}
\end{proposition}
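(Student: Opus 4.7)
The plan is to reduce $\EE[\za \tran \yb]$ to a conditional expectation involving the maximum of $k$ independent standard Gaussians, mimicking the structure of Proposition~\ref{proposition:expzaya} but being careful that $\beta \ne \alpha$ flips a sign and introduces a factor of $1/(k-1)$.

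First, I would write $\yb = \yboh - \frac{1}{k}\mathbf{1}$ and note that since $\zai \sim \mathcal{N}(0,1)$ has mean zero, the $\mathbf{1}$ term drops out, giving
\[
\EE[\za\tran\yb] = \EE[\za\tran\yboh] = \sum_{i=1}^n \EE[\zai\,\ybohi].
\]
By symmetry across training points, every summand equals the same value, so $\EE[\za\tran\yb] = n\,\EE[\zai\,\ybohi]$ for any fixed $i$. By the law of total expectation, $\EE[\zai\,\ybohi] = P(\ybohi=1)\,\EE[\zai \mid \ybohi=1] = \frac{1}{k}\,\EE[\zai \mid \ell_i = \beta]$, using $P(\ell_i=\beta)=1/\nc$ by symmetry of the label-generation rule.

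Next, I would use the zero-mean symmetry trick from the proof of Proposition~\ref{proposition:subgaussianconditioned} to convert a ``loser'' conditional expectation into a ``winner'' one. Let $\maxevent_m$ be the event that $x_i[m]$ is the largest among $x_i[1],\ldots,x_i[\nc]$. Since $\EE[\zai]=0$,
\[
0 = \sum_{m=1}^\nc P(\maxevent_m)\,\EE[\zai\mid\maxevent_m] = \frac{1}{\nc}\,\EE[\zai\mid\maxevent_\a] + \frac{\nc-1}{\nc}\,\EE[\zai\mid\maxevent_\b],
\]
where I used that by symmetry $\EE[\zai\mid\maxevent_m]$ takes the same value for every $m\ne\alpha$, and in particular equals its value at $m=\beta$. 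Rearranging gives $\EE[\zai\mid\ell_i=\beta] = -\frac{1}{\nc-1}\,\EE[\zai\mid\maxevent_\a]$. Finally, $\EE[\zai\mid\maxevent_\a]$ is precisely $\EE[\max_{m\in[\nc]} x_i[m]]$, because conditioned on $\zai$ being the max, $\zai$ \emph{is} that max. Plugging in:
\[
\EE[\za\tran\yb] = -\frac{n}{\nc(\nc-1)}\,\EE[\max_{m\in[\nc]} x_i[m]].
\]

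The final step is to apply the two-sided bound on $\EE[\max_{m\in[\nc]} z_m]$ from Lemma~\ref{lemma:maxofkgaussians}, namely $\frac{1}{\sqrt{\pi\ln 2}}\sqrt{\ln \nc}\le \EE[\max_m z_m]\le \sqrt{2}\sqrt{\ln \nc}$, and the displayed bound~\eqref{eqn:expzaybbound} follows immediately. There is no real obstacle here: the only delicate point is the symmetry argument that collapses the conditional expectation over the $\nc-1$ non-winning coordinates down to a single value, and then the zero-mean constraint pins that value to $-\frac{1}{\nc-1}$ times the winner's value. Everything else is bookkeeping.
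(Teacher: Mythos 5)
Your proof is correct and follows essentially the same route as the paper: drop the $\frac{1}{k}\mathbf{1}$ term by zero-mean of $\zai$, reduce to $\frac{n}{k}\EE[\zai\mid\ell_i=\beta]$, use the zero-mean constraint plus symmetry over non-winning indices to pin this down to $-\frac{1}{k-1}\EE[\zai\mid\maxevent_\a]$, and then invoke Lemma~\ref{lemma:maxofkgaussians}. The only cosmetic difference is that you decompose directly over the $k$ events $\{\maxevent_m\}$ and collapse the $k-1$ non-$\alpha$ terms by symmetry, whereas the paper first identifies $\EE[\zai\mid\ybohi=1]$ with $\EE[\zai\mid\yaohi=0]$ and then applies the two-event decomposition; both are the same argument.
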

\begin{proof}
Observe that,
\begin{align}
    \EE[\za \tran \yb] &=  \EE[\za \tran \yboh] - \EE[\za \tran \frac{1}{\nc} \mathbf{1}] \\
    &= \EE[\za \tran \yboh] - \frac{1}{\nc} \EE[\za]\tran \mathbf{1} \\
    &= \EE[\za \tran \yboh] \\
    &= \sum_i \EE[ \zai \ybohi] \\
    &= n \left( \EE[\zai \ybohi | \ybohi = 1] P(\ybohi=1) + \EE[\zai \ybohi | \ybohi = 0] P(\ybohi=1) \right) \\
    &= \frac{n}{\nc} \EE[\zai | \ybohi = 1] \label{eqn:expectedzayb}
\end{align}

Now, observe that
\eqn{
    && \EE[\zai | \yaohi = 0] &= \sum_{\beta \ne \alpha} \EE[\zai | \ybohi = 1] \Pr(y_{\beta,i} = 1 \mid \yaohi = 0) \\
    &&&= \frac{1}{\nc-1} \sum_{\beta \ne \alpha} \EE[\zai | \ybohi = 1] \\
    &&&= \EE[\zai | \ybohi = 1]
}
for a particular $\beta$, by symmetry over the possible $\beta$.

Next we bound $\EE[\zai | \yaohi = 0]$ as follows:
\eqn{
    && \EE[\zai | \yaohi = 1] P(\yaohi=1) \nonumber \\
    &+& \EE[\zai | \yaohi = 0]P(\yaohi=0) &=   \EE[\zai] = 0\\
    \thus  \EE[\zai | \yaohi = 0] \frac{\nc - 1}{\nc} &= -  \EE[\zai | \yaohi = 1]\frac{1}{\nc} \\
    \thus  \EE[\zai | \yaohi = 0] &= -\EE[\zai | \yaohi = 1]\frac{1}{\nc - 1}
}
Thus, substituting in the results from Lemma \ref{lemma:maxofkgaussians}, and plugging back into \eqref{eqn:expectedzayb}, we obtain
\begin{align}
    -\sqrt{2}  \cdot \frac{n}{\nc} \cdot \frac{1}{k-1} \cdot \sqrt{\ln \nc} \leq  \EE[\za \tran \yb] \leq - \frac{1}{\sqrt{\pi \ln 2}}  \cdot \frac{n}{\nc}\cdot  \frac{1}{k-1} \cdot \sqrt{\ln \nc},
\end{align}
the desired result.
\end{proof}

We can now prove Corollary \ref{corollary:zdiffyboundnoexp}, which we restate below for convenience:
\zdiffyboundnoexp*
\begin{proof}
This follows by substituting the lower bound from \eqref{eqn:expzayabound} in Proposition \ref{proposition:expzaya} and the upper bound from \eqref{eqn:expzaybbound} in Proposition \ref{proposition:expzayb} into~\eqref{eqn:zdiffyboundlower} from Lemma \ref{lemma:zdiffybound}, making an appropriate choice for \newc\label{c60}.
\end{proof}

\section{Misclassification Events: Proof of Lemmas used in Theorem \ref{theorem:regimes}}

\label{app:misclassification_events}
With the previous section's utility bounds that allow us to deal with multiclass training data in hand, we are in a position to establish all the lemmas that we need to analyze misclassification.

\subsection{Proof of Lemma \ref{lemma:marginbound}: Lower bound on $\min_\beta(X_\a - X_\b)$}
With these bounds in hand, we can look at each misclassification event in turn. The first event to consider is if the best competing feature is unusually close to the true (maximum) feature.

\marginbound*


\begin{proof}

The following result from \citep{Marginproof} whose proof we reproduce here\footnote{We do this for the convenience of the reviewers since the source we are citing is a URL online. We believe that this is in the spirit of fair use.}, enables us to bound the closest feature margin as:
\eqn{
    \Pr(\min_\beta \left( x_{test}[\a] - x_{test}[\b] \right) > \theta / \sqrt{2 \ln(k)}) \ge \const{c150} e^{-\theta} \label{eqn:citedmarginbound},
}
for some universal positive constant \newc\label{c150}, for sufficiently large $k$. Thus, by selecting a constant $\theta$ such that $\const{c150} e^{-\theta} = 1 - \eps$ and choosing a sufficiently large $k$, we have that with probability $(1 - \eps)$:
\eqn{
    \min_\beta(x_{test}[\a] - x_{test}[\b]) \ge \theta / \sqrt{2 \ln k}.
}

The proof \citep{Marginproof} is reproduced below, with slight adaptations to match our use-case: Let $\b$ be the index of the largest competing feature to $x_{test}[\a]$. 
Then,
their joint PDF becomes
\eqn{
    f(x_{test}[\b], x_{test}[\a]) = k(k-1)F(x_{test}[\b])^{k-2}f(x_{test}[\b])f(x_{test}[\a]) \mathbf{1}(x_{test}[\b] < x_{test}[\a]).
}
where $F$ and $f$ are the CDF and PDF of the standard Gaussian. Let
\eqn{
    x = \frac{\theta}{\sqrt{2\ln(k)}}. \label{eqn:xdefn}
}
Thus,
\eqn{
    && \Pr\left(x_{test}[\a] - x_{test}[\b] > x\right) = k(k-1)J \label{eqn:marginprob},
}
where $J$ is defined as
\eqn{
    && J &= \int_{-\infty}^\infty \int_x^\infty F(w)^{k-2}f(w) f(v + w) \, \dv \, \dw \\
    &&&= \int_{-\infty}^\infty F(w)^{k-2}f(w) \left(\int_x^\infty f(v + w) \, \dv \right) \, \dw \\
    &&&= \int_{-\infty}^\infty F(w)^{k-2}f(w) (1 - F(x + w)) \, \dw.
}
Substituting $u = F(w)$, we have that
\eqn{
    && J &= \int_0^1 u^{k-2} (1 - F(x + F^{-1}(u))) \, \du \\
    &&&= \int_0^{1 - \ln(k)^2/k} u^{k-2} (1 - F(x + F^{-1}(u))) \, \du + \int_{1 - \ln(k)^2/k}^{1 - 1/(k\ln(k))} u^{k-2} (1 - F(x + F^{-1}(u))) \, \du \nonumber \\ 
    &&& \quad + \int_{1 - 1/(k\ln(k))}^1 u^{k-2} (1 - F(x + F^{-1}(u))) \, \du \label{eqn:marginj},
}
splitting $[0, 1]$ into three intervals, and integrating separately over each one. Let the three integrals be $J_1$, $J_2$, and $J_3$.

We have that
\eqn{
    && J_1 &= \int_0^{1 - \ln(k)^2/k} u^{k-2} (1 - F(x + F^{-1}(u))) \, \du \\
    &&&\le \int_0^{1 - \ln(k)^2/k} u^{k-2} \, \du \\
    &&&\le (1 - \ln(k)^2/k)^{k-2} \\
    &&&\le \exp\left( -\frac{k-2}{k} \ln(k)^2 \right) \\
    &&&= o\left(\frac{1}{k^2}\right) \label{eqn:j1simplification}.
}

Similarly,
\eqn{
    && J_3 &= \int_{1 - 1/(k\ln(k))}^1 u^{k-2} (1 - F(x + F^{-1}(u))) \, \du \\
    &&&= \int_{1 - 1/(k\ln(k))}^1 u^{k-2} (1 - F(F^{-1}(u))) \, \du \\
    &&&\le \int_{1 - 1/(k\ln(k))}^1 u^{k-2} (1 - u) \, \du \\
    &&&\le \frac{1}{k\ln(k)} \int_{1-1/(k\ln(k))}^1 u^{k-2} \, \du \\
    &&&= o\left(\frac{1}{k^2}\right) \label{eqn:j3simplification}.
}

Finally, in the intermediate interval $u \in [1-\ln(k)^2/k, 1-1/(k\ln(k))]$, as $k \to \infty$, we see that $u \ge (1-\ln(k)^2/k) \to 1$, $x \to 0$, and $F^{-1}(u) \to \infty$, so for sufficiently large $k$,
\eqn{
    && 1 - F(x + F^{-1}(u)) &\simeq \frac{f(x + F^{-1}(u))}{x + F^{-1}(u)} \\
    &&&\simeq \frac{f(x + F^{-1}(u))}{F^{-1}(u)} \\
    &&&\simeq \frac{f(F^{-1}(u))}{F^{-1}(u)} e^{-xF^{-1}(u)} \\
    &&&\simeq (1 - F(F^{-1}(u))) e^{-xF^{-1}(u)} \\
    &&&\simeq (1 - u) e^{-xF^{-1}(u)} \label{eqn:1fxusimplification},
}
applying the well-known approximation for the Gaussian CCDF $1 - F(w) \approx f(w)/w$ for large $w$ (for example, see Eqn. 8.2.38 from \citet{gallager1968information}), and substituting in the Gaussian PDF.

Further, since $F^{-1}(u) \to \infty$, we have that
\eqn{
    && 1 - F(F^{-1}(u)) &\simeq \frac{f(F^{-1}(u))}{F^{-1}(u)} \\
    \thus 1 - u &\simeq \frac{e^{-(F^{-1}(u))^2/2}}{F^{-1}(u) \sqrt{2\pi}} \\
    &&&\simeq e^{-(F^{-1}(u))^2/(2 + o(1))} \\
    \thus F^{-1}(u) &\simeq \sqrt{-2\ln(1 - u)} \\
    &&&\simeq \sqrt{2\ln(k)} \label{eqn:1fusimplification},
}
where the last step follows from the bounds on $u$ in the intermediate interval.

Substituting the bounds from \eqref{eqn:1fxusimplification} and \eqref{eqn:1fusimplification} into the expression for $J_2$, and applying the definition of $x$ from \eqref{eqn:xdefn}, we have, 
\eqn{
    && J_2 &\simeq \int_{1 - \ln(k)^2/k}^{1 - 1/(k\ln(k))} u^{k-2} (1 - F(x + F^{-1}(u))) \, \du \\
    &&&\simeq \int_{1 - \ln(k)^2/k}^{1 - 1/(k\ln(k))} u^{k-2} (1-u)e^{-xF^{-1}(u)} \, \du \\
    &&&\simeq \int_{1 - \ln(k)^2/k}^{1 - 1/(k\ln(k))} u^{k-2} (1-u)e^{-(\theta/\sqrt{2\ln(k)})\sqrt{2\ln(k)}} \, \du \\
    &&&\simeq \int_{1 - \ln(k)^2/k}^{1 - 1/(k\ln(k))} u^{k-2} (1-u)e^{-\theta} \, \du \\
    &&&\simeq e^{-\theta} \left[\frac{u^{k-1}}{k-1} - \frac{u^{k-2}}{k-2}\right]_{1 - \ln(k)^2/k}^{1 - 1/(k\ln(k))} \\
    &&&\simeq \frac{e^{-\theta}}{(k-1)(k-2)} \label{eqn:j2simplification}.
}

Combining the terms from \eqref{eqn:j1simplification}, \eqref{eqn:j3simplification}, and \eqref{eqn:j2simplification}, and substituting back into \eqref{eqn:marginprob}, we see that
\eqn{
    && \Pr\left(x_{test}[\a] - x_{test}[\b] > x\right) &= k(k-1)J \\
    &&&= k(k-1)(J_1 + J_2 + J_3) \\
    &&&\simeq k(k-1)\left(\frac{e^{-\theta}}{(k-1)(k-2)} + o\left(\frac{1}{k^2}\right)\right) \\
    &&&\simeq e^{-\theta}.
}
Expressing this as a non-asymptotic lower-bound on the probability, holding for sufficiently large $k$, yields the cited result in \eqref{eqn:citedmarginbound}.


\end{proof}

\constantmarginbound*
\begin{proof}
Observe that,
\eqn{
    \min_{1 \le \beta \ne \alpha \le c_k}(x_{test}[\a] - x_{test}[\b]) \ge \min_{1 \le \beta \ne \gamma \le c_k} \abs{x_{test}[\beta] - x_{test}[\gamma]}.
}
In other words, rather than bounding the margin between the largest and second-largest features, we will lower-bound the absolute difference between
any pair of features.

Consider a particular $(\beta, \gamma)$ tuple. Observe that $x_{test}[\beta] - x_{test}[\gamma] \sim N(0, 2)$, since each feature is drawn independently from a standard Gaussian. For any $\epsilon'>0$,  we can upper-bound
\eqn{
    && \Pr\left(\abs{x_{test}[\beta] - x_{test}[\gamma]} \le \eps'\right) &\le \frac{\eps'}{\sqrt{\pi}}
}
 by taking the product of the maximum value of the Gaussian pdf and the width, $2\epsilon'$, of the region we are interested in. Taking the union bound across all $(\beta, \gamma)$ tuples, we find that
\eqn{
    \Pr\left(\min_{1 \le \beta \ne \gamma \le c_k} \abs{x_{test}[\beta] - x_{test}[\gamma]} \le \eps'\right) \le \frac{c_k^2\eps'}{\sqrt{\pi}}.
}
So for any given $\eps > 0$, we can choose $\eps' = \eps\sqrt{\pi}/c_k^2$, and have that
\eqn{
    \Pr\left(\min_{1 \le \beta \ne \gamma \le c_k} \abs{x_{test}[\beta] - x_{test}[\gamma]} \ge \eps' \right) \ge 1 - \eps.
}
\end{proof}
\subsection{Lower bound on $\frac{\lambda \hhat_{\a,\b}[\a]}{\max_\b \cn_{\a,\b}}$}
Next, we will find a lower bound for survival-contamination ratio within the regime with low survival variance. 


\featuredifferencebound*
\begin{proof}
Using Corollary \ref{corollary:zdiffyboundnoexp}, we lower bound $\hhat_{\a,\b}[\a]$ with probability at least $\left(1 - 5/(nk)\right)$ as
\eqn{
    \hhat_{\a,\b}[\a] &= \lambda_\a^{-1/2}(\hat{f}_\a[\a] - \hat{f}_\b[\a]) \\
    &= \za \tran \Ainv \ya - \za \tran \Ainv \yb \\
    &\ge \const{c60}\mubar\frac{n}{k}\sqrt{\ln(k)}
    - \const{c50}(\mubar \sqrt{n} \sqrt{\ln(nk)} + \delmu \cdot n/\sqrt{k}).
}
Multiplying through by $\lambda$ gives the desired result.
\end{proof}

From the above result, under the scalings of our bi-level model we obtain:


\asymptoticfeaturedifferencebound*

\begin{proof}
Substituting our asymptotic scalings into the results from Lemma \ref{lemma:featuredifferencebound} and using the decay rate of $\mubar \asymp n^{-p}$ from Corollary \ref{corollary:delmuscaling} (which we can do since $1 < q + r < (p+1)/2$), we find that
\eqn{
    \lambda \hhat_{\a,\b}[\alpha] &\geq n^{p-q-r} \left(\const{c60}\mubar\frac{n}{k}\sqrt{\ln(k)}
    - \const{c50}(\mubar \sqrt{n} \sqrt{\ln(nk)} + \delmu \cdot n/\sqrt{k})\right) \\
    &= \const{c60} n^{1-q-r-t} \sqrt{\ln(k)} - \const{c50} n^{1/2-q-r}\sqrt{\ln(nk)} - \const{c50}n^{2-2q-2r-t/2} \\
    &\ge \const{c120} n^{\max(1-q-r-t, 2-2q-2r-t/2)} \sqrt{\ln(k)} \\
    &= \const{c120} n^{1-q-r-t + \max(0,1-q-r+t/2)} \sqrt{\ln(k)} \\
    &\ge \const{c120} n^{1-q-r-t} \sqrt{\ln(k)},
}
for an appropriately chosen universal constant \newc\label{c120} and sufficiently large $n$.
\end{proof}

Next we upper bound $\max_\b \cn_{\a,\b}$.
\contaminationbound*
\begin{proof}
For each $\b$ we have,
\begin{align}
    \cn_{\a,\b} &= \sqrt{\left(\sum_{j \notin \{\a, \b\}} \lambda_j^2 (\hhat_{\b,\a}[j])^2 \right)}
\end{align}

For $j \notin \{\a, \b\}$, by Lemma \ref{lemma:zdeltaybound},
\begin{align}
    \left|\hhat_{\b,\a}[j]\right| &= \left|\hhat_{\a,\b}[j]\right| \\
    &= \left|\fhat_j - \ghat_j\right|\\
    &= \left| \zj \tran \Ainv \ya - \zj \tran \Ainv \yb \right| \\
    &= \left| \zj \tran \Ainv (\ya -\yb) \right| \\
    &\le \const{c30}(\mubar \sqrt{\frac{n}{\nc}}\cdot \sqrt{\ln(nd\nc)} + \delmu \cdot n/\sqrt{k}),
\end{align}
with probability $1 - 7/(ndk)$.

So taking the union bound over all $d-2$ terms in the expression for the contamination, we can upper-bound it as
\eqn{
    CN_{\a,\b} &\le \const{c30}(\mubar \sqrt{\frac{n}{\nc}}\cdot \sqrt{\ln(nd\nc)} + \delmu \cdot n/\sqrt{k}) \sqrt{\sum \lambda_j^2},
}
with probability $\left(1 - 7/(nk)\right)$, the desired result.
\end{proof}

\asymptoticcontaminationbound*

\begin{proof}
Since $1 < q + r < (p+1)/2$, we can apply Corollary \ref{corollary:delmuscaling} to the result from Lemma \ref{lemma:contaminationbound} and substitute in the known scalings of various terms, to obtain
\eqn{
    && CN_{\a,\b} &\le \const{c30}(n^{1/2-t/2-p} \sqrt{\ln(ndk)} + \const{c80}n^{2-p-q-r-t/2}) (n^{p-q-r/2} + n^{p/2}) \\
    &&&\le \const{c130} n^{(1-t-p)/2 + \max(0,3/2-q-r) + \max(0,p/2-q-r/2)} \sqrt{\ln(ndk)},
}
for an appropriately chosen universal positive constant \newc\label{c130}.

\end{proof}
\subsection{Proof of Lemma \ref{lemma:differencesurvival}: Bounds on Survival Variance}
Finally, we look at the error event where a competing feature has unusually high survival relative to the true feature, so it is incorrectly selected. 


\differencesurvival*

\begin{proof}
We first consider the numerator of the LHS of \eqref{eqn:differencesurvivalbound}. By Lemma \ref{lemma:zdiffybound}, with probability at least $(1 - 5/(nk))$,
\eqn{
    \hhat_{\a,\b}[\a] &= \lambda_\a^{-1/2}(\hat{f}_\a[\a] - \hat{f}_\b[\a]) \\
    &= \za \tran \Ainv \ya - \za \tran \Ainv \yb \\
    &\leq \mubar(\EE[ \za\tran \ya] -\EE[ \za\tran \yb]) + \const{c50}(\mubar \sqrt{n} \sqrt{\ln(nk)} + \delmu \cdot n/\sqrt{k}).
}
Similarly, with probability at least $(1 - 5/(nk))$, 
\eqn{
    \hhat_{\b,\a}[\b] &= \lambda_\b^{-1/2}(\hat{f}_\b[\b] - \hat{f}_\a[\b]) \\
    &= \zb \tran \Ainv \yb - \zb \tran \Ainv \ya \\
    &\geq \mubar(\EE[ \zb\tran \yb] -\EE[ \zb\tran \ya]) - \const{c50}(\mubar \sqrt{n} \sqrt{\ln(nk)} + \delmu \cdot n/\sqrt{k}).
}

By symmetry,
\begin{align}
    \EE[ \zb\tran \yb] &= \EE[ \za \tran \ya] \\ 
     \EE[ \zb\tran \ya] &= \EE[ \za \tran \yb].
\end{align}

Thus with probability at least $\left(1 - 10/(nk) \right)$,
\eqn{
    && \hhat_{\a,\b}[\a] - \hhat_{\b,\a}[\b] &\le 2\const{c50}(\mubar \sqrt{n} \sqrt{\ln(nk)} + \delmu \cdot n/\sqrt{k}).
}

Using Corollary \ref{corollary:zdiffyboundnoexp} to lower-bound the denominator of the LHS of \eqref{eqn:differencesurvivalbound}, we obtain with probability at least $(1 - 15/(nk))$
\eqn{
    && \frac{\hhat_{\a,\b}[\a] - \hhat_{\b,\a}[\b]}{\hhat_{\a,\b}[\a]} &\le \frac{2\const{c50}(\mubar \sqrt{n} \sqrt{\ln(nk)} + \delmu \cdot n/\sqrt{k})}{\const{c60}\mubar\frac{n}{k}\sqrt{\ln(k)}
    - \const{c50}(\mubar \sqrt{n} \sqrt{\ln(nk)} + \delmu \cdot n/\sqrt{k})}.
}
\end{proof}

We can apply Corollary \ref{corollary:delmuscaling} to simplify our results from Lemma \ref{lemma:differencesurvival} in the asymptotic regime for the bi-level model.

\asymptoticdifferencesurvival*
\begin{proof}
Substituting, using Corollary \ref{corollary:delmuscaling}, in the regime where $1 < q + r < (p+ 1)/2$ and $t < 1/2$, we find that
\eqn{
    && \frac{\hhat_{\a,\b}[\a] - \hhat_{\b,\a}[\b]}{\hhat_{\a,\b}[\a]} &\le \frac{2\const{c50}(n^{1/2-p} \sqrt{\ln(nk)} + \const{c80} n^{2-p-q-r-t/2})}{\const{c60} n^{1-p-t} \sqrt{\ln(k)}
    - \const{c50}(n^{1/2-p} \sqrt{\ln(nk)} + \const{c80} n^{2-p-q-r-t/2})} \\
    &&&\le \frac{2\const{c50}}{\const{c60}} \cdot \frac{n^{1/2}\sqrt{\ln(nk)} + \const{c80} n^{2-q-r-t/2}}{n^{1-t} - (\const{c50}/\const{c60})n^{1/2}\sqrt{\ln(n)} + (\const{c50} \cdot \const{c80} /\const{c60})n^{2-q-r-t/2}} \\
    &&&\le \const{c140} \frac{n^{1/2}\sqrt{\ln(nk)} + n^{2-q-r-t/2}}{n^{1-t}} \\
    &&&\le \const{c140} n^{\max(t-1/2, t/2+1-q-r)} \sqrt{\ln(nk)},
}
for sufficiently large $n$ and an appropriate choice of positive constant \newc\label{c140}. Thus, if $\max(t-1/2, t/2+1-q-r) < 0$, our quantity of interest tends to zero at a polynomial rate as $n \to \infty$, completing the proof.
\end{proof}

\section{Conjectured Looseness of Bound}
\label{app:conjectured_loose}
In \eqref{eqn:cauchyschwartzlooseness} in the proof of Lemma \ref{lemma:zdeltaybound}, we upper bound $\zj \tran \Ainvdel \dely$ using the Cauchy-Schwarz inequality as
\begin{align}
    | \zj \tran \Ainvdel \dely | &\leq \|\zj \|_2 \| \Ainvdel \dely \|_2 \\
   &\leq  \| \Ainvdel \|_{op} \| \zj \|_2 \| \dely \|_2\\
   &\leq \delmu \| \zj \|_2 \| \dely \|_2.
\end{align}
This results in a high-probability bound of the order $\delmu n/\sqrt{k}$. 
Essentially this bound fears that $\Ainvdel$ can, in worst case, align $\zj$ and $\dely$ to be in the same direction. 
However, since there is only a weak dependence between $\Ainvdel$ and $\zj$ and $\dely$ this bound is likely overly cautious. 
We conjecture that this bound is loose by a factor $\sqrt{n}$. Why do we conjecture this? If we ignored the dependency of $\Ainvdel$ on $\zj$ and $\dely$ and blindly applied the Hanson-Wright inequality (with the $\Mbold$ matrix introduced as in Appendix~\ref{sec:introduceMbold} to leverage the fact that $\dely$ is mostly zeros) then we would obtain a high-probability upper bound of the form $ \delmu \sqrt{n/k}$ (ignoring the logarithmic factors).


Assuming this tighter conjectured bound holds and similarly assuming an analogously tighter bound for $\abs{\za \tran \Ainvdel \dely}$ in Appendix  \ref{app:proofzdiffybound} and following through with the rest of our analysis, we obtain the conjectured sufficient conditions for good generalization as in Equation \eqref{eq:conjectured_regimes} from Conjecture \ref{conjecture:regimes} for the regime $q+r > 1$. 

It turns out that whenever the survival/contamination ratio grows at a polynomial rate $n^v$ for $v > 0$ then the survival variation term also shrinks at a polynomial rate $n^{-u}$ for $u>0$. Thus ensuring the survival/contamination ratio is large enough (i.e. the number of classes is not too large relative to the level of favoring of potentially true features) is key to obtaining good generalization.

Although we focus on the regime $q+r>1$ in our work, our proof technique is also applicable to the regime $q+r < 1$, i.e where regression works and by grinding through the math for this setting we should be able to get sufficient conditions for good generalization here as well. 

Finally, we believe that we can adapt our analysis from the Proof of Theorem \ref{theorem:regimes} in Appendix \ref{app:proof} to write a set of sufficient conditions for poor generalization. The primary condition for this would be for the relevant survival/contamination ratio to go to zero. We conjecture that computing conditions on $p,q,r,t$ under which this occurs results in the converse result in the form of sufficient conditions for poor generalization present in Conjecture \ref{conjecture:regimes}. Intuitively, if the survival/contamination ratio goes to zero, then the contamination can with significant probability flip the sign of a comparison involving the score that should be winning --- this parallels the way that the converse is proved in \citet{binary:Muth20} for binary classification.



\section{Scaling parameters with the number of positive training examples per class}
\label{app:scaleexamples}
From our results in Figure~\ref{fig:bilevel_regimes}
we observed that as the number of classes $k$ increases (i.e.~larger values of $t$), the region  where multiclass classification generalizes well  shrinks. A justification for this is when the number of classes $k$ increases while the number of training points $n$ stays constant, we have fewer positive training examples from each class, and this makes the task harder. 

To see if the reduced number of positive training examples is indeed the dominant effect, we can explore what happens if we increase the number of total training points to compensate for this effect? Instead of scaling all parameters with the total number of training points, what happens if we scale them with the number of positive training examples per class?


Let $N = n^b$ be the new number of training points for some $b>1$, while rest of the parameters in the bi-level model scale as before. We have,
\eqn{
    && N &= n^b \\
    && d &= n^p = N^{p/b} \\
    && s &= n^r = N^{r/b} \\
    && a &= n^{-q} = N^{-q/b} \\
    && k &= c_kn^{-t} =c_k N^{-t/b}.
}
We can interpret this as our standard setup, albeit parameterized by $N$, rather than $n$. 
To keep the model well-defined we require the following:
\begin{itemize}
    \item $b < p$, to ensure we are still overparameterized;
    \item $r < b$, to ensure the number of favored features does not exceed the total number of training points;
    \item $q < p - r$ to ensure we are actually favoring the first $s$ features.
\end{itemize}

For this setup, Theorem~\ref{theorem:regimes} states that the probability of misclassification tends to zero if
\eqn{
    \frac{t}{b} &< \min\left( \frac{r}{b}, 1-\frac{r}{b}, \frac{p}{b}+1-2\left(\frac{q}{r}+\frac{r}{b}\right), \frac{p}{b}-2, \frac{2q}{b}+\frac{r}{b}-2 \right) \\
    \frac{q}{b} + \frac{r}{b} &> 1.
}
Rearranging, we obtain the condition
\eqn{
    && t &< \min(r, b-r, p+b-2(q+r),p-2b, 2q+r-2b) \\
    && q + r &> b.
}

To hold the number of training samples per class fixed we 
can set $b = t+1$, so the ratio $N/k$ becomes constant. Doing so, we obtain the following sufficient conditions for good generalization:
\eqn{
    && t  &< \min\left(r, \frac{p-2}{3}, \frac{2q+r-2}{3} \right)\\
    && 0 &< 1-r \\
    && 0 &< p+1-2(q+r)\\
    && t &< q + r - 1.
}
Additionally for the model to be well defined we require $t < p-1$. (The other conditions $r < t+1$ and $q < p-r$ for model to be well defined are automatically satisfied if the above conditions for good generalization are satisfied). 

If we assume Conjecture \ref{conjecture:regimes} then a set of sufficient conditions for good generalization is:
\eqn{
    && 0 &\le p+1-2(q+r) \\
    && r &< 1 \\
    && t &< r \\
    && t &< p - 1 .
}
The first two conditions must be satisfied for binary classification problem to generalize well and thus for multi-class classification to succeed in this setting we need to ensure binary classification succeeds.  
The condition $t<r$ arises because if we don't favor the features used in the comparison while assigning class labels then we have no hope of succeeding in overparameterized settings.
The condition $t < p-1$ ensures that the problem is overparameterized. 
If any of these conditions is not met then the probability of classification error will tend to 1. 

Figure~\ref{fig:scale_positive_regimes} visualizes the conjectured regimes for this alternative setup where the number of positive training examples per class is held fixed as we vary the number of classes for fixed values of $p$ and $q$. In the white region, our model is not well defined. Note that in subfigure (a), the limiting factor to the model being well defined is the inequality $r<1+t$ (we must have more training examples than favored features) while in subfigure (b), the limiting factor for the model being well defined in the right-hand boundary is the inequality $r < p-q$ (we must put a larger weight on the features we favor as compared to those that we do not favor). In subfigure(b) we see that the top boundary for the model being well defined is the inequality $t < p-1$ which is necessary for the problem to be overparameterized and support the existence of interpolating solutions. Further, the right-hand bound for good generalization in subfigure (a) corresponds to the inequality $r < 1$ while in subfigure (b) it corresponds to $p + 1 > 2(q+r)$. The left-hand boundary for good generalization in both figures is the inequality $t < r$, which reflects the fact that for MNI-based classification to succeed, all the features defining the classes must be favored.

\begin{figure*}[h!]
  \centering
  \includegraphics[width=0.80\columnwidth]{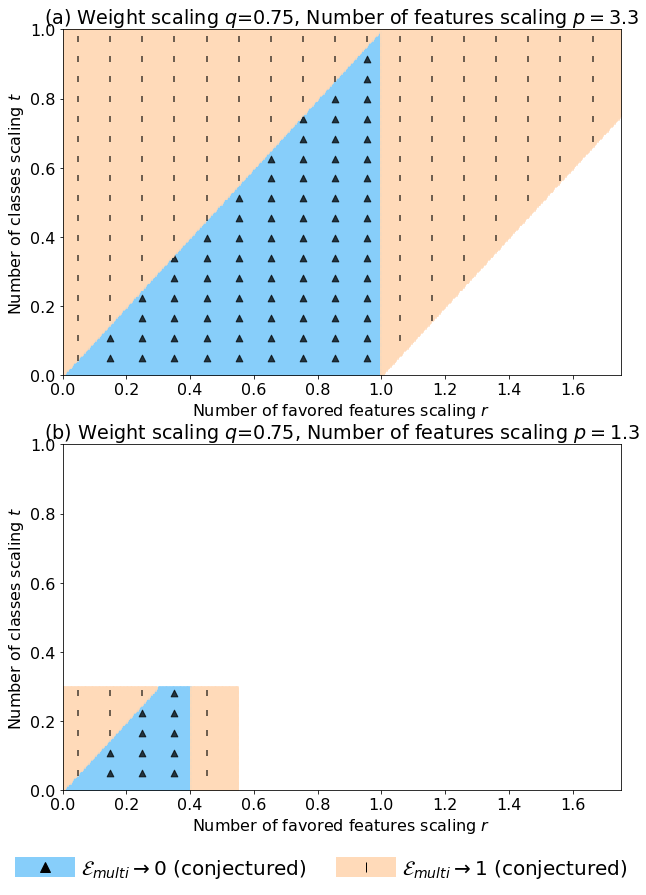}
  \caption{Visualization of the conjectured bi-level classification regimes when we scale everything with the number of positive training examples per class, instead of with the total number of training points.}
  \label{fig:scale_positive_regimes}
\end{figure*}

It is interesting to note that when we add more training points so as to increase the number of positive examples, we are effectively decreasing the level of overparameterization in the problem. We know from \citet{reg:nakkiran} that adding training data in a way that reduces overparameterization can sometimes make performance worse instead of better. However, in the deeply overparameterized setting of the bi-level models explored here, this effect is counteracted by the survival benefits of having more positive examples --- in effect, reducing the overall level of overparameterization reduces the shrinkage induced by the regularizing effect of overparameterization. This reduction in shrinkage compensates for the $\frac{1}{k}$ hit to survival induced by the larger number of classes.

\section{Additional related work} \label{app:comparisonToWang}
\subsection{Comparisons to \citet{multi_class_theory:Wang21}}
While our work has many similarities with \citet{multi_class_theory:Wang21} in terms of model and problem setting, there are some key differences. 

The first key difference is in how the training data is generated. 
In this paper, we assume the true label of a point is generated based on which of the first $k$ dimensions is the largest, while \citet{multi_class_theory:Wang21} consider a Gaussian mixture model and a multinomial logistic model where the true labels have some randomness even conditioned on the first $k$ dimensions. Like us, however, they also consider the case of orthogonal classes.

Second, we consider the asymptotic case where the number of classes, $k$, scales with the number of training points as $k = cn^t$ for some positive integer $c$ and non-negative real $t$.
The work in \citet{multi_class_theory:Wang21} considers only the finite classes setting i.e. $t=0$ in our model. 
The error analysis technique employed by us here in the form of a typicality-style argument featuring the feature margin (difference between the largest and second largest feature) is much tighter than the method employed in \citet{multi_class_theory:Wang21} and allows us to compute regimes where multiclass classification succeeds even when $t > 0$. 
A straight substitution into the analysis from \citet{multi_class_theory:Wang21} does not work since that analysis is too loose for this setting. 
Furthermore, in our expressions for survival and contamination (Lemmas \ref{lemma:featuredifferencebound} and \ref{lemma:contaminationbound}) we compute an exact dependence on $k$.\footnote{In particular, our analysis here brings out the fact that multiclass training data becomes less informative per training sample as the number of classes increases. This results in a $\frac{1}{k}$ scaling term in survival and a $\frac{1}{\sqrt{k}}$ scaling in contamination. It is this effect that makes it possible in some regimes for the contamination from other favored features to dominate --- whereas in the case of binary classification, it is always the contamination from unfavored features that dominates.}  
The expressions from \citet{multi_class_theory:Wang21} don't compute this exact dependence because it is not required for their purposes. 
By using our novel analysis technique we are able to elucidate the challenges posed by fewer positive training examples per class in the multiclass setting and provide sufficient conditions for generalization when number of classes scales with the number of training points.

An equivalence between the solution obtained by minimum-$\ell_2$-norm interpolation on the adjusted zero-mean one-hot encoded labels that we perform in our approach \eqref{eq:interpolateadjustedonehot} and the solution obtained by other training methods has been established in \citep{multi_class_theory:Wang21}. 
In particular the minimum-norm interpolating solution is typically identical to the solution obtained via one-vs-all SVM and multi-class SVM (and thus gradient descent on cross-entropy loss due to its implicit bias \citep{implicit_bias:ji2019, implicit_bias:srebro}, under sufficient overparameterization. From \citet{multi_class_theory:Wang21}, the sufficient conditions for the equivalence of solutions are,
\begin{align}
    \frac{\sum_{j=1}^{n} \lambda_j}{\lambda_1} &> C_1 k^2 n \ln(kn), \\
    \frac{(\sum_{j=1}^{n} \lambda_j)^2}{\sum_{j=1}^{n}\lambda_j^2} &> C_2(\ln(kn) + n),
\end{align}
where $C_1, C_2$ are positive constants. 
Under our bi-level model (Definition~\ref{def:bilevel}) these conditions translate to:
\begin{align}
    q+r &> 2t +1,\\
    2p - \max(2p-2q-r,p) &> 1,
\end{align}
which can be rearranged to give us the condition in \eqref{eqn:mnisvmequalregime}. Figure \ref{fig:svm_equivalence} from Section \ref{sec:conclusion} illustrates this regime, as well as how it relates to our results.



\subsection{Comparisons to \citet{binary:Muth20}}
The work in \citet{binary:Muth20} provides an analysis of the binary classification and regression problem with Gaussian features in the overparameterized regime and shows that binary classification is easier than regression by proving the existence of a regime in a bi-level model where binary classification generalizes well but regression does not.  
In this work we use a similar bi-level model and the signal-processing inspired concepts of survival and contamination in our proofs but the nature of the training data in the multiclass classification problem is the key challenge and complicates our analysis considerably. 
Since the true class labels are generated by comparing $k$ features, we no longer have independence of the class label $y$ with any of these features. This is relevant when we compute bounds on the the term $z_\a \tran \Ainv (y_\a - y_\b)$ an integral part of our survival quantity (Equations~\eqref{eqn:zdiffyboundupper},\eqref{eqn:zdiffyboundlower} from Lemma \ref{lemma:zdiffybound}), since the Hanson-Wright inequality is no longer applicable directly as was the case for the binary classification problem in prior work (Appendix D.3.1 of \citet{binary:Muth20}). 
Working through these challenges, we prove that the multiclass problem is fundamentally different from (and harder than) than the binary problem due to the effect of fewer informative samples (positive training examples) per class. 
In particular we show via dominant terms from Lemmas \ref{lemma:differencesurvival} and \ref{lemma:contaminationbound} that, as we increase the number of classes $k$, survival shrinks as $1/k$ while contamination shrinks only as $1/\sqrt{k}$. 
Thus the survival/contamination ratio which plays a key role in the expression for classification error decreases as $1/\sqrt{k}$ in the multiclass setting as we increase $k$. 
Thus, for good generalization we need to ensure number of classes is not too large in addition to having sufficient favoring of true features.




\end{document}